\newtheorem{theorem}{Theorem}
\newtheorem{lem}{Theorem}
\newtheorem{assump}{Theorem}
\newtheorem{lemma}[lem]{Lemma}
\newtheorem*{theorem*}{Theorem}
\newtheorem*{lemma*}{Lemma}
\newtheorem{defn}{Definition}
\newtheorem{assumption}[assump]{Assumption}
\newcommand{\Kh}{\hat K}
\newcommand{\calN}{\mathcal{N}}
\newcommand\independent{\protect\mathpalette{\protect\independenT}{\perp}}
\def\independenT#1#2{\mathrel{\rlap{$#1#2$}\mkern2mu{#1#2}}}
\DeclareMathOperator*{\Tr}{\mathbf{Tr}}
\newcommand{\Trr}[1]{\mathbf{Tr}\left(#1\right)}
\newcommand{\as}{\text{ a.s.}}
\newcommand{\Ah}{\hat A}
\newcommand{\Bh}{\hat B}
\newcounter{assumption}
\renewcommand{\theassumption}{A\arabic{assumption}}
\newcommand*{\rom}[1]{\expandafter\@slowromancap\romannumeral #1@}
\newcommand{\norm}[1]{\left\Vert#1\right\Vert}
\newcommand{\maxEig}[1]{\lambda_{\max}\left(#1\right)}
\newcommand{\calF}{\mathcal{F}}
\newcommand{\calS}{\mathcal{S}}
\newcommand{\N}{\natural}
\newcommand{\Gamlow}{\underline{\Gamma}}
\newcommand{\Gambar}{\bar{\Gamma}}
\newcommand{\lnorm}[1]{\norm{#1}}
\newcommand{\mathset}[1]{\left\{#1\right\}}
\newcommand{\inputdim}{n}
\renewcommand{\natural}{\mathbb N}                   
\newcommand{\real}{\mathbb R}                        
\newcommand{\R}{{\mathbb{R}}}                        
\renewcommand{\P}{{\mathbb P}}                         
\newcommand{\EE}[1]{{\mathbb E}\left(#1\right)}      
\newcommand{\E}{{\mathbb E}}                         
\newcommand{\Var}[1]{{\mathrm{Var}}\left(#1\right)}  
\newcommand{\argmin}{\mathop{\rm argmin}}
\newcommand{\beq}{\begin{equation}}
\newcommand{\eeq}{\end{equation}}
\newcommand{\beqa}{\begin{eqnarray}}
\newcommand{\eeqa}{\end{eqnarray}}
\newcommand{\beqan}{\begin{eqnarray*}}
\newcommand{\eeqan}{\end{eqnarray*}}
\newcommand{\ben}{\begin{eqnarray*}}
\newcommand{\een}{\end{eqnarray*}}
\newcommand{\bea}{\begin{align*}}
\newcommand{\eea}{\end{align*}}
\newcommand{\sfrac}{\nicefrac}
\renewcommand{\phi}{\varphi}
\newcommand{\PP}[1]{\mathbb{P}\left( #1\right)}
\newcommand{\iidsim}{\overset{i.i.d.}{\sim}}
\newcommand{\bigOmega}[1]{\Omega\left(#1\right)}
\newcommand{\bigO}[1]{O\left(#1\right)}
\newcommand{\bigOp}[1]{O_p\left(#1\right)}
\newcommand{\logOO}{\tilde O}
\newcommand{\logg}[1]{\log\left(#1\right)}
 \newcommand{\keyboundappliedp}[1]{
 R \sqrt{2 \log\left( \frac{\det (V_{#1})^{\sfrac12}\det(\lambda I)^{\sfrac{\kern-2pt-\kern-2pt 1}{2}}}{\delta} \right)}
}
\newcommand{\ebrace}[1]{\left\lbrace #1 \right\rbrace}
\newcommand{\expebrace}[1]{\exp \ebrace{#1}}
\DeclarePairedDelimiter{\floor}{\lfloor}{\rfloor}
\crefname{thm}{Theorem}{Theorems}
\crefname{lemma}{Lemma}{Lemmas}
\crefname{Lemma}{Lemma}{Lemmas}
\crefname{prop}{Proposition}{Propositions}
\crefname{subsubsubappendix}{Appendix}{Appendices}
\title{
Rate-matching the regret lower-bound in the\\linear quadratic regulator with unknown dynamics
}
\author{
Feicheng Wang and Lucas Janson
}
\date{Department of Statistics, Harvard University}
\begin{document}

\maketitle

\begin{abstract}
The theory of reinforcement learning currently suffers from a mismatch between its empirical performance and the theoretical characterization of its performance, with consequences for, e.g., the understanding of sample efficiency, safety, and robustness. The linear quadratic regulator with unknown dynamics is a fundamental reinforcement learning setting with significant structure in its dynamics and cost function, yet even in this setting there is a gap 
between the best known regret lower-bound of $\Omega_p(\sqrt{T})$ and the best known upper-bound of $O_p(\sqrt{T}\,\text{polylog}(T))$. The contribution of this paper is to close that gap by establishing a novel regret upper-bound of $O_p(\sqrt{T})$. 
Our proof is constructive in that it analyzes the regret of a concrete algorithm, and simultaneously establishes an estimation error bound on the dynamics of $O_p(T^{-1/4})$ which is also the first to match the rate of a known lower-bound. The two keys to our improved proof technique are (1) a more precise upper- and lower-bound on the system Gram matrix and (2) a self-bounding argument for the expected estimation error of the optimal controller.
\end{abstract}
\begin{keywords}%
~reinforcement learning, linear quadratic regulator, rate-optimal, system identification%
\end{keywords}

\section{Introduction}
We have witnessed great progress in reinforcement learning (RL) beating human professionals in various challenging games like GO \citep{silver2016mastering}, Starcraft \rom{2} \citep{vinyals2019grandmaster} and Dota 2 \citep{berner2019dota}. Successes in these highly complex simulation environments have led to an increasing drive to apply RL in real world data-driven systems such as self driving cars \citep{kiran2021deep} and automatic robots \citep{levine2016end}. Yet real-world deployment comes with increased risks and costs, and as such has been hindered by the field's limited understanding of the gap between theoretical bounds and the empirical performance of RL.
One line of attack for this problem is to deepen our understanding of relatively simple yet fundamental systems such as the linear quadratic regulator (LQR) with unknown dynamics.

\subsection{Problem statement}
In the LQR problem, the system obeys the following dynamics starting from $t=0$:
\begin{equation}
\label{eq:system eq}
    x_{t+1} = Ax_t + Bu_t + \varepsilon_t,
\end{equation}
where $x_t \in \real^{n}$ represents the state of the system at time $t$ and starts at some initial state $x_0$, $u_t \in \real^d$ represents the action or control applied at time $t$, $\varepsilon_t \iidsim \calN(0, \sigma_\varepsilon ^2 I_n)$
is the system noise, and $A\in\real^{n\times n}$ and $B\in\real^{n\times d}$ are  matrices determining the system's linear dynamics. 
The goal is to find an algorithm $U$ that, at each time $t$, outputs a control $u_t = U(H_t)$ that is computed using the entire thus-far-observed history of the system $H_t = \{x_t, u_{t-1}, x_{t-1},\dots,u_1,x_1,u_0\}$ to maximize the system's function while minimizing control effort.
The cost of the LQR problem up to a given finite time $T$ is quadratic:
\begin{equation}
\label{eq:cost definition}
    \mathcal{J}(U,T) = 
    \sum_{t=1}^{T} \left(x_t^\top Qx_t + u_t^\top Ru_t\right)
\end{equation}
for some known positive definite matrices $Q\in\mathbb{R}^{n\times n}$ and $R\in\mathbb{R}^{d\times d}$. When the system dynamics $A$ and $B$ are also known and $T\rightarrow\infty$, the cost-minimizing algorithm is known: $u_t^* = U^*(H_t) = Kx_t$, where $K\in\real^{d\times n}$ is the efficiently-computable solution to a system of equations that only depend on $A$, $B$, $Q$, and $R$. Like the Gaussian linear model in supervised learning, the aforementioned linear-quadratic problem is foundational to control theory because it is conceptually simple yet it provides a remarkably good description for some real-world systems. In fact, many systems are close to linear over their normal range of operation, and linearity is an important factor in system design \citep{recht2019tour}.

In this paper we consider the case when the system dynamics $A$ and $B$ are unknown.
Intuitively, one might hope that after enough time observing a system controlled by almost any algorithm, one should be able to estimate $A$ and $B$ (and hence $K$) fairly well and thus be able to apply an algorithm quite close to $U^*$. Indeed the key challenge in LQR with unknown dynamics, as in any reinforcement learning problem, is to trade off \emph{exploration} (actions that help estimate $A$ and $B$) with \emph{exploitation} (actions that minimize cost). We will quantify the cost of an algorithm by its \emph{regret}, which is the difference in cost achieved by the algorithm and that achieved by the oracle optimal controller $U^*$:
\begin{equation*}
    \mathcal{R}(U,T) = \mathcal{J}(U,T)-\mathcal{J}(U^*,T).
\end{equation*}

The best known upper-bound for the regret of LQR with unknown dynamics is $O_p(\sqrt{T}\,\text{polylog}(T))$, which contains a polylogarithmic factor of $T$ that is not present in the best known lower-bound of $\Omega_p(\sqrt{T})$. This paper closes that rate gap by establishing a novel regret upper-bound of $O_p(\sqrt{T})$, where the improvement comes from a more careful bound of the system Gram matrix combined with a self-bounding argument for the expected estimation error. As part of our proof, we show that the algorithm that achieves our optimal rate of regret also produces data that can be used for system identification (estimation of $A$ and $B$) at a rate of $\norm{\Ah - A}_2 = \norm{\Bh - B}_2 = O_p(T^{-1/4})$, which is also tighter than the best known bounds of $O_p(T^{-1/4}\,\text{polylog}(T))$ for data from an algorithm achieving $O_p(\sqrt{T}\,\text{polylog}(T))$ regret, where the tildes hide polylogarthmic terms in $T$.

\subsection{Related works}
Many works have studied optimal rates of regret in RL. In bandits, matching upper- and lower-bounds have been found as $\Theta_p(\logg{T})$ for the distribution-dependent regret \citep{lai1985asymptotically,auer2002finite,magureanu2014lipschitz,agrawal2013further,komiyama2015regret,garivier2018kl} and $\Theta_p(\sqrt{T})$ for the distribution-free regret \citep{agrawal2013further,osband2016lower,garivier2018kl,li2019nearly,hajiesmaili2020adversarial}.

For Markov decision processes (MDPs), most work has considered finite state and action spaces. In this setting, a matching upper- and lower-bound of $\Theta_p(\logg{T})$ is known for the distribution-dependent regret \citep{burnetas1997optimal,tewari2007optimistic,ok2018exploration,tirinzoni2021fully,xu2021fine}, while the best known upper-bound of $O_p(\sqrt{T}\,\text{polylog}(T))$ for the distribution-free regret \citet{jaksch2010near,azar2017minimax,agrawal2017posterior,simchowitz2019non,xiong2021randomized} has a polylogarithmic gap with the best-known lower-bound of $\Omega_p(\sqrt{T})$ \citep{jaksch2010near,osband2016lower,azar2017minimax}. 


The LQR system is an MDP with \emph{continuous} state and action spaces, and has received increasing interest recently. For the LQR system with unknown dynamics,
\citet{simchowitz2020naive} proved a $\Omega_p(\sqrt{T})$ lower-bound for the regret along with an upper-bound of $O_p(\sqrt{T\log(\frac{1}{\delta}}))$ with probability $1-\delta$ under the condition $\delta<1/T$, so that the upper-bound contains an implicit additional $\log^{1/2}(T)$ term. Other $O_p(\sqrt{T}\,\text{polylog}(T))$ regret upper-bounds for LQR with uknown dynamics have been established elsewhere \citep{faradonbeh2018input,faradonbeh2018optimality,mania2019certainty,abbasi2011regret,ibrahimi2012efficient,faradonbeh2017finite,cohen2019learning,ouyang2017learning,faradonbeh2018optimality,abeille2018improved,wang2020exact}, but to the best of our knowledge, no existing work has matched the $\Omega_p(\sqrt{T})$ lower-bound until the present paper.
Our proof borrows many insightful results and ideas from a number of these prior works, especially \citet{simchowitz2018learning,fazel2018global,simchowitz2020naive,wang2020exact}.

\subsection{Algorithm and assumptions}

Throughout the paper, we make only one assumption on the true system parameters:
\begin{assumption}[Stability]
\label{asm:InitialStableCondition}
    Assume the system is \emph{stabilizable}, i.e., there exists $K_0$ such that the spectral radius (maximum absolute eigenvalue) of $A+BK_0$ is strictly less than 1.
\end{assumption}
\noindent Under Assumption \ref{asm:InitialStableCondition}, it is well known that 
there is a unique optimal controller $u_t = K x_t$ \citep{arnold1984generalized} which can be computed from $A$ and $B$, where 
\begin{equation}
\label{eq:ControllerK}
K = - (R + B^\top P B)^{-1}B^\top P A
\end{equation}
and $P$ is the unique positive definite solution to the discrete algebraic Riccati equation (DARE): 
\begin{align}
\label{eq:riccati}
  P = A^\top P A - A^\top P B (R + B^\top P B)^{-1} B^\top P A + Q.
\end{align}

In this paper we will consider the same algorithm as in \citet{wang2020exact}, reproduced here as \cref{alg:myAlg}, which is a \emph{noisy certainty equivalent control} algorithm. In particular, at every round $t$, we generate an estimate $\Kh_t$ for $K$, and then apply control $u_t = \Kh_t x_t + \eta_t$ as a substitute of the optimal unknown control $u_t = Kx_t$, where $\eta_t \sim \calN(0, t^{-1/2}I_d)$ is a noise term whose variance shrinks at a carefully chosen rate in $t$ so as to rate-optimally trade off exploration and exploitation. Note that \cref{alg:myAlg} is step-wise and online, i.e., it does not rely on independent restarts or episodes of any kind and does not depend on the time horizon $T$. The two things it does rely on, which are standard in the literature (see, e.g., \citet{dean2018regret}), are the knowledge of a stabilizing controller $K_0$ and an upper-bound $C_K$ on the spectral norm of the optimal controller $K$; $C_x$ and $\sigma_\eta$ are also inputs but can take any positive numbers. 


    \begin{algorithm}[ht]
    \caption{Stepwise Noisy Certainty Equivalent Control}
    \begin{algorithmic}[1]
      \REQUIRE{Initial state $x_0$, stabilizing control matrix $K_0$, 
        scalars $C_{x} > 0$, $C_{K} > \norm{K}$, $\sigma_\eta > 0$.
        }
        \STATE Let $u_0 = K_0x_0 + \eta_0$ and $u_1 = K_0x_1 + \eta_1$, with $\eta_0,\eta_1\stackrel{iid}{\sim}\calN(0, \sigma_\eta^2 I_d)$.
        \FOR{$t = 2,3,\dots$}
            \STATE Compute
            \begin{equation}
                \label{eq: AhBh estimator}
                (\Ah_{t-1}, \Bh_{t-1}) \in \argmin_{(A', B')} \sum_{k=0}^{t-2} \lnorm{x_{k+1} - A' x_k - B' u_k}^2
            \end{equation}
            and if stabilizable, plug them into the DARE (\cref{eq:ControllerK,eq:riccati}) to compute $\Kh_t$, otherwise set $\Kh_t=K_0$.
            \label{line:ols}
            \STATE If 
            $\norm{x_{t}} \gtrsim C_x\log(t)$ or $\norm{\Kh_t} \gtrsim C_K$, reset $\Kh_t = K_0$.\label{line:check}
            \STATE Let
            \begin{equation}\label{eq:Myinput}
            u_t = \Kh_tx_t + \eta_t,\hspace{1cm} \eta_t \stackrel{iid}{\sim} \calN(0, \sigma_\eta^2 t^{-1/2}I_\inputdim)
            \end{equation}\label{line:controller}
        \ENDFOR
    \end{algorithmic}
    \label{alg:myAlg}
    \end{algorithm}

\subsection{Notation}
Throughout our proofs, we use $X \lesssim Y$ (resp. $X\gtrsim Y$) as shorthand for the inequality $X \le CY$ (resp. $X\ge CY$) for some constant $C$. $X \eqsim Y$ means both $X \lesssim Y$ and $X\gtrsim Y$. We will almost always establish such relations between quantities that (at least may) depend on $T$ and show that they hold with at least some stated probability $1-\delta$; in such cases, we will always make all dependence on both $T$ and $\delta$ explicit, i.e., the hidden constant(s) $C$ will never depend on $T$ or $\delta$, though they may depend on any other parameters of the system or algorithm, including $A$, $B$, $Q$, $R$, $\sigma_\epsilon^2$, $\sigma_\eta^2$, $K_0$, $C_x$, $C_K$.

\subsection{Outline}
In the remainder of this paper, we will present an outline of the proof of our improved regret upper-bound in two parts. First, in Section~\ref{sec: proof_est}, we will establish a novel $\bigOp{T^{-1/4}}$ bound on the estimation error of $\Ah_t$, $\Bh_t$, and $\Kh_t$ from \cref{alg:myAlg}. Then, in Section~\ref{sec: proof_reg}, we will leverage this tighter estimation error bound to establish our $\bigOp{\sqrt{T}}$ bound on the regret of \cref{alg:myAlg}.

\section{Bounding the estimation error by $\bigOp{T^{-1/4}}$}
\label{sec: proof_est}

	    

Our bound on the estimation error starts with a key result from
\cite{simchowitz2018learning}, which relates the estimation error to the system Gram matrix via a lower- and upper-bound for it. 
The rest of the proof is primarily comprised of two parts. In the first part, we prove a more precise upper- and lower-bound on the system Gram matrix so that the two bounds are almost of the same order, which is crucial in removing the $\text{polylog}(T)$ in the estimation error bound. In the second part, we take the estimation error bound from plugging in the Gram matrix bounds from the first part and transform it into a self-bounding argument for the expected estimation error of the estimated dynamics that yields the $O_p(T^{-1/4})$ final rate for the estimation error.

To streamline notation, define $z_t = \begin{bmatrix}
x_t \\
u_t
\end{bmatrix}$ and $\Theta = [A, B]$, and correspondingly define $\hat{\Theta}_t = [\Ah_t,\Bh_t]$.
Then by Theorem 2.4 of \citet{simchowitz2018learning}, given a fixed $\delta \in (0,1)$, $T \in \N$ and $0 \preceq \Gamlow \preceq \Gambar \in \R^{(n+d) \times (n+d)}$ such that $\PP{\sum_{t=0}^{T-1} z_tz_t^\top \succeq T\Gamlow} \ge 1- \delta$ and $\P[\sum_{t = 0}^{T-1}  z_t z_t^\top \preceq T\Gambar] \ge 1-\delta$, when
\begin{equation}
\label{eq: T condition simchowitz}
    T \gtrsim  \log \left(\frac{1}{\delta}\right) + 1 +  \log \det (\Gambar \Gamlow^{-1}),
\end{equation}
$\hat{\Theta}_T$
satisfies:
\begin{align}
\label{eq: simchowitz conclusion}
\P\left[\lnorm{\hat\Theta_T-\Theta} \gtrsim \sqrt{\frac{1 + \log \det \Gambar \Gamlow^{-1} + \log\left(\frac{1}{\delta}\right)}{T\lambda_{\min}(\Gamlow)}} \right] \le  \delta.
\end{align}

Similar upper-bounds to those that already exist in the literature (which contain extra polylog$(T)$ terms compared to the best know lower-bound) can be achieved by taking $\Gamlow \eqsim T^{-1/2}I_{n+d}$ and $\Gambar \eqsim \log^2(T) I_{n+d}$, and we restate this result here (and prove it in Appendix~\ref{subsection: proof of lemma: delta theta bound}) for completeness.
\begin{lemma}[Estimation error bound with polylog$(T)$ term]
\label{lemma: delta theta bound}
\cref{alg:myAlg} applied to a system described by \cref{eq:system eq} under Assumption \ref{asm:InitialStableCondition} satisfies, when $0 < \delta < 1/2$, for any $T \gtrsim \log(1/\delta)$,
\begin{align}
\label{eq: naive theta bound with log}
\P\left[\lnorm{\hat\Theta_T-\Theta} \gtrsim T^{-1/4}\sqrt{
\left(
\log T + \log\left(\frac{1}{\delta}\right)
\right)} 
\right] \le  \delta.
\end{align}
%
\end{lemma}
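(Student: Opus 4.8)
The plan is to obtain \eqref{eq: naive theta bound with log} by feeding the crude Gram-matrix bounds $\Gamlow \eqsim T^{-1/2}I_{n+d}$ and $\Gambar \eqsim \log^2(T)\,I_{n+d}$ into Theorem~2.4 of \citet{simchowitz2018learning} (reproduced above as \eqref{eq: T condition simchowitz}--\eqref{eq: simchowitz conclusion}). So the work splits into: (i) showing the two Gram bounds hold with probability at least $1-\delta/3$ each; (ii) checking the applicability condition \eqref{eq: T condition simchowitz}; and (iii) simplifying \eqref{eq: simchowitz conclusion} to the stated form.

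For the upper bound $\sum_{t=0}^{T-1}z_tz_t^\top\preceq T\Gambar$, I would invoke the bounded-state property of \cref{alg:myAlg}: the reset in line~\ref{line:check}, together with the stabilizability of $A+BK_0$ (\cref{asm:InitialStableCondition}) and Gaussian tail bounds on $\varepsilon_t$ and $\eta_t$, forces $\max_{0\le t\le T-1}\norm{x_t}\lesssim\log(T/\delta)$ with probability at least $1-\delta/3$ (a now-standard fact; cf.\ \citet{wang2020exact,dean2018regret}). On that event, since line~\ref{line:check} also caps $\norm{\Kh_t}\le C_K$ and $\norm{\eta_t}$ is small, the control law \eqref{eq:Myinput} gives $\norm{u_t}\lesssim\log(T/\delta)$ as well, so $\norm{z_t}^2\lesssim\log^2(T/\delta)\lesssim\log^2 T$ once $T\gtrsim\log(1/\delta)$, and summing $z_tz_t^\top\preceq\norm{z_t}^2 I_{n+d}$ yields $\sum_{t=0}^{T-1}z_tz_t^\top\preceq c_2\,T\log^2(T)\,I_{n+d}$, i.e.\ the $\Gambar$ bound.

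For the lower bound $\sum_{t=0}^{T-1}z_tz_t^\top\succeq T\Gamlow$, I would exploit the injected exploration noise. Conditioning on the $\sigma$-field generated by everything up to (but not including) $\eta_t$, the draw $\eta_t\sim\calN(0,\sigma_\eta^2 t^{-1/2}I_d)$ is fresh, so the $d$ control directions of the Gram matrix accumulate predictable mass of order $\sigma_\eta^2\sum_{t<T}t^{-1/2}\eqsim\sqrt T$, while the $n$ state directions accumulate order $\sigma_\varepsilon^2 T$ from the i.i.d.\ system noise $\varepsilon_t$; a matrix Freedman-type inequality then controls the fluctuations of $\sum z_tz_t^\top$ about these predictable quadratic variations, including every state/control cross block, by a quantity of order $T^{1/4}$ up to polylogarithmic factors, which is lower order than $\sqrt T$. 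Hence $\sum_{t=0}^{T-1}z_tz_t^\top\succeq c_1\sqrt T\,I_{n+d}=T\Gamlow$ with probability at least $1-\delta/3$. I expect this to be the main obstacle: because $u_t$ is (approximately) a \emph{time-varying} linear feedback of $x_t$, the state and control blocks of $z_t$ are strongly correlated, so one cannot simply add their directional contributions; the argument genuinely relies on the conditional independence of $\eta_t$ from the contemporaneous history and on the matrix-martingale control of the cross terms, and it is precisely here that the $\sum_{t<T}t^{-1/2}\eqsim\sqrt T$ rate of the exploration schedule pins down $\lambda_{\min}(\Gamlow)\eqsim T^{-1/2}$.

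Finally, with these choices $\log\det(\Gambar\Gamlow^{-1})=(n+d)\log\!\bigl(\tfrac{c_2}{c_1}\sqrt T\log^2 T\bigr)\lesssim\log T$, so \eqref{eq: T condition simchowitz} reduces to $T\gtrsim\log(1/\delta)+\log T$, which holds under the stated hypothesis $T\gtrsim\log(1/\delta)$ (enlarging the hidden constant if needed, since $\log T=o(T)$). Applying Theorem~2.4 of \citet{simchowitz2018learning} with confidence parameter $\delta/3$ and the above $\Gamlow,\Gambar$, then plugging into \eqref{eq: simchowitz conclusion}, gives with probability at least $1-\delta/3$
\[
\norm{\hat\Theta_T-\Theta}\lesssim\sqrt{\frac{1+\log\det(\Gambar\Gamlow^{-1})+\log(3/\delta)}{T\,\lambda_{\min}(\Gamlow)}}\lesssim\sqrt{\frac{\log T+\log(1/\delta)}{T\cdot c_1 T^{-1/2}}}\eqsim T^{-1/4}\sqrt{\log T+\log(1/\delta)},
\]
where I used $\log(3/\delta)\le 3\log(1/\delta)$ for $\delta<1/2$. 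A union bound over the three events above (total failure probability at most $\delta$) then delivers \eqref{eq: naive theta bound with log}.
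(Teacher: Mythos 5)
Your top-level plan is exactly the paper's: certify $\Gamlow \eqsim T^{-1/2}I_{n+d}$ and $\Gambar \eqsim \log^2(T)\,I_{n+d}$, feed them into Theorem~2.4 of \citet{simchowitz2018learning}, and simplify. Your treatment of the upper bound differs in mechanism but is sound: you use a high-probability uniform bound $\max_t\norm{x_t}\lesssim\log(T/\delta)$, whereas the paper applies a Markov/trace inequality to $\E\bigl[\sum_t z_tz_t^\top\bigr]$ together with $\E\norm{z_t}^2\lesssim\log^2(t)$ from Eq.~(104) of \citet{wang2020exact}, which puts an extra $1/\delta$ inside $\Gambar$; either way the difference is absorbed by the $\log\det(\Gambar\Gamlow^{-1})$ term. (Minor slip: $T\gtrsim\log(1/\delta)$ gives $\log(1/\delta)\lesssim T$, not $\lesssim\log T$, so $\log^2(T/\delta)\lesssim\log^2 T$ is false when, say, $\delta=e^{-cT}$; this is harmless because the surplus only enters through $\log\log(1/\delta)\lesssim\log(1/\delta)$, but the inequality as written is not justified.)

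The genuine gap is in your lower bound. You propose to prove $\sum_{t<T}z_tz_t^\top\succeq c_1\sqrt{T}\,I_{n+d}$ directly via a matrix Freedman inequality, asserting that the fluctuations of the Gram matrix about its predictable compensator are of order $T^{1/4}\,\mathrm{polylog}$. Applied to the full matrix this is not true: the increments $z_tz_t^\top$ have operator norm of order $\mathrm{polylog}(T)$ and the predictable quadratic variation is of order $T\,\mathrm{polylog}(T)$, so matrix Freedman yields deviations of order $\sqrt{T}\,\mathrm{polylog}(T)$ — the same order as the $\sqrt{T}$ signal you need to detect in the exploration directions, so the bound does not close. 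The $T^{1/4}$ rate you cite only holds for the scalar quadratic forms $w^\top(\cdot)w$ along directions $w$ with $w_x+K^\top w_u\approx 0$, after decomposing relative to $\mathrm{span}\begin{bmatrix}I_n\\K\end{bmatrix}$ and its complement and handling the cross terms and a $T^{-1/2}$-resolution net separately; none of that is in your sketch, and it is precisely the difficulty the small-ball machinery is built to avoid. The paper sidesteps the issue entirely: Theorem~2.4 of \citet{simchowitz2018learning} accepts the $(k,\Gamlow,p)$-BMSB condition \emph{in place of} the high-probability Gram lower bound, and Lemma~15 of \citet{wang2020exact} already establishes that $\{z_t\}$ under \cref{alg:myAlg} satisfies the $\bigl(1,\,\sigma_\eta^2T^{-1/2}\min\bigl(\tfrac12,\tfrac{\sigma_\varepsilon^2}{2\sigma_\varepsilon^2C_K^2+\sigma_\eta^2}\bigr)I_{n+d},\,\tfrac3{10}\bigr)$-BMSB condition; that one-step conditional small-ball estimate is all that is needed. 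You should either invoke that route or substantially expand the martingale argument along the block decomposition above.
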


In order to improve this $\bigOp{T^{-1/4}\log^{1/2}(T)}$ bound to the desired $\bigOp{T^{-1/4}}$, we need tighter lower- and upper-bounds $\Gamlow$ and $\Gambar$ for $\sum_{t=0}^{T-1} z_t z_t^\top$. The following Lemma is one of the key steps in our proof.
\begin{lemma}
\label{lemma: better upper and lower bounds of GT}
\cref{alg:myAlg} applied to a system described by \cref{eq:system eq} under Assumption \ref{asm:InitialStableCondition} satisfies, for any $0 < \delta < 1/2$ and $T \gtrsim \log^3(1/\delta)$, with probability at least $1-\delta$: 
\begin{align}
\label{eq: GT upper and lower bounds}
\begin{split}
    &
    T\Gamlow :=
    \begin{bmatrix}
            I_n \\
            K
        \end{bmatrix} T
        \begin{bmatrix}
            I_n \\
            K
        \end{bmatrix}^\top + 
        \begin{bmatrix}
-K^\top \\
I_d
\end{bmatrix} T^{1/2}
\begin{bmatrix}
-K^\top \\
I_d
\end{bmatrix}^\top \\
& \qquad \precsim 
\sum_{t=0}^{T-1} 
z_t z_t^\top 
\precsim \left(
    \frac1\delta 
    \begin{bmatrix}
I_n \\
K
\end{bmatrix} T
\begin{bmatrix}
I_n \\
K
\end{bmatrix}^\top + 
\begin{bmatrix}
-K^\top \\
I_d
\end{bmatrix} \maxEig{ \sum_{t=0}^{T-1}\Delta_t\Delta_t^\top}
\begin{bmatrix}
-K^\top \\
I_d
\end{bmatrix}^\top 
\right)
:= T\Gambar,
\end{split}
\end{align}
where $\Delta_t := (\Kh_t - K)x_t + \eta_t$.
\end{lemma}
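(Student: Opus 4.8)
The plan is to decompose $z_t = \begin{bmatrix} x_t \\ u_t \end{bmatrix}$ according to the direction picked out by the optimal controller. Since $u_t = \Kh_t x_t + \eta_t = K x_t + \Delta_t$ with $\Delta_t = (\Kh_t - K)x_t + \eta_t$, I would write
\[
z_t = \begin{bmatrix} I_n \\ K \end{bmatrix} x_t + \begin{bmatrix} 0 \\ I_d \end{bmatrix} \Delta_t,
\]
and observe that the second term, after projecting out the column space of $\begin{bmatrix} I_n \\ K\end{bmatrix}$, lives in the range of $\begin{bmatrix} -K^\top \\ I_d\end{bmatrix}$. A clean way to carry this through is to apply the (invertible) change of basis $S = \begin{bmatrix} I_n & -K^\top \\ K & I_d \end{bmatrix}$ so that in the new coordinates $z_t$ has ``$x$-block'' essentially $x_t$ (plus a term in $\Delta_t$) and ``$\Delta$-block'' essentially $\Delta_t$; the cross terms are what require care. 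Then $\sum_t z_t z_t^\top$ is sandwiched between block-diagonal matrices built from $\sum_t x_t x_t^\top$ and $\sum_t \Delta_t \Delta_t^\top$, up to the cross terms, which I would control via $2ab \preceq \epsilon a a^\top + \epsilon^{-1} b b^\top$ with a constant $\epsilon$ (this only costs constant factors, absorbed by $\precsim$).

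For the lower bound I would show $\sum_{t=0}^{T-1} x_t x_t^\top \gtrsim T \cdot I_n$ with probability $1-\delta$ and $\sum_{t=0}^{T-1} \Delta_t \Delta_t^\top \gtrsim T^{1/2} I_d$, i.e. the $\eta_t$ contribution dominates $\Delta_t$ from below. The first is standard: under Assumption~\ref{asm:InitialStableCondition}, on the (high-probability) event that no reset is triggered and $\Kh_t$ stays close to $K$, $x_t$ behaves like a stable linear system driven by noise $\varepsilon_t + B\eta_t$, so $\frac1T\sum x_t x_t^\top$ concentrates around a positive-definite stationary covariance; this is the kind of result available from \citet{simchowitz2018learning,wang2020exact}. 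The second lower bound uses that $\eta_t \sim \calN(0, \sigma_\eta^2 t^{-1/2} I_d)$ is independent of $\Kh_t x_t$ given the history, so $\E[\Delta_t \Delta_t^\top \mid H_t] \succeq \sigma_\eta^2 t^{-1/2} I_d$, and $\sum_{t=1}^{T-1} t^{-1/2} \eqsim T^{1/2}$; a martingale/matrix-concentration argument (e.g. matrix Freedman, which is where the $\log^3(1/\delta)$ in the $T$-condition will come from) upgrades this to a high-probability bound. Combining the two block lower bounds and undoing the change of basis gives the claimed $T\Gamlow$.

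For the upper bound, the $x$-block needs $\sum_{t=0}^{T-1} x_t x_t^\top \precsim \frac1\delta T I_n$: here I only have a Markov-type bound, since $\E \sum x_t x_t^\top \eqsim T I_n$ gives $\sum x_t x_t^\top \precsim \frac1\delta T I_n$ with probability $1-\delta$ by Markov on the trace (this is exactly why the $1/\delta$, rather than a $\log(1/\delta)$, appears in $\Gambar$ — and it is harmless because the estimation-error bound \eqref{eq: simchowitz conclusion} depends on $\Gambar$ only through $\log\det(\Gambar\Gamlow^{-1})$). The $\Delta$-block is simply bounded by $\maxEig{\sum_{t=0}^{T-1} \Delta_t \Delta_t^\top}$ times $I_d$ by definition, requiring no further work at this stage. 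Again I would need the no-reset / $\Kh_t$-bounded event to make $x_t$ well-behaved, and handle the first two rounds ($t=0,1$, which use $K_0$ and fixed-variance noise) separately as lower-order contributions.

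The main obstacle is the lower bound on $\sum_t \Delta_t \Delta_t^\top$ at the sharp rate $T^{1/2}$: one must show the shrinking exploration noise $\eta_t$ accumulates enough excitation in every direction despite being correlated across time through $\Kh_t x_t$, and that the deviation of $\sum_t \eta_t\eta_t^\top$ from its mean $\sigma_\eta^2 (\sum t^{-1/2}) I_d$ is genuinely lower order — this is delicate precisely because the summands have variances decaying like $t^{-1}$, so the partial-sum variance grows only like $\log T$, and a naive union bound would reintroduce a $\log T$ factor. Getting a clean $1-\delta$ statement with only a $\log^3(1/\delta)$ burn-in on $T$, rather than a $\mathrm{polylog}(T)$ penalty, is the crux; I expect to lean on a carefully applied matrix martingale inequality together with the observation that the $T^{1/2}$ main term dominates its own fluctuations once $T \gtrsim \log^3(1/\delta)$.
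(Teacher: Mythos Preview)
Your decomposition and upper-bound plan are essentially the paper's. One simplification the paper exploits: for $\xi_2$ in the range of $\begin{bmatrix}-K^\top\\I_d\end{bmatrix}$, the $x_t$-terms cancel \emph{exactly}, giving $\xi_2^\top G_T\xi_2=\alpha_2^\top\bigl(\sum_t\Delta_t\Delta_t^\top\bigr)\alpha_2$ with no approximation; the $\xi_1$ direction is handled by the crude Markov bound $G_T\precsim\frac{1}{\delta}\,T\,I_{n+d}$ (applied to $z_t$, not just $x_t$), and the two are combined via $(\xi_1+\xi_2)^\top G_T(\xi_1+\xi_2)\le 2\xi_1^\top G_T\xi_1+2\xi_2^\top G_T\xi_2$. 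So no $\epsilon$-balancing is needed for the upper bound.

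The lower bound is where you diverge. You propose to establish $\sum_t\Delta_t\Delta_t^\top\succsim T^{1/2}I_d$ by a matrix-martingale argument on the $\eta_t$'s and then combine with $\sum_t x_tx_t^\top\succsim TI_n$ through the block structure, absorbing the cross term with an $\epsilon$-inequality. You are right that this is the delicate step: the Cauchy--Schwarz bound on the off-diagonal block $\sum_t \tilde x_t\tilde\Delta_t^\top$ is of the same order as the geometric mean of the two diagonal scales, so whether both block lower bounds survive depends on constants you do not control a priori --- in particular on $\lambda_{\max}\bigl(\sum_t\Delta_t\Delta_t^\top\bigr)$, which is not yet known to be $O(T^{1/2})$ at this stage of the argument. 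The paper avoids this entirely: it never lower-bounds $\sum_t\Delta_t\Delta_t^\top$. Instead it invokes the $(1,\,c\,\sigma_\eta^2 T^{-1/2}I_{n+d},\,3/10)$-BMSB condition on the full process $(z_t)$ already established in \citet{wang2020exact}, which via the machinery of \citet{simchowitz2018learning} gives $G_T\succsim T^{1/2}I_{n+d}$ in one stroke. This is added to $G_T\succsim\begin{bmatrix}I\\K\end{bmatrix}T\begin{bmatrix}I\\K\end{bmatrix}^\top$ (from $\sum_t x_tx_t^\top\succsim TI_n$), and since $\begin{bmatrix}-K^\top\\I_d\end{bmatrix}\begin{bmatrix}-K^\top\\I_d\end{bmatrix}^\top\precsim I_{n+d}$ the sum dominates $T\Gamlow$. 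So the obstacle you flag as the crux simply does not arise in the paper's argument; the paper buys its simplicity by reusing an off-the-shelf BMSB bound on $z_t$ rather than building a fresh one for $\Delta_t$.
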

The complete proof of Lemma \ref{lemma: better upper and lower bounds of GT} can be found at \cref{Proof of lemma: better upper and lower bounds of GT}. 
\begin{proof}(sketch)
$G_T := \sum_{t=0}^{T-1} 
z_t z_t^\top$ can be represented as a summation of two parts:
\begin{equation}
\label{eq: end of equation}
G_T = \sum_{t=0}^{T-1} 
z_t z_t^\top = 
\begin{bmatrix}
I \\
K
\end{bmatrix}
\sum_{t=0}^{T-1} 
x_t x_t^\top
\begin{bmatrix}
I \\
K
\end{bmatrix}^\top +
    \sum_{t=0}^{T-1}
    \begin{bmatrix}
    0_n &  x_t \Delta_t^\top\\
    \Delta_tx_t^\top  & \Delta_t\Delta_t^\top + K x_t \Delta_t^\top + \Delta_tx_t^\top K^\top  \\
    \end{bmatrix}.
\end{equation}
We consider the dominating part $\begin{bmatrix}
I \\
K
\end{bmatrix}
\sum_{t=0}^{T-1} 
x_t x_t^\top
\begin{bmatrix}
I \\
K
\end{bmatrix}^\top $
(smallest eigenvalue scales with $T$) and the remainder part 
$\sum_{t=0}^{T-1}
    \begin{bmatrix}
    0_n &  x_t \Delta_t^\top\\
    \Delta_tx_t^\top  & \Delta_t\Delta_t^\top + K x_t \Delta_t^\top + \Delta_tx_t^\top K^\top  \\
    \end{bmatrix}
$ separately. 
We then prove in Lemma \ref{lemma: large eigenvalue part} that with probability at least $1-\delta$:
\begin{equation}
\label{eq: partial bound of GT}
\begin{bmatrix}
I \\
K
\end{bmatrix} T 
\begin{bmatrix}
I \\
K
\end{bmatrix}^\top
\preceq
\begin{bmatrix}
I \\
K
\end{bmatrix}
\sum_{t=0}^{T-1} 
x_t x_t^\top
\begin{bmatrix}
I \\
K
\end{bmatrix}^\top \preceq 
1/\delta
\begin{bmatrix}
I \\
K
\end{bmatrix} T 
\begin{bmatrix}
I \\
K
\end{bmatrix}^\top
\end{equation} 
These bounds reflect the intuition that $x_t$ should converge to a stationary distribution, making each of the summands $x_tx_t^\top$ of constant order.

\paragraph{Lower bound}
\cref{eq: partial bound of GT} provides a partial lower bound for $G_T$: with probability at least $1-\delta$,
\begin{equation}
\label{eq: partial lower bound of GT}
G_T 
\succeq%
\begin{bmatrix}
I \\
K
\end{bmatrix}
\sum_{t=0}^{T-1} 
x_t x_t^\top
\begin{bmatrix}
I \\
K
\end{bmatrix}^\top
\succsim%
\begin{bmatrix}
I \\
K
\end{bmatrix} T 
\begin{bmatrix}
I \\
K
\end{bmatrix}^\top.
\end{equation} 
This part only covers the subspace spanned by $\begin{bmatrix}
I \\
K
\end{bmatrix}$; we still need to consider a general bound for the whole matrix $G_T$. Lemma 34 of \citet{wang2020exact} gives a high probability lower-bound $G_T \succsim T^{1/2}I_{n+d}$.
Combining this and \cref{eq: partial lower bound of GT}, with high probability:
\begin{equation*}
    G_T + G_T \succsim 
        \begin{bmatrix}
            I_n \\
            K
        \end{bmatrix} T
        \begin{bmatrix}
            I_n \\
            K
        \end{bmatrix}^\top + 
        T^{1/2} I_{n+d}
    \succsim 
    \begin{bmatrix}
            I_n \\
            K
        \end{bmatrix} T
        \begin{bmatrix}
            I_n \\
            K
        \end{bmatrix}^\top + 
        \begin{bmatrix}
-K^\top \\
I_d
\end{bmatrix} T^{1/2}
\begin{bmatrix}
-K^\top \\
I_d
\end{bmatrix}^\top.
\end{equation*}
\paragraph{Upper bound}
The argument for our upper-bound divides $\real^{n+d}$ into two orthogonal subspaces spanned by the columns of
$\begin{bmatrix}
I_n \\
K
\end{bmatrix}$ and $\begin{bmatrix}
-K^\top \\
I_d
\end{bmatrix}$, and essentially bounds $\xi^\top G_T\xi$ separately by order $T$ and $\maxEig{ \sum_{t=0}^{T-1}\Delta_t\Delta_t^\top}$ for the two subspaces, respectively. In particular, for any $\xi_1$ in the span of $\begin{bmatrix}
I_n \\
K
\end{bmatrix}$ and $\xi_2$ in the span of $\begin{bmatrix}
-K^\top \\
I_d
\end{bmatrix}$,
\begin{align*}
    (\xi_1 + \xi_2)^\top G_T (\xi_1 + \xi_2)
    &\le 2 \xi_1^\top G_T \xi_1 + 2 \xi_2^\top G_T \xi_2 \\
    &\text{(using \cref{eq: end of equation}, because $\xi_2$ is orthogonal to $\begin{bmatrix}
        I \\
        K
        \end{bmatrix}
        \sum_{t=0}^{T-1} 
        x_t x_t^\top
        \begin{bmatrix}
        I \\
        K
        \end{bmatrix}^\top$)} \\
    & \lesssim 2 \xi_1^\top G_T \xi_1 + 
        2\norm{\xi_2}^2
        \maxEig{ \sum_{t=0}^{T-1}\Delta_t\Delta_t^\top}
    \\
     & \text{(we show in \cref{subsection: Proof of lemma: upper bound of $G_T$} that $ \PP{G_T \precsim \frac{1}{\delta} TI_{n+d} } \ge 1-\delta.$)} \\
    & \lesssim \frac1\delta 
    T \norm{\xi_1}^2 + 
     \norm{\xi_2}^2
    \maxEig{ \sum_{t=0}^{T-1}\Delta_t\Delta_t^\top},
\end{align*}
where the last inequality holds with high probability. This last expression can in turn be bounded by 
\begin{align*}
    (\xi_1 + \xi_2)^\top
    \left(
    \frac1\delta 
    \begin{bmatrix}
I_n \\
K
\end{bmatrix} T
\begin{bmatrix}
I_n \\
K
\end{bmatrix}^\top + 
\begin{bmatrix}
-K^\top \\
I_d
\end{bmatrix} \maxEig{ \sum_{t=0}^{T-1}\Delta_t\Delta_t^\top}
\begin{bmatrix}
-K^\top \\
I_d
\end{bmatrix}^\top
\right)
(\xi_1 + \xi_2),
\end{align*}
establishing the upper-bound from \cref{eq: GT upper and lower bounds}.
\end{proof}

In Lemma \ref{lemma: better upper and lower bounds of GT}, the upper bound $\Gambar$ and lower bound $\Gamlow$ have similar forms. 
Plugging them into \cref{eq: simchowitz conclusion} gives
that when $T \gtrsim \log^3(1/\delta)$, 
\begin{align}
\label{eq: theta bound with lambda max deltat}
\P\left[\lnorm{\hat\Theta_T-\Theta} \gtrsim 
\sqrt{\frac{1 + \log\left(\maxEig{ \sum_{t=0}^{T-1}\Delta_t\Delta_t^\top} T^{-1/2} \right) + \log\left(\frac{1}{\delta}\right)}{T^{1/2}}} \right] \le  \delta.
\end{align}

The following Lemmas \ref{lemma: lambda max sum of delta deltaT bound} and \ref{lemma: dtheta recursive} will connect the key term $\maxEig{ \sum_{t=0}^{T-1}\Delta_t\Delta_t^\top}$ in the estimation error bound of \cref{eq: theta bound with lambda max deltat} with the estimation error itself, setting up the self-bounding argument that is key to our main estimation error bound in \cref{theorem: theta bound without log}.

\begin{lemma}
\label{lemma: lambda max sum of delta deltaT bound}
\cref{alg:myAlg} applied to a system described by \cref{eq:system eq} under Assumption \ref{asm:InitialStableCondition} satisfies, for any $0 < \delta < 1/2$, $T \gtrsim \log^2(1/\delta)$,

\begin{equation*}
    \PP{ \maxEig{\sum_{t=0}^{T-1}\Delta_t\Delta_t^\top } \gtrsim
    1/\delta 
    \left(
    \sum_{t=1}^{T-1}\EE{t^{1/2}\norm{\Kh_t - K}^4 } +\log^2(1/\delta) +  T^{1/2} 
    \right)} \le 2\delta.
\end{equation*}
\end{lemma}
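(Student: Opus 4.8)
The plan is to bound $\maxEig{\sum_{t=0}^{T-1}\Delta_t\Delta_t^\top}$ by splitting $\Delta_t = (\Kh_t-K)x_t + \eta_t$ into its controller-error part and its injected-noise part. Using $(a+b)(a+b)^\top \preceq 2aa^\top + 2bb^\top$ with $a=(\Kh_t-K)x_t$ and $b=\eta_t$,
\[
\sum_{t=0}^{T-1}\Delta_t\Delta_t^\top \preceq 2\sum_{t=0}^{T-1}(\Kh_t-K)x_tx_t^\top(\Kh_t-K)^\top + 2\sum_{t=0}^{T-1}\eta_t\eta_t^\top,
\]
so $\maxEig{\sum_t\Delta_t\Delta_t^\top} \le 2\sum_{t=0}^{T-1}\norm{\Kh_t-K}^2\norm{x_t}^2 + 2\maxEig{\sum_{t=0}^{T-1}\eta_t\eta_t^\top}$, and it suffices to bound each of the two terms on the right by the claimed quantity with high probability.

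For the controller-error term, the key step is a pointwise weighted AM–GM: for $t\ge1$,
\[
\norm{\Kh_t-K}^2\norm{x_t}^2 = \big(t^{1/4}\norm{\Kh_t-K}^2\big)\big(t^{-1/4}\norm{x_t}^2\big) \le \tfrac12\,t^{1/2}\norm{\Kh_t-K}^4 + \tfrac12\,t^{-1/2}\norm{x_t}^4,
\]
with the $t=0$ term $\norm{K_0-K}^2\norm{x_0}^2=O(1)$ handled trivially. Summing gives $\sum_{t=0}^{T-1}\norm{\Kh_t-K}^2\norm{x_t}^2 \lesssim \sum_{t=1}^{T-1}t^{1/2}\norm{\Kh_t-K}^4 + \sum_{t=1}^{T-1}t^{-1/2}\norm{x_t}^4 + 1$. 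The first sum is controlled by Markov's inequality directly in the form needed: $\PP{\sum_{t=1}^{T-1}t^{1/2}\norm{\Kh_t-K}^4 > \tfrac1\delta\sum_{t=1}^{T-1}\EE{t^{1/2}\norm{\Kh_t-K}^4}} \le \delta$. For the second sum, I would invoke the uniform fourth-moment bound $\EE{\norm{x_t}^4}\lesssim 1$ on the state under \cref{alg:myAlg} — a consequence of the reset in Line~\ref{line:check}, which forces the stabilizing $K_0$ whenever the state grows large, and which is available from the state-boundedness estimates of \citet{wang2020exact} — so that $\EE{\sum_{t=1}^{T-1}t^{-1/2}\norm{x_t}^4}\lesssim\sum_{t=1}^{T-1}t^{-1/2}\lesssim T^{1/2}$, and Markov again yields $\sum_{t=1}^{T-1}t^{-1/2}\norm{x_t}^4 \lesssim \tfrac1\delta T^{1/2}$ with probability at least $1-\delta$.

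For the injected-noise term, for $t\ge2$ we have $\EE{\Tr\big(\sum_{t=2}^{T-1}\eta_t\eta_t^\top\big)} = \sigma_\eta^2 d\sum_{t=2}^{T-1}t^{-1/2}\lesssim T^{1/2}$, so by Markov $\maxEig{\sum_{t=2}^{T-1}\eta_t\eta_t^\top}\le\Tr\big(\sum_{t=2}^{T-1}\eta_t\eta_t^\top\big)\lesssim\tfrac1\delta T^{1/2}$ with probability $\ge1-\delta$, while the remaining two summands satisfy $\norm{\eta_0}^2+\norm{\eta_1}^2\lesssim\log(1/\delta)$ with probability $\ge1-\delta$ by a $\chi^2$ tail bound; hence $\maxEig{\sum_{t=0}^{T-1}\eta_t\eta_t^\top}\lesssim\tfrac1\delta(T^{1/2}+\log^2(1/\delta))$. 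Combining the three estimates by a union bound over the $O(1)$ events above — rescaling $\delta$ by a constant, which only changes the hidden constants and the constant inside the $\log^2$ of the threshold — gives, with probability at least $1-2\delta$,
\[
\maxEig{\sum_{t=0}^{T-1}\Delta_t\Delta_t^\top}\lesssim\tfrac1\delta\Big(\sum_{t=1}^{T-1}\EE{t^{1/2}\norm{\Kh_t-K}^4}+T^{1/2}+\log^2(1/\delta)\Big),
\]
as claimed; the hypothesis $T\gtrsim\log^2(1/\delta)$ enters only through the auxiliary state-concentration bounds used above.

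The main obstacle is the second paragraph: one must convert the seemingly worst-case product $\norm{\Kh_t-K}^2\norm{x_t}^2$ into something averaging-friendly. The pointwise split with weights $t^{\pm1/4}$ is exactly what makes the controller factor match the target expected quantity $t^{1/2}\norm{\Kh_t-K}^4$ (so that Markov produces precisely the term appearing in the statement) while leaving a state factor whose expected partial sums are only $O(T^{1/2})$ with \emph{no} spurious $\log T$. This last point is delicate and is where the argument genuinely relies on having an $O(1)$ (rather than merely polylogarithmic) fourth-moment bound on $x_t$; a weaker moment bound would reintroduce a $\mathrm{polylog}(T)$ factor and defeat the purpose of the lemma.
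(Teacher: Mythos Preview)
Your decomposition and the weighted AM--GM split are exactly the paper's route, but there is a genuine gap in the second paragraph: the unconditional bound $\EE{\norm{x_t}^4}\lesssim 1$ that you invoke is not available. The reset in Line~\ref{line:check} fires only when $\norm{x_t}\gtrsim C_x\log(t)$, a threshold that \emph{grows} with $t$, so the best unconditional moment bound one obtains (from \citet{wang2020exact} and the paper's extension of it) is $\E\norm{x_t}^4\lesssim\log^4(t)$. Plugging this into your sum gives $\sum_{t=1}^{T-1}t^{-1/2}\EE{\norm{x_t}^4}\lesssim T^{1/2}\log^4(T)$, reintroducing precisely the $\mathrm{polylog}(T)$ factor you yourself flagged as fatal in your closing paragraph.

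The paper closes this gap not by a better unconditional moment bound but by conditioning: it introduces a high-probability ``stable'' event $E_\delta$ (with $\PP{E_\delta}\ge 1-\delta$) on which $\norm{\Kh_t-K}\le\epsilon_0$ uniformly for all $t\gtrsim\log^2(1/\delta)$, and then proves $\EE{\norm{x_t}^4\,1_{E_\delta}}\lesssim 1$ for such $t$ (because the controller is now genuinely close to the stabilizing optimum, not merely reset to $K_0$ after an excursion). The entire Markov step is applied to $\sum_t\norm{\Delta_t}^2\,1_{E_\delta}$; the early range $t\lesssim\log^2(1/\delta)$, where only the polylogarithmic moment bound holds, is handled separately and contributes exactly the $\log^2(1/\delta)$ term in the statement. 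The indicator is removed at the very end at the cost of one additional $\delta$, which is why the failure probability is $2\delta$. In other words, neither the $\log^2(1/\delta)$ term nor the hypothesis $T\gtrsim\log^2(1/\delta)$ is incidental: they are the price of trading the polylogarithmic unconditional moment for an $O(1)$ moment on the stable event.
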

A complete proof of Lemma \ref{lemma: lambda max sum of delta deltaT bound} can be found at \cref{proof of lemma: lambda max sum of delta deltaT bound}. 
\begin{proof} (sketch)
We first define a ``stable'' event $E_{\delta}$, which holds with probability $1-\delta$, on which for large enough $T$, the estimation errors are uniformly bounded by some small constant. Intuitively, $E_{\delta}$ is the event on which the system remains well-behaved, in the sense that the system is always well controlled after certain time, which makes our analysis much easier. Lemma~\ref{lemma: favorable event} defines $E_\delta$ and proves that it holds with high probability; its proof is deferred to Appendix~\ref{proof of lemma: favorable event}, but it basically follows from a union bound applied to \cref{eq: naive theta bound with log}.
\begin{lemma}
\label{lemma: favorable event}
\cref{alg:myAlg} applied to a system described by \cref{eq:system eq} under Assumption \ref{asm:InitialStableCondition} satisfies, for fixed $\epsilon_0 \lesssim 1$ and any
$\delta > 0$,
\begin{equation}
\label{eq: favorable event}
    E_\delta := \mathset{\lnorm{\hat\Theta_T - \Theta}, \lnorm{\Kh_T - K} \le \epsilon_0
\text{, for all } T \gtrsim  \log^2(1/\delta)}, \PP{E_\delta} \ge 1-\delta.
\end{equation} 
\end{lemma}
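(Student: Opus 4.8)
The plan is to promote the fixed-horizon estimate of \cref{lemma: delta theta bound} to one that holds simultaneously for all sufficiently large $T$, via a union bound over $T$ with a summable sequence of per-time failure probabilities. Fix $\delta \in (0,1/2)$ (the case $\delta \ge 1/2$ is analogous and easier) and set $\delta_T := 6\delta/(\pi^2 T^2)$, so that $\sum_{T\ge 1}\delta_T = \delta$ and each $\delta_T < \delta < 1/2$. Let $T_0 := C_0\max(\log^2(1/\delta),1)$ for a large system-dependent constant $C_0$. Since $\log(1/\delta_T) = \log(1/\delta) + 2\log T + O(1)$ and $\log T = o(T)$, for $T \ge T_0$ the hypotheses of \cref{lemma: delta theta bound} hold at confidence $\delta_T$; applying that lemma and using $\sqrt{\log(1/\delta_T)} \lesssim \sqrt{\log(1/\delta) + \log T}$ gives, for each such $T$,
\[
\P\left[\lnorm{\hat\Theta_T - \Theta} \gtrsim T^{-1/4}\sqrt{\log T + \log(1/\delta)}\right] \le \delta_T .
\]
A union bound over $T \ge T_0$ then produces an event of probability at least $1 - \sum_{T\ge T_0}\delta_T \ge 1-\delta$ on which $\lnorm{\hat\Theta_T - \Theta} \le C\,T^{-1/4}\sqrt{\log T + \log(1/\delta)}$ holds for every $T \ge T_0$, with $C$ a system-dependent constant.

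Next I would check that on this event the right-hand side drops below $\epsilon_0$ once $T \gtrsim \log^2(1/\delta)$: splitting $\sqrt{\log T + \log(1/\delta)} \le \sqrt{\log T} + \sqrt{\log(1/\delta)}$, the term $T^{-1/4}\sqrt{\log T}$ is below any prescribed constant for $T$ past an absolute threshold, while $T^{-1/4}\sqrt{\log(1/\delta)} \le \epsilon_0/(2C)$ as soon as $T \gtrsim \log^2(1/\delta)$, and enlarging $C_0$ absorbs both requirements. For the $\lnorm{\Kh_T - K}$ half of $E_\delta$, I would invoke the standard local perturbation bound for the DARE around $(A,B)$ — which is stabilizable by \cref{asm:InitialStableCondition} — namely that there exist constants $\epsilon_1 > 0$ and $C_{\mathrm{DARE}} < \infty$ such that $\lnorm{\hat\Theta_T - \Theta} \le \epsilon_1$ implies $(\hat A_T,\hat B_T)$ is stabilizable and its Riccati controller $K(\hat A_T,\hat B_T)$ satisfies $\lnorm{K(\hat A_T,\hat B_T) - K} \le C_{\mathrm{DARE}}\lnorm{\hat\Theta_T - \Theta}$ (as used, e.g., in \citet{mania2019certainty}). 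Taking $\epsilon_0$ to be a sufficiently small system-dependent constant — in particular $\epsilon_0 \le \epsilon_1$ and $C_{\mathrm{DARE}}\epsilon_0 < C_K - \lnorm{K}$ — and combining with a uniform bound $\lnorm{x_t} \lesssim \log t$ that holds with probability $\ge 1-\delta$ by an analogous union bound over Gaussian tail estimates for the stabilized state process (and is in any case implicit in the proof of \cref{lemma: delta theta bound}), one sees that none of the resets in \cref{alg:myAlg} fire for $T \ge T_0$ on the good event, so $\Kh_T$ equals the genuine Riccati solution and $\lnorm{\Kh_T - K} \le C_{\mathrm{DARE}}\lnorm{\hat\Theta_T - \Theta} \le \epsilon_0$ there. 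Intersecting the two high-probability events and rescaling $\delta_T$ by a factor of $2$ yields $\P(E_\delta)\ge 1-\delta$.

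The union bound and the elementary ``right-hand side vanishes'' estimate are routine. The one point requiring care is the interaction with the reset step of \cref{alg:myAlg}: one must verify that on the favorable event $\Kh_T$ is not overwritten by $K_0$, which forces a high-probability state-norm bound $\lnorm{x_t}\lesssim\log t$ to be threaded through alongside the estimation-error bound. This is bookkeeping rather than a genuine obstacle, but it is the part of the argument that is least automatic.
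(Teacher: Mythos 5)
Your proposal is correct and follows essentially the same route as the paper: a union bound over $T$ with per-time failure probabilities $\propto \delta/T^2$ applied to \cref{lemma: delta theta bound}, followed by checking that the resulting bound falls below $\epsilon_0$ once $T \gtrsim \log^2(1/\delta)$, and then transferring to $\lnorm{\Kh_T - K}$ via the DARE perturbation bound (the paper uses Proposition 4 of \citet{simchowitz2020naive}, i.e.\ \cref{eq: delta K controlled by delta theta}). Your additional care about verifying that the reset step of \cref{alg:myAlg} does not overwrite $\Kh_T$ on the good event is a point the paper's proof passes over silently, and is a reasonable (if minor) tightening.
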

Then starting with the inequality
\begin{equation*}
    \norm{\Delta_t}^2 = \norm{(\Kh_t - K)x_t + \eta_t}^2 \lesssim \norm{\Kh_t - K}^2\norm{x_t}^2 + \norm{\eta_t}^2,
\end{equation*}
we can show that with probability $1-\delta$:
\begin{align*}
    \maxEig{\sum_{t=0}^{T-1}\Delta_t\Delta_t^\top 1_{E_{\delta}}}  \lesssim 1/\delta 
    \left(
    \sum_{t=1}^{T-1}\EE{t^{1/2}\norm{\Kh_t - K}^4 +  t^{-1/2}\norm{x_t}^4 1_{E_{\delta}}
    }  + T^{1/2} 
    \right).
\end{align*}
%

We prove in Lemma \ref{lemma: xt norm bound} that for $t \gtrsim \log^2(1/\delta)$, $\EE{\norm{x_t}^41_{E_{\delta}}} \lesssim 1$. For $t \lesssim \log^2(1/\delta)$ we have the bound $\E{\norm{x_t}^2} \lesssim \log^2(t)$ from Eq. (104) of \citet{wang2020exact}. We show the same proof applies if we increase the exponent from 2 to 4:
\begin{equation}
    \label{eq: norm xt 4 bound}
    \E{\norm{x_t}^4} \lesssim \log^4(t).
\end{equation}
Applying these bounds produces
\begin{align*}
\maxEig{\sum_{t=0}^{T-1}\Delta_t\Delta_t^\top 1_{E_{\delta}}}   \lesssim 1/\delta 
    \left(
    \sum_{t=1}^{T-1}\EE{t^{1/2}\norm{\Kh_t - K}^4} + 
    \log^2(1/\delta) +
    T^{1/2}  
    \right),
\end{align*}
and we finish the proof by removing $1_{E_{\delta}}$ on the left hand side and decreasing the probability with which the inequality holds from $1-\delta$ to $1-2\delta$.
\end{proof}


\begin{lemma}
\label{lemma: dtheta recursive}
\cref{alg:myAlg} applied to a system described by \cref{eq:system eq} under Assumption \ref{asm:InitialStableCondition} satisfies, for any $0 < \delta < 1/2$ and $T \gtrsim \log^3(1/\delta)$,

\begin{align}
\label{eq: dtheta recursive eq}
\begin{split}
    &\P\Bigg[
    T^{1/2}\lnorm{\hat\Theta_T-\Theta}^2 \gtrsim 
    \logg{T^{-1/2}\left(
    \sum_{t=1}^{T-1}\EE{t^{1/2}\norm{\Kh_t - K}^4} 
    \right) + 1 } + 
    \log\left(\frac{1}{\delta}\right)
    \Bigg] \le  3\delta.   
\end{split}
\end{align}

\end{lemma}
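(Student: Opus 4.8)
The plan is to chain the two facts assembled just above: the estimation-error bound \cref{eq: theta bound with lambda max deltat} (which already incorporates the refined Gram-matrix control of \cref{lemma: better upper and lower bounds of GT} and holds for $T \gtrsim \log^3(1/\delta)$) together with the control of $T^{-1/2}\maxEig{\sum_{t=0}^{T-1}\Delta_t\Delta_t^\top}$ from \cref{lemma: lambda max sum of delta deltaT bound} (which holds for $T \gtrsim \log^2(1/\delta)$). First I would let $B_1$ denote the bad event in \cref{eq: theta bound with lambda max deltat}, so $\P(B_1) \le \delta$, and $B_2$ the bad event in \cref{lemma: lambda max sum of delta deltaT bound}, so $\P(B_2) \le 2\delta$; a union bound gives $\P(B_1 \cup B_2) \le 3\delta$, which is where the $3\delta$ in the statement comes from, and it then suffices to verify the displayed inequality deterministically on $B_1^c \cap B_2^c$.

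On $B_1^c$, squaring \cref{eq: theta bound with lambda max deltat} gives
\[
T^{1/2}\lnorm{\hat\Theta_T - \Theta}^2 \lesssim 1 + \log(1/\delta) + \log\left(T^{-1/2}\maxEig{\sum\nolimits_{t=0}^{T-1}\Delta_t\Delta_t^\top}\right).
\]
On $B_2^c$, \cref{lemma: lambda max sum of delta deltaT bound} bounds $\maxEig{\sum_{t=0}^{T-1}\Delta_t\Delta_t^\top}$ by $(C/\delta)\bigl(\sum_{t=1}^{T-1}\EE{t^{1/2}\norm{\Kh_t - K}^4} + \log^2(1/\delta) + T^{1/2}\bigr)$ for an absolute constant $C$, so dividing by $T^{1/2}$ and taking logarithms,
\[
\log\left(T^{-1/2}\maxEig{\sum\nolimits_{t=0}^{T-1}\Delta_t\Delta_t^\top}\right) \le \log C + \log(1/\delta) + \log\left(a + b + 1\right),
\]
where $a := T^{-1/2}\sum_{t=1}^{T-1}\EE{t^{1/2}\norm{\Kh_t - K}^4} \ge 0$ and $b := T^{-1/2}\log^2(1/\delta) \ge 0$. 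Using $a + b + 1 \le (a+1)(b+1)$ I would split $\log(a + b + 1) \le \log(a+1) + \log(b+1)$: the first term is exactly the $\log\bigl(T^{-1/2}\sum_{t=1}^{T-1}\EE{t^{1/2}\norm{\Kh_t - K}^4} + 1\bigr)$ that appears in the statement, while $\log(b+1) \le \log(\log^2(1/\delta) + 1) \lesssim \log(1/\delta)$ since $T \ge 1$.

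Combining the previous three displays and using that $\delta < 1/2$ forces $\log(1/\delta) > \log 2$ — so the leftover additive constants $1$ and $\log C$ are absorbed into $\log(1/\delta)$ — yields the claimed inequality on $B_1^c \cap B_2^c$, completing the proof. I expect the difficulty here to be essentially bookkeeping rather than any single hard estimate: the substantive work is done in \cref{lemma: better upper and lower bounds of GT} and \cref{lemma: lambda max sum of delta deltaT bound}, and the point of this step is the observation that the polynomial-in-$1/\delta$ slack those lemmas carry costs only an \emph{additive} $\log(1/\delta)$ once it is fed through the logarithm in \cref{eq: theta bound with lambda max deltat}, which is exactly the form needed for the subsequent self-bounding recursion on $\EE{\norm{\Kh_t - K}^4}$. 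The one technical subtlety worth noting is that $T^{-1/2}\maxEig{\sum_{t=0}^{T-1}\Delta_t\Delta_t^\top}$ can be smaller than $1$, making its logarithm negative; this direction only helps the bound, so no matching lower bound on that quantity is needed here.
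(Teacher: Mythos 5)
Your proposal is correct and follows essentially the same route as the paper: union-bound the failure events of \cref{eq: theta bound with lambda max deltat} and of \cref{lemma: lambda max sum of delta deltaT bound} to obtain the $3\delta$, then observe that the $1/\delta$ and $\log^2(1/\delta)$ slack costs only an additive $O(\log(1/\delta))$ once passed through the logarithm. The only cosmetic difference is your use of $a+b+1\le(a+1)(b+1)$ to split the logarithm where the paper uses $a+b\le 2ab$ for $a,b\ge 1$; these are equivalent bookkeeping devices.
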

A complete proof of Lemma \ref{lemma: dtheta recursive} can be found at \cref{subsection: proof of lemma: dtheta recursive}.
\begin{proof} (sketch)
Combining Lemma \ref{lemma: lambda max sum of delta deltaT bound} and \cref{eq: theta bound with lambda max deltat},
\begin{align*}
\begin{split}
    &\P\Bigg[
    T^{1/2}\lnorm{\hat\Theta_T-\Theta}^2 \gtrsim 
    \logg{T^{-1/2}\left(
    \sum_{t=1}^{T-1}\EE{t^{1/2}\norm{\Kh_t - K}^4} + 
    \log^2(1/\delta)
    \right) + 1 } + 
    \log\left(\frac{1}{\delta}\right)
    \Bigg]\\
    &\hspace{14.3cm}\le  3\delta.
\end{split}
\end{align*}
Then we show that the $\log^2(1/\delta)$ can be moved outside and merged with the $\log(1/\delta)$ term:
\begin{align*}
    & \logg{T^{-1/2}\left(
    \sum_{t=1}^{T-1}\EE{t^{1/2}\norm{\Kh_t - K}^4} + 
    \log^2(1/\delta)
    \right) + 1 } + 
    \log\left(\frac{1}{\delta}\right) 
    \\
    & \quad \lesssim
    \logg{
        T^{-1/2}
        \left(
            \sum_{t=1}^{T-1}\EE{t^{1/2}\norm{\Kh_t - K}^4}  
        \right) +  1 
    } + 
    \log\left(\frac{1}{\delta}\right),
\end{align*}
completing the proof.
\end{proof}

We are now able to state the main result of this section:
\begin{theorem}
\label{theorem: theta bound without log}
\cref{alg:myAlg} applied to a system described by \cref{eq:system eq} under Assumption \ref{asm:InitialStableCondition} satisfies
\begin{equation}
\label{eq: theta bound without log}
    \lnorm{\hat\Theta_T-\Theta} = O_p(T^{-1/4})\text{ and }\lnorm{\Kh_T-K} = O_p(T^{-1/4}).
\end{equation}
\end{theorem}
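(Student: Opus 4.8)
The plan is as follows. First, the second conclusion reduces to the first. On the favorable event of \cref{lemma: favorable event} (with any fixed $\delta$), for all large $T$ we have $\norm{\hat\Theta_T-\Theta}\le\epsilon_0$, so the DARE is solvable and $\Kh_T$ equals its solution, and a standard Lipschitz bound for the DARE solution near $\Theta$ (see, e.g., \citet{wang2020exact}) gives $\norm{\Kh_T-K}\lesssim\norm{\hat\Theta_T-\Theta}$. Hence, once $\norm{\hat\Theta_T-\Theta}=O_p(T^{-1/4})$ is established, intersecting with the favorable event — whose complement has probability at most $\delta$, which may be taken arbitrarily small — upgrades it to $\norm{\Kh_T-K}=O_p(T^{-1/4})$. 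So it suffices to bound $\norm{\hat\Theta_T-\Theta}$.

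The engine is \cref{lemma: dtheta recursive}, which with probability $1-3\delta$ bounds $T^{1/2}\norm{\hat\Theta_T-\Theta}^2$ by $\log\!\big(T^{-1/2}S_T+1\big)+\log(1/\delta)$, where $S_T:=\sum_{t=1}^{T-1}\EE{t^{1/2}\norm{\Kh_t-K}^4}$. The plan is to show $S_T\lesssim T^{1/2}$ by a self-bounding induction in $t$ and then read off the theorem. Writing $g_t:=\EE{t^{1/2}\norm{\Kh_t-K}^4}$ and $S_t:=\sum_{s<t}g_s$, I would first prove that, for $t$ above an absolute constant $T_1$,
\[
g_t\;\lesssim\; t^{-1/2}\Big(\log\!\big(t^{-1/2}S_t+1\big)+1\Big)^2,
\qquad\text{while always}\qquad g_t\lesssim t^{1/2}
\]
(the crude bound because the resets in \cref{alg:myAlg} and the input $C_K>\norm K$ force $\norm{\Kh_t-K}\lesssim1$ deterministically). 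For the first bound I would split on $A_t:=\{\norm{\hat\Theta_t-\Theta}\le\epsilon_0\}\cap\{\text{no reset at time }t\}$, on which $\Kh_t$ is the DARE solution and $\norm{\Kh_t-K}\lesssim\norm{\hat\Theta_t-\Theta}$, and off which $\norm{\Kh_t-K}\lesssim1$; this gives $g_t\lesssim t^{1/2}\EE{\norm{\hat\Theta_t-\Theta}^4 1_{A_t}}+t^{1/2}\P(A_t^c)$. By \cref{lemma: delta theta bound} and a tail bound on $\norm{x_t}$, $\P(A_t^c)$ is super-polynomially small in $t$, so $\sum_t t^{1/2}\P(A_t^c)=O(1)$; and writing $\EE{\norm{\hat\Theta_t-\Theta}^4 1_{A_t}}=\int_0^{\epsilon_0}\P(\norm{\hat\Theta_t-\Theta}>u)\,4u^3\,du$ and bounding the integrand by $1$ for small $u$, by the exponential tail implied by \cref{lemma: dtheta recursive} (available while $\log(1/\delta)\lesssim t^{1/3}$) in the intermediate range, and by \cref{lemma: delta theta bound} for the remaining $u$, the dominant term is $t^{1/2}\cdot t^{-1}\big(\log(t^{-1/2}S_t+1)+1\big)^2$, giving the displayed bound.

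The induction then runs as follows: assume $g_s\le M s^{-1/2}$ for $s\in[T_1,t)$; since $g_s\lesssim s^{1/2}$ for $s<T_1$, we get $S_t\le C_6+2Mt^{1/2}$ for an absolute constant $C_6$, so $t^{-1/2}S_t\le C_6+2M$, and the displayed bound gives $g_t\le C\,t^{-1/2}\big(\log(C_6+2M+1)+1\big)^2$ for an absolute constant $C$. Because $(\log M)^2=o(M)$, one can fix $M$ once and for all so that $C\big(\log(C_6+2M+1)+1\big)^2\le M$, which closes the induction and yields $g_t\le M t^{-1/2}$ for all $t\ge T_1$, hence $S_T\le C_6+2MT^{1/2}$ and $T^{-1/2}S_T=O(1)$. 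Feeding this back into \cref{lemma: dtheta recursive}: for every $\delta\in(0,1/2)$ and $T\gtrsim\log^3(1/\delta)$, $\P\!\left[T^{1/2}\norm{\hat\Theta_T-\Theta}^2\gtrsim 1+\log(1/\delta)\right]\le3\delta$, which is exactly $\norm{\hat\Theta_T-\Theta}=O_p(T^{-1/4})$; combined with the first paragraph this also gives $\norm{\Kh_T-K}=O_p(T^{-1/4})$.

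I expect the inductive step to be the main obstacle, and in particular the two features that make it close: (i) using the \emph{fourth} power $\norm{\Kh_t-K}^4$, so that each $g_t$ contributes only $t^{-1/2}$ and $S_T\lesssim T^{1/2}$ — a second-power version would leave a $\text{polylog}(t)$ per summand, so $S_T\lesssim T\,\text{polylog}(T)$, and the entire logarithm would be lost back; and (ii) the sublinearity of the nonlinearity $\log^2(\cdot)$ appearing in the recursion for $g_t$, which is exactly what makes $g_t\asymp t^{-1/2}$ a self-consistent fixed point rather than something that only improves to an iterated logarithm under finitely many bootstraps. The tail integration behind the displayed bound — keeping straight which of \cref{lemma: delta theta bound} and \cref{lemma: dtheta recursive} is valid on which range of $u$, and checking that the estimation-error blow-up events are summably rare in $t$ — is the other step that requires care.
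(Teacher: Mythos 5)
Your proposal is correct and follows essentially the same route as the paper: reduce $\Kh_T$ to $\hat\Theta_T$ via the DARE Lipschitz bound, integrate the tail of Lemma~\ref{lemma: dtheta recursive} (using Lemma~\ref{lemma: delta theta bound} and the crude bound $\norm{\Kh_t-K}\lesssim 1$ for the extreme tails) to get a fourth-moment recursion, and close it with a self-bounding argument exploiting the sublinearity of $\log^2(\cdot)$. The only difference is organizational — you run an explicit induction in $t$ with a fixed constant $M$, whereas the paper takes the maximum of $\EE{s\norm{\Kh_s-K}^4}$ over a window $[T_0,T_{\max}]$ and solves the resulting fixed-point inequality — and these are interchangeable.
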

A complete proof of \cref{theorem: theta bound without log} can be found at \cref{subsection: lemma: theta bound without log}.
\begin{proof}
By Proposition 4 of \citet{simchowitz2020naive}, 
\begin{equation}
\label{eq: delta K controlled by delta theta}
    \norm{\Kh_T - K} \lesssim \norm{\hat\Theta_T - \Theta}.
\end{equation}
as long as $\norm{\hat\Theta_T - \Theta} \le \epsilon_0$, where $\epsilon_0$ is some fixed constant determined by the system parameters. We want to focus on cases where $\norm{\hat\Theta_T - \Theta} \le \epsilon_0$ to transfer $T^{1/2}\lnorm{\hat\Theta_T-\Theta}^2$ to $T^{1/2}\lnorm{\Kh_T-K}^2$ so that Lemma \ref{lemma: dtheta recursive} has only estimation error of $\Kh_T$.

We can estimate $\EE{T\lnorm{\hat\Theta_T-\Theta}^4 1_{\norm{\hat\Theta_T-\Theta} \le \epsilon_0}}$ by calculating the integral using the tail bound from Lemma \ref{lemma: dtheta recursive} as long as $T \gtrsim \log^3(1/\delta)$. The further tails can be bounded by the constant $\epsilon_0$. 

As a result, when $T \ge T_0$ ($T_0$ is a large enough constant so that $3e^{-cT^{1/3}}T\epsilon_0^4 \le 1$):
\begin{align*}
    &\EE{T\lnorm{\hat\Theta_T-\Theta}^4 1_{\norm{\hat\Theta_T-\Theta} \le \epsilon_0}} \\
    & \quad \lesssim \left(
        \log\left(T^{-1/2}\left(\sum_{t=1}^{T-1}t^{-1/2} \EE{t\lnorm{\Kh_t-K}^4} \right) \right)  + 1
    \right)^2 + 3e^{-cT^{1/3}}T\epsilon_0^4 \\
    & \quad \lesssim \left(
        \log\left(T^{-1/2}\left(\sum_{t=1}^{T-1}t^{-1/2} \EE{t\lnorm{\Kh_t-K}^4} \right) \right)  + 1
    \right)^2 + 1.
\end{align*}
On the right-hand side, consider the maximum of $\EE{t\lnorm{\Kh_t-K}^4}$ from $T_0$ to $T_{\max} \ge T$,
\begin{align*}
    &\EE{T\lnorm{\hat\Theta_T-\Theta}^4
    1_{\norm{\hat\Theta_T-\Theta} \le \epsilon_0}
    } \\
    & \quad 
    \text{(\cref{alg:myAlg} ensures $\norm{\Kh_t} \le C_K$)}
    \\
    & \quad \lesssim \left(
        \log\left(T^{-1/2}\left(\sum_{t=1}^{T_0}t^{-1/2} (C_K + \norm{K})^2
        \right)+ \max_{T_0 \le s \le T_{\max}} \EE{s\lnorm{\Kh_s-K}^4} \right)  + 1
    \right)^2 + 1  \\
    &\quad \lesssim \left(
        \log\left(T^{-1/2}T_0^{1/2}\cdot 1+ \max_{T_0 \le s \le T_{\max}} \EE{s\lnorm{\Kh_s-K}^4} \right)  + 1
    \right)^2 + 1 \\
    &\quad \lesssim 
    \left(
        \log\left(1+ \max_{T_0 \le s \le T_{\max}} \EE{s\lnorm{\Kh_s-K}^4} \right)  + 1
    \right)^2 + 1 
\end{align*}
By \cref{eq: delta K controlled by delta theta}, we can transfer $\lnorm{\hat\Theta_T-\Theta}$ on the left hand side to $\lnorm{\Kh_T-K}$ as long as $\lnorm{\hat\Theta_T-\Theta} \le \epsilon_0$.
By Lemma \ref{lemma: delta theta bound}, the probability $\delta$ that $\lnorm{\hat\Theta_T-\Theta} \le \epsilon_0$ does not hold can be solved from 
\begin{equation*}
    T^{-1/4}\sqrt{
    \left(
    \log T + \log\left(\frac{1}{\delta}\right)
    \right)} = \epsilon_0,
\end{equation*}
which gives 
\begin{equation*}
    \delta = T e^{-\epsilon_0^2T^{1/2}}.
\end{equation*}
As a result (all $T\ge T_0$ satisfies $T e^{-\epsilon_0^2T^{1/2}}T(C_K + \norm{K})^4 \le 1$)
\begin{align*}
    &\EE{T\lnorm{\hat\Theta_T-\Theta}^4
     1_{\norm{\hat\Theta_T-\Theta} \le \epsilon_0}
    }  \gtrsim 
    \EE{T\lnorm{\Kh_T-K}^4
     1_{\norm{\hat\Theta_T-\Theta} \le \epsilon_0}
    }  \\
    & \quad \ge \EE{T\lnorm{\Kh_T-K}^4} - T e^{-\epsilon_0^2T^{1/2}}T(C_K + \norm{K})^4 \\
    & \quad \ge \EE{T\lnorm{\Kh_T-K}^4} - 1.
\end{align*}
Now we have 
\begin{equation*}
\EE{T\lnorm{\Kh_T-K}^4} \lesssim 
        \left(
        \log\left(1+ \max_{T_0 \le s \le T_{\max}} \EE{s\lnorm{\Kh_s-K}^4} \right)  + 1.
    \right)^2 + 1 
\end{equation*}
The right hand side is constant. Taking the maximum over $T$ from $T_0$ to $T_{\max}$ on the left hand side:
\begin{equation*}
    \max_{T_0 \le s \le T_{\max}} \EE{s\lnorm{\Kh_s-K}^4}
    \lesssim \left(
        \log\left(1+ \max_{T_0 \le s \le T_{\max}} \EE{s\lnorm{\Kh_s-K}^4} \right)  + 1
    \right)^2 + 1 
\end{equation*}
Thus 
\begin{equation*}
    \max_{T_0 \le s \le T_{\max}} \EE{s\lnorm{\Kh_s-K}^4} \lesssim 1.
\end{equation*}
The hidden constant only depends on $T_0$, and hence the same inequality holds for \emph{any} $T_{\max}$:
\begin{equation*}
    \max_{s \ge T_0} \EE{s\lnorm{\Kh_s-K}^4}  
    \lesssim 1.
\end{equation*}
Plugging this back to \cref{eq: dtheta recursive eq} gives that when $T \gtrsim \log^3(1/\delta)$,
\begin{equation*}
    \P\left[
    T^{1/2}\lnorm{\hat\Theta_T-\Theta}^2 \gtrsim 
    \logg{T^{-1/2}\left(\sum_{t=1}^{T_0}t^{-1/2} \EE{t\lnorm{\Kh_t-K}^4}
    \right)+ 1} + \log\left(\frac{1}{\delta} \right) 
    \right] \le  3\delta.
\end{equation*}
Because $\norm{\Kh_t} \le C_K$, the sum over the first $T_0$ terms is of negligible order, so that the above equation can be simplified to
\begin{equation*}
    \P\left[
    T^{1/2}\lnorm{\hat\Theta_T-\Theta}^2 \gtrsim 
    \log\left(\frac{1}{\delta} \right) 
    \right] \le  3\delta.
\end{equation*}
Thus,
\begin{equation*}
    \lnorm{\hat\Theta_T-\Theta} = O_p(T^{-1/4}),
\end{equation*}
and $\lnorm{\Kh_T-K} = O_p(T^{-1/4})$ is a direct corollary from \cref{eq: delta K controlled by delta theta}.
\end{proof}

\section{Bounding the regret by $\bigOp{\sqrt{T}}$}
\label{sec: proof_reg}
We start this section by stating the main result of this paper, our regret upper-bound that exactly rate-matches the regret lower-bound of $\bigOmega{\sqrt{T}}$ established in \citet{simchowitz2020naive}.
\begin{theorem}
\label{theorem:regret}
\cref{alg:myAlg} applied to a system described by \cref{eq:system eq} under Assumption \ref{asm:InitialStableCondition} satisfies
\begin{equation}
    \label{eq: tight regret}
    \mathcal{R}(U,T) = \bigOp{\sqrt{T}}.
\end{equation}
\end{theorem}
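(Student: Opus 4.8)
The plan is to run the usual certainty-equivalence (value-function) decomposition of the accumulated cost and then show that every resulting term is either $O_p(\sqrt T)$ --- these are the genuinely $\sqrt T$-sized exploration and estimation terms --- or $o(\sqrt T)$ --- the boundary and martingale terms. The whole point is that the sharpened estimation-error rate of \cref{theorem: theta bound without log}, and specifically the uniform-in-$t$ moment bound $\max_{s\ge T_0}\EE{s\norm{\Kh_s-K}^4}\lesssim1$ obtained inside its proof, is what keeps any $\mathrm{polylog}(T)$ factor from creeping in. First I would set up the decomposition. Writing $u_t=Kx_t+\Delta_t$ with $\Delta_t=(\Kh_t-K)x_t+\eta_t$ as in \cref{lemma: better upper and lower bounds of GT}, and using the two Riccati identities that follow immediately from \cref{eq:ControllerK,eq:riccati}, namely $RK+B^\top P(A+BK)=0$ and $(A+BK)^\top P(A+BK)+Q+K^\top RK=P$, one substitutes $x_{t+1}=(A+BK)x_t+B\Delta_t+\varepsilon_t$ and cancels the cross term via the first identity to obtain the exact per-step identity
\begin{align*}
x_t^\top Q x_t+u_t^\top R u_t={}&x_t^\top Px_t-x_{t+1}^\top Px_{t+1}+\Delta_t^\top(R+B^\top PB)\Delta_t\\
&+2\Delta_t^\top B^\top P\varepsilon_t+2x_t^\top(A+BK)^\top P\varepsilon_t+\varepsilon_t^\top P\varepsilon_t.
\end{align*}
Applying the same identity to the optimal trajectory $x_t^{*}$ (which has $\Delta_t\equiv0$ and, under the natural coupling, the same $\varepsilon_t$), subtracting, and telescoping gives
\begin{align*}
\mathcal R(U,T)={}&\big(x_1^\top Px_1-(x_1^{*})^\top Px_1^{*}\big)-\big(x_{T+1}^\top Px_{T+1}-(x_{T+1}^{*})^\top Px_{T+1}^{*}\big)\\
&+\sum_{t=1}^{T}\Delta_t^\top(R+B^\top PB)\Delta_t+2\sum_{t=1}^{T}\Delta_t^\top B^\top P\varepsilon_t+2\sum_{t=1}^{T}(x_t-x_t^{*})^\top(A+BK)^\top P\varepsilon_t,
\end{align*}
which I will call $(\mathrm I)-(\mathrm{II})+(\mathrm{III})+(\mathrm{IV})+(\mathrm V)$. (Measuring regret instead against the deterministic baseline $T\sigma_\varepsilon^2\Tr(P)$ leads to the same argument, with two extra terms $\sum_t(\varepsilon_t^\top P\varepsilon_t-\sigma_\varepsilon^2\Tr(P))=O_p(\sqrt T)$ and $2\sum_t x_t^\top(A+BK)^\top P\varepsilon_t$, the latter handled on the favorable event $E_\delta$ where $\EE{\norm{x_t}^2 1_{E_\delta}}\lesssim1$.)

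\textbf{The main term $(\mathrm{III})$.} Next I would bound $(\mathrm{III})$, which is where the estimation-error improvement is used. Splitting $\Delta_t=(\Kh_t-K)x_t+\eta_t$, the $\eta_t$-contribution is $\sum_t\eta_t^\top(R+B^\top PB)\eta_t\ge0$ with mean $\sigma_\eta^2\Tr(R+B^\top PB)\sum_t t^{-1/2}=\Theta(\sqrt T)$, so it is $O_p(\sqrt T)$ by Markov. For the $(\Kh_t-K)x_t$-contribution I would bound $\EE{\norm{(\Kh_t-K)x_t}^2}$ by splitting on whether $\norm{\hat\Theta_t-\Theta}\le\epsilon_0$: on that event, Cauchy--Schwarz with the uniform bound $\EE{t\norm{\Kh_t-K}^4}\lesssim1$ from the proof of \cref{theorem: theta bound without log} and the state-moment bound $\EE{\norm{x_t}^4 1_{E_\delta}}\lesssim1$ (\cref{lemma: xt norm bound}) gives $\lesssim t^{-1/2}$ for $t\gtrsim\log^2(1/\delta)$; on its complement, \cref{alg:myAlg} still caps $\norm{\Kh_t}\le C_K$ while $\EE{\norm{x_t}^4}\lesssim\log^4 t$ (\cref{eq: norm xt 4 bound}) and the complement has probability $\lesssim te^{-c\sqrt t}$ by \cref{lemma: delta theta bound}, so by Cauchy--Schwarz this part of the sum over $t\le T$ is $O(1)$. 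Hence $\EE{(\mathrm{III})}\lesssim\sqrt T$, and since $(\mathrm{III})\ge0$, Markov gives $(\mathrm{III})=O_p(\sqrt T)$; as a by-product, $\sum_{t=1}^{T}\norm{\Delta_t}^2=O_p(\sqrt T)$.

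\textbf{The lower-order terms.} It remains to dispatch $(\mathrm I),(\mathrm{II}),(\mathrm{IV}),(\mathrm V)$. Terms $(\mathrm I),(\mathrm{II})$ are $O_p(\log^2 T)$: $x_1,x_1^{*}$ are one-step images of the fixed $x_0$ under bounded controls plus a Gaussian, hence $O_p(1)$, while $\norm{x_{T+1}}\lesssim\log T$ with high probability (by the reset, line~\ref{line:check} of \cref{alg:myAlg}) and $\norm{x_{T+1}^{*}}=O_p(1)$ (stable linear recursion with Gaussian noise). Terms $(\mathrm{IV}),(\mathrm V)$ are martingale sums: with $\mathcal G_t:=\sigma(\text{history through }x_t,\ \Kh_t,\ \eta_t)$ we have $\varepsilon_t\perp\mathcal G_t$ and both $\Delta_t$ and $x_t-x_t^{*}$ are $\mathcal G_t$-measurable, so a self-normalized / Freedman-type inequality gives $\abs{(\mathrm{IV})}\lesssim\sqrt{\big(\sum_t\norm{\Delta_t}^2\big)\log T}$ and $\abs{(\mathrm V)}\lesssim\sqrt{\big(\sum_t\norm{x_t-x_t^{*}}^2\big)\log T}$ with high probability. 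Since $x_t-x_t^{*}=\sum_{s<t}(A+BK)^{t-1-s}B\Delta_s+(A+BK)^{t-1}(x_1-x_1^{*})$ and $A+BK$ has spectral radius below $1$, a geometric-convolution / Cauchy--Schwarz estimate gives $\sum_{t\le T}\norm{x_t-x_t^{*}}^2\lesssim\sum_{t\le T}\norm{\Delta_t}^2+O_p(1)=O_p(\sqrt T)$. Hence $(\mathrm{IV})=O_p(T^{1/4}\sqrt{\log T})$ and $(\mathrm V)=O_p(T^{1/4}\sqrt{\log T})$, both $o(\sqrt T)$. Summing the five terms yields $\mathcal R(U,T)=O_p(\sqrt T)$.

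\textbf{Main obstacle.} The hard part is the $O_p(\sqrt T)$ bound on $(\mathrm{III})$ with no surviving $\mathrm{polylog}(T)$ factor: this is precisely where the improved estimation rate must enter, and it has to enter through the \emph{uniform-in-$t$} expectation bound $\max_{s\ge T_0}\EE{s\norm{\Kh_s-K}^4}\lesssim1$ rather than the pointwise statement $\norm{\Kh_T-K}=O_p(T^{-1/4})$, because the regret sums over all $t\le T$. Marrying that expectation bound to the state-moment bounds while carefully partitioning on the good estimation event and its complement --- so that neither the $\log t$-sized state nor the crude estimate available on the bad event leaks a $\mathrm{polylog}$ into the sum --- is the step that will require the most care. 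Everything else (the cost identity, the boundary terms, the martingale tail bounds) is routine once $\sum_{t\le T}\norm{\Delta_t}^2=O_p(\sqrt T)$ is established.
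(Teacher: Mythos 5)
Your proposal is correct, and it reaches the same destination through a slightly different organization of the same underlying machinery. The paper does not telescope the coupled difference $x_t-x_t^{*}$ directly; instead (Lemma~\ref{lemma: cost for our algo and optimal algo}) it absorbs the exploration noise into the system noise, $\tilde\varepsilon_t=B\eta_t+\varepsilon_t$, treats $\tilde u_t=\Kh_tx_t$ as the controller of this ``new system,'' telescopes each cost separately against $P$ (via the identity of Lemma~\ref{lem: useful lemma from fazel}, which is your two Riccati identities applied with $\Kh_t$ in place of $K$), and only then subtracts, which leaves the residual terms $2\sum_t\varepsilon_t^\top PB\eta_t$, $\sum_t(B\eta_t)^\top PB\eta_t$, $\sum_t\eta_t^\top R\eta_t$ to be bounded by mean/variance computations. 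Algebraically your five terms are a reshuffling of the same pieces: your $(\mathrm{III})$ splits into the paper's main term $\sum_t x_t^\top(\Kh_t-K)^\top(R+B^\top PB)(\Kh_t-K)x_t$ plus the quadratic $\eta$ sums, your $(\mathrm{IV})$ contains the paper's cross term $2\sum_t\varepsilon_t^\top PB\eta_t$, and your $(\mathrm V)$ plays the role of the paper's martingale $\sum_t\tilde\varepsilon_t^\top P(A+B\Kh_t)x_t$ (which the paper bounds by $O_p(\sqrt T)$ via a plain second-moment calculation rather than your sharper $o(\sqrt T)$ Freedman bound --- either suffices). Crucially, you correctly identified the one place the new estimation theory must enter: the $O_p(\sqrt T)$ bound on $\sum_t\norm{(\Kh_t-K)x_t}^2$ via the uniform moment bound $\max_{s\ge T_0}\EE{s\norm{\Kh_s-K}^4}\lesssim1$ (\cref{eq:E K estimation err ^4 bound}) combined with $\EE{\norm{x_t}^4 1_{E_\delta}}\lesssim1$ on the favorable event and a crude bound off it, which is exactly how the paper argues. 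Your direct coupled decomposition is arguably cleaner in that it avoids the new-system construction and handles all $\eta$-terms in one place; the paper's route buys the reusable intermediate statement of Lemma~\ref{lemma: cost for our algo and optimal algo} expressing each cost as $\sum_t\tilde\varepsilon_t^\top P\tilde\varepsilon_t$ plus lower order, and lets it recycle Eq.~(83) of \citet{wang2020exact} for the $(\Kh_tx_t)^\top R\eta_t$ cross term.
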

A complete proof of \cref{theorem:regret} can be found at \cref{subsection: proof of theorem:regret}.
\begin{proof} (sketch)
Our first step is to show the following lemma bounding the cumulative costs $\mathcal{J}$ of the system under \cref{alg:myAlg} and under the oracle optimal controller. 
\begin{lemma}
\label{lemma: cost for our algo and optimal algo}
\cref{alg:myAlg} applied to a system described by \cref{eq:system eq} under Assumption \ref{asm:InitialStableCondition} satisfies, 
\begin{equation*}
        \mathcal{J}(U,T)= \sum_{t=1}^{T} \tilde\varepsilon_{t}^\top P \tilde\varepsilon_{t} +
    \sum_{t=1}^{T} \eta_{t}^\top R \eta_{t} + \bigOp{T^{1/2}}
\end{equation*}
and 
\begin{equation*}
        \mathcal{J}(U^*,T)= \sum_{t=1}^{T} \varepsilon_{t}^\top P \varepsilon_{t} + \bigOp{T^{1/2}},
\end{equation*}
where $\varepsilon_t$ is the system noise and $\eta_t$ is the exploration noise in \cref{alg:myAlg}, and $\tilde{\varepsilon}_t = B\eta_t + \varepsilon_t$. 
\end{lemma}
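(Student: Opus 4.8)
The plan is to obtain both cumulative-cost expansions from a single algebraic identity and then strip off martingale and lower-order terms, cashing in the estimation-error rate of \cref{theorem: theta bound without log} at the one place it matters. The workhorse is the LQR completion-of-squares identity attached to the DARE solution $P$: writing $S := R + B^\top P B$, for \emph{any} control $u_t$ applied to \cref{eq:system eq},
\[
x_t^\top Q x_t + u_t^\top R u_t \;=\; x_t^\top P x_t \;-\; (Ax_t+Bu_t)^\top P (Ax_t+Bu_t) \;+\; \Delta_t^\top S \Delta_t, \qquad \Delta_t := u_t - Kx_t,
\]
which follows by substituting \cref{eq:riccati} into $Q + A^\top P A$ and completing the square in $u_t$ (for \cref{alg:myAlg}, $\Delta_t = (\Kh_t-K)x_t+\eta_t$ is exactly the quantity from \cref{lemma: better upper and lower bounds of GT}). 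Substituting $Ax_t+Bu_t = x_{t+1}-\varepsilon_t$, summing over $t=1,\dots,T$, and using $\varepsilon_t^\top P x_{t+1} = \varepsilon_t^\top P(Ax_t+Bu_t) + \varepsilon_t^\top P\varepsilon_t$ gives, for either controller,
\[
\mathcal{J} \;=\; x_1^\top P x_1 - x_{T+1}^\top P x_{T+1} + 2\sum_{t=1}^T \varepsilon_t^\top P(Ax_t+Bu_t) + \sum_{t=1}^T \varepsilon_t^\top P \varepsilon_t + \sum_{t=1}^T \Delta_t^\top S \Delta_t .
\]

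For $\mathcal{J}(U^*,T)$ we have $\eta_t=0$ and $\Kh_t=K$, so $\Delta_t=0$; the optimal closed loop is geometrically ergodic, so the boundary terms are $\bigOp{1}$ and $2\sum_t \varepsilon_t^\top P(A+BK)x_t^*$ is a mean-zero martingale with $\E$-of-square $\lesssim \sum_t \E\norm{x_t^*}^2 \lesssim T$, hence $\bigOp{\sqrt T}$ by Markov's inequality. This yields $\mathcal{J}(U^*,T) = \sum_t \varepsilon_t^\top P\varepsilon_t + \bigOp{\sqrt T}$. For $\mathcal{J}(U,T)$, the boundary terms are $\bigOp{\mathrm{polylog}(T)}=o_p(\sqrt T)$ and the noise martingale is $\bigOp{\sqrt T}$, both worked out on the favorable event $E_\delta$ of \cref{lemma: favorable event} (together with the uniform state control it carries), which is what makes $\sum_t \norm{x_t}^2 = \bigOp{T}$ rather than $\bigOp{T\,\mathrm{polylog}(T)}$, using $\E[\norm{x_t}^4 1_{E_\delta}]\lesssim 1$ (\cref{lemma: xt norm bound}) for $t$ large and the crude \cref{eq: norm xt 4 bound} over the first $\lesssim\log^2(1/\delta)$ rounds. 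It then remains to expand $\sum_t \Delta_t^\top S\Delta_t$ with $\Delta_t = (\Kh_t-K)x_t+\eta_t$ into: (i) $\sum_t x_t^\top(\Kh_t-K)^\top S(\Kh_t-K)x_t$; (ii) the cross term $2\sum_t \eta_t^\top S(\Kh_t-K)x_t$; and (iii) $\sum_t \eta_t^\top S\eta_t = \sum_t \eta_t^\top R\eta_t + \sum_t \eta_t^\top B^\top P B\eta_t$. Term (ii) is a martingale (since $\eta_t$ is independent of everything measurable before it) with $\E$-of-square $\lesssim \sum_t t^{-1/2}\E[\norm{\Kh_t-K}^2\norm{x_t}^2] \lesssim \sum_t t^{-1} \lesssim \log T$, hence $o_p(\sqrt T)$. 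For the remaining terms, expanding $\tilde\varepsilon_t = B\eta_t+\varepsilon_t$ gives $\sum_t \tilde\varepsilon_t^\top P\tilde\varepsilon_t = \sum_t \varepsilon_t^\top P\varepsilon_t + \sum_t \eta_t^\top B^\top P B\eta_t + 2\sum_t \eta_t^\top B^\top P\varepsilon_t$, and the last sum is a mean-zero martingale with $\E$-of-square $\lesssim \sum_t t^{-1/2} \lesssim \sqrt T$, hence $\bigOp{T^{1/4}}$; combining with the $\sum_t\varepsilon_t^\top P\varepsilon_t$ term from the identity and term (iii) collapses everything to $\mathcal{J}(U,T) = \sum_t \tilde\varepsilon_t^\top P\tilde\varepsilon_t + \sum_t \eta_t^\top R\eta_t + (\text{terms (i), (ii)}) + \bigOp{\sqrt T}$.

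The main obstacle is term (i): showing $\sum_t \norm{\Kh_t-K}^2\norm{x_t}^2 = \bigOp{\sqrt T}$, not $\bigOp{\sqrt T\,\mathrm{polylog}(T)}$. By Cauchy--Schwarz its expectation on $E_\delta$ is at most $\norm{S}\sum_t \sqrt{\E\norm{\Kh_t-K}^4}\,\sqrt{\E[\norm{x_t}^4 1_{E_\delta}]} \lesssim \sum_t t^{-1/2}\cdot 1 \lesssim \sqrt T$, where the crucial input $\E\norm{\Kh_t-K}^4 \lesssim t^{-1}$ is precisely the quantitative content established en route to \cref{theorem: theta bound without log} ($\sup_{t\ge T_0}\E[t\norm{\Kh_t-K}^4]\lesssim 1$); with only the previously known polylog-inflated estimation rate one would get $\sum_t t^{-1/2}\mathrm{polylog}(t)\asymp \sqrt T\,\mathrm{polylog}(T)$ and lose the entire improvement. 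The rest of the work is routine but delicate bookkeeping: making the martingale bounds rigorous for sub-Gaussian increments (Markov on the $L^2$ bound, or a Freedman-type inequality applied to a suitably stopped version so one may use the on-$E_\delta$ state bounds inside predictable variations), checking that every $\bigOp{\cdot}$ constant is uniform in $t$, and verifying that the initial $\mathrm{polylog}(1/\delta)$-many rounds and the rare resets to $K_0$ contribute only lower order.
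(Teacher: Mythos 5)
Your proposal is correct and follows essentially the same route as the paper: the completion-of-squares identity attached to the DARE (the paper invokes it as Lemma 25 of \citet{wang2020exact}), a telescoping sum in $x_t^\top P x_t$, martingale bounds for the cross terms, and—crucially—the bound $\E\lnorm{\Kh_t-K}^4\lesssim t^{-1}$ from \cref{theorem: theta bound without log} together with $\E[\norm{x_t}^4 1_{E_\delta}]\lesssim 1$ to control $\sum_t x_t^\top(\Kh_t-K)^\top S(\Kh_t-K)x_t$ by $\bigOp{\sqrt{T}}$. The only difference is presentational: you apply the identity directly to the original system with $\Delta_t=(\Kh_t-K)x_t+\eta_t$ and re-merge the $\eta_t$ pieces into $\tilde\varepsilon_t^\top P\tilde\varepsilon_t$ at the end, whereas the paper first passes to an auxiliary system with noise $\tilde\varepsilon_t$ and controller $\tilde u_t=\Kh_t x_t$ and handles the $\eta_t^\top R\eta_t$ correction in a separate sub-lemma; the resulting terms and bounds are the same.
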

Before sketching the proof of Lemma \ref{lemma: cost for our algo and optimal algo} (a complete proof can be found in \cref{subsection: proof of lemma: cost for our algo and optimal algo}), we show how to finish the proof of \cref{theorem:regret} with just a few more steps:
\begin{align*}
    &\mathcal{R}(U,T) = \mathcal{J}(U,T)- \mathcal{J}(U^*,T) \\
    &\quad = \sum_{t=1}^{T} \tilde\varepsilon_{t}^\top P \tilde\varepsilon_{t} +
    \sum_{t=1}^{T} \eta_{t}^\top R \eta_{t} - \sum_{t=1}^{T} \varepsilon_{t}^\top P \varepsilon_{t} + \bigOp{T^{1/2}}\\
    &\quad = 2\sum_{t=1}^{T} \varepsilon_{t}^\top P (B\eta_t) + \sum_{t=1}^{T} (B\eta_t)^\top P (B\eta_t) + 
    \sum_{t=1}^{T} \eta_{t}^\top R \eta_{t} + \bigOp{T^{1/2}}.
\end{align*}
The final result follows by bounding the three summations in the last line by $\bigOp{T^{1/2}}$:
because $\eta_t = \bigOp{t^{-1/4}}$, the quadratic summations $\sum_{t=1}^{T} (B\eta_t)^\top P (B\eta_t)$ and $\sum_{t=1}^{T} \eta_{t}^\top R \eta_{t}$ are both of order $\bigOp{T^{1/2}}$ and the cross term $2\sum_{t=1}^{T} \varepsilon_{t}^\top P (B\eta_t) = o_p\left(T^{1/2}\right)$.
\end{proof}

\begin{proof} (sketch for Lemma \ref{lemma: cost for our algo and optimal algo})
We only prove the first equation because the second equation is a special case of the first equation (with $\eta_t = 0$ and $\Kh_t = K$).  The idea is to consider a new system with system noise $\tilde{\varepsilon}_t = B\eta_t + \varepsilon_t$ and controller $\tilde{u}_t = \Kh_t x_t$. One can show that the new system shares the same states $x_t$ as the original system and the cost in the new system is:
\begin{equation}
\label{eq: new system cost}
    \sum_{t=1}^{T} x_t^\top Qx_t + \tilde{u}_t^\top R\tilde{u}_t = \sum_{t=1}^{T} \tilde\varepsilon_{t}^\top P \tilde\varepsilon_{t} +
    \bigOp{T^{1/2}}
.\end{equation}
The difference between the original cost and transformed cost is 
\begin{equation}
\label{eq: difference action new and og system}
    \sum_{t=1}^{T} u_t^\top  Ru_t -  \tilde{u}_t^\top R\tilde{u}_t  =
    \sum_{t=1}^T \eta_t^\top R \eta_t + 
    o\left(T^{1/4}\log^{\frac{3}{2}}(T) \right) \as
\end{equation}
The result of the Lemma follows by summing \cref{eq: new system cost,eq: difference action new and og system}; we now briefly sketch the proofs of each equation. \cref{eq: new system cost,eq: difference action new and og system} are stated in the Appendix as Lemmas~\ref{lemma: new sys cost} and \ref{lemma: trans cost} in the Appendix and their complete proofs are given in \cref{subsection: Cost of new system,subsection: Cost difference induced by transformation}. 

\paragraph{\cref{eq: new system cost}:} 
After some substitutions to leverage an identity from Lemma 18 of \citet{wang2020exact} and applying bounds to straightforward terms, the cost of the new system can be written as
\begin{align*}
    &\sum_{t=1}^{T} x_t^\top Qx_t + \tilde{u}_t^\top R\tilde{u}_t \\
    & \quad = \sum_{t=1}^{T} \bigg[x_t^\top (\Kh_t-K)^\top( R + B^\top P B) (\Kh_t-K)x_t  + 2  \tilde\varepsilon_{t}^\top P (A+B\Kh_t)x_t + \tilde\varepsilon_{t}^\top P \tilde\varepsilon_{t}\bigg]
        + \logOO_p(1).
\end{align*}
By \cref{theorem: theta bound without log}, $\norm{\Kh_t - K}=\bigOp{T^{-1/4}}$, and we can show essentially that $x_t$ is of constant order. This gives that the first sum is of order $\sum_{t=1}^{T} x_t^\top (\Kh_t-K)^\top( R + B^\top P B) (\Kh_t-K)x_t = \bigOp{T^{1/2}}$. By noting that $\tilde\varepsilon_{t} \independent P(A+B\Kh_t)x_t$ and both $\tilde\varepsilon_{t}$ and $P(A+B\Kh_t)x_t$ are of constant order, we can use standard properties of martingales to show $\sum_{t=1}^{T} \tilde\varepsilon_{t}^\top P (A+B\Kh_t)x_t = \bigOp{T^{1/2}}$ as well. \cref{eq: new system cost} is then just the third sum plus the combination of the aforementioned bounds.

\paragraph{\cref{eq: difference action new and og system}:}
The difference is expressed as
\begin{equation*}
\begin{split}
        \sum_{t=1}^{T} u_t^\top  Ru_t -  \tilde{u}_t^\top R\tilde{u}_t     
    =&  \sum_{t=1}^{T}(\Kh_t x_t + \eta_t)^\top  R (\Kh_t x_t + \eta_t)   - \sum_{t=1}^{T} (\Kh_t x_t)^\top  R (\Kh_t x_t) \\
    =& 2\sum_{t=1}^{T} (\Kh_t x_t)^\top  R \eta_t +  \sum_{t=1}^{T} \eta_t^\top R \eta_t, 
\end{split}
\end{equation*}
and we simply bound the first term by Eq.~(83) of \citet{wang2020exact},
\begin{equation*}
    \sum_{t=1}^{T} (\Kh_t x_t)^\top  R \eta_t = o\left(T^{1/4}\log^{\frac{3}{2}}(T) \right) \as,
\end{equation*}
which completes the proof.
\end{proof}

\section{Discussion}
Before we can fully understand the practical performance of RL and deploy it in real-world, high-stakes environments, we need to at least understand it well in the simplest, most structured problem settings. This paper provides progress in that direction by,
for the LQR problem with unknown dynamics, proving the first regret upper-bound of $O_p(\sqrt{T})$,
exactly matching the rate of the best-known lower-bound of $\Omega_p(\sqrt{T})$ established in \citet{simchowitz2020naive}. There are related settings 
such as non-linear LQR \citep{kakade2020information} and non-stationary LQR \citep{luo2021dynamic} whose best known regret upper-bounds are $O_p(\sqrt{T}\,\text{polylog}(T))$, and we hope our work can shed light on removing the $\text{polylog}(T)$ terms in these settings as well. Finally, for the practical deployment of RL algorithms, the constant factor multiplying the regret rate really matters, so it is our hope that now that the LQR rate is tightly characterized the field can move on to characterizing and tightening the constant in the optimal regret, which we expect will lead to algorithmic innovation as well. 

\section*{Acknowledgement}
The authors are grateful for partial support from NSF CBET--2112085.

\bibliography{all_bib}

\appendix 
\section{Proofs from Section~\ref{sec: proof_est}}

\subsection{Proof of Lemma \ref{lemma: delta theta bound}}
\label{subsection: proof of lemma: delta theta bound}

\begin{lemma*}[Bound with log term]
\cref{alg:myAlg} applied to a system described by \cref{eq:system eq} under Assumption \ref{asm:InitialStableCondition} satisfies, when $0 < \delta < 1/2$, for any $T \gtrsim \log(1/\delta)$,
\begin{align*}
\P\left[\lnorm{\hat\Theta_T-\Theta} \gtrsim T^{-1/4}\sqrt{
\left(
\log T + \log\left(\frac{1}{\delta}\right)
\right)} 
\right] \le  \delta.
\end{align*}
As a direct corollary:
\begin{equation*}
    \norm{\hat\Theta_T - \Theta} = \bigOp{T^{-1/4}\log^{1/2}(T)}.
\end{equation*}
\end{lemma*}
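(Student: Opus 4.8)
The plan is to instantiate Theorem~2.4 of \citet{simchowitz2018learning}, restated in \cref{eq: simchowitz conclusion}, with the choice $\Gamlow\eqsim T^{-1/2}I_{n+d}$ and $\Gambar\eqsim \log^2(T)\,I_{n+d}$ (up to $\delta$-dependent logarithmic factors). With these choices the denominator $\lambda_{\min}(\Gamlow)$ in \cref{eq: simchowitz conclusion} is of order $T^{-1/2}$ while $\log\det(\Gambar\Gamlow^{-1})$ contributes only an order-$\log T$ term, and substituting them produces exactly the advertised rate $T^{-1/4}\sqrt{\log T + \log(1/\delta)}$. So the work is entirely in verifying these two high-probability spectral bounds on the system Gram matrix $\sum_{t=0}^{T-1}z_tz_t^\top$, each of which must hold with probability at least $1-\delta$ so that the theorem applies verbatim.

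For the lower bound I would invoke Lemma~34 of \citet{wang2020exact}, which gives that with probability at least $1-\delta$ one has $\sum_{t=0}^{T-1}z_tz_t^\top\succeq c\,T^{1/2}I_{n+d}$ for an absolute constant $c>0$; intuitively this is driven by the injected exploration noise $\eta_t\sim\calN(0,\sigma_\eta^2 t^{-1/2}I_d)$, whose contribution sums to $\sum_{t}t^{-1/2}\eqsim T^{1/2}$ in every direction. Hence one may take $\Gamlow=c\,T^{-1/2}I_{n+d}$, so $\lambda_{\min}(\Gamlow)=c\,T^{-1/2}$. For the upper bound I would use the crude estimate $z_tz_t^\top\preceq\norm{z_t}^2 I_{n+d}$ together with $\norm{z_t}^2\lesssim\norm{x_t}^2+\norm{\eta_t}^2$ (using $\norm{\Kh_t}\le C_K$ to absorb the controller), control $\norm{\eta_t}^2\lesssim\sigma_\eta^2\log(t/\delta)$ by a Gaussian tail bound, and control $\norm{x_t}\lesssim C_x\log(t/\delta)$ by the standard uniform-in-$t$ state bound of \citet{wang2020exact}: whenever $\norm{x_t}$ or $\norm{\Kh_t}$ becomes large, \cref{alg:myAlg} resets the controller to the stabilizing $K_0$, so the closed loop contracts and the sub-Gaussian noise cannot push the state past a logarithmic envelope. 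A union bound over $t=0,\dots,T-1$, allocating failure probability $\lesssim\delta/t^2$ per step, then gives $\sum_{t=0}^{T-1}z_tz_t^\top\preceq C\,T\log^2(T/\delta)\,I_{n+d}$ with probability at least $1-\delta$, so one may take $\Gambar=C\log^2(T/\delta)\,I_{n+d}$.

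It remains to check the hypothesis \cref{eq: T condition simchowitz} and assemble. Here $\Gambar\Gamlow^{-1}\eqsim T^{1/2}\log^2(T/\delta)\,I_{n+d}$, so $\log\det(\Gambar\Gamlow^{-1})\lesssim\log T+\log\log(1/\delta)\lesssim\log T+\log(1/\delta)$, and \cref{eq: T condition simchowitz} becomes $T\gtrsim\log(1/\delta)+\log T$, which holds under the hypothesis $T\gtrsim\log(1/\delta)$ after enlarging the implied constant (since $\log T=o(T)$). Feeding $\Gamlow,\Gambar$ into \cref{eq: simchowitz conclusion} yields, with probability at least $1-\delta$,
\[
\lnorm{\hat\Theta_T-\Theta}\lesssim\sqrt{\frac{1+\log T+\log(1/\delta)}{T\cdot T^{-1/2}}}\eqsim T^{-1/4}\sqrt{\log T+\log(1/\delta)},
\]
which is the claimed bound. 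The corollary $\norm{\hat\Theta_T-\Theta}=\bigOp{T^{-1/4}\log^{1/2}(T)}$ then follows by fixing an arbitrary $\epsilon>0$, applying the bound with the \emph{constant} choice $\delta=\epsilon$, and noting that $\sqrt{\log T+\log(1/\epsilon)}\lesssim\sqrt{\log T}$ for $T$ large.

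I expect the only genuinely delicate ingredient to be the uniform-in-$t$ high-probability bound $\norm{x_t}\lesssim C_x\log(t/\delta)$, which is precisely what makes the crude Gram upper bound of the right order; this is where the reset mechanism of \cref{alg:myAlg} and the closed-loop stability analysis carry the load. Everything else — the Gaussian tail bound, the union bound, verifying \cref{eq: T condition simchowitz}, and the final substitution — is routine, and the state bound itself can be cited from \citet{wang2020exact} rather than re-proved.
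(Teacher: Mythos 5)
Your proposal is correct in outline and arrives at the bound by the same top-level strategy as the paper (instantiate Theorem~2.4 of \citet{simchowitz2018learning} with $\Gamlow\eqsim T^{-1/2}I_{n+d}$ and a polylogarithmic $\Gambar$, check \cref{eq: T condition simchowitz}, substitute), and your treatment of the lower bound via Lemma~34/the BMSB condition of \citet{wang2020exact} matches the paper's. Where you genuinely diverge is the upper bound $\Gambar$. The paper does \emph{not} use a uniform-in-$t$ high-probability state bound plus a union bound; it proves the elementary matrix Markov inequality $\PP{M\preceq \tfrac{d_M}{\delta}\E M}\ge 1-\delta$ (see \cref{eq: matrix M upper bound}), applies it to $M=\sum_t z_tz_t^\top$, and then only needs the in-expectation bound $\E\norm{z_t}^2\lesssim\log^2(t)$ from Eq.~(104) of \citet{wang2020exact}, yielding $\Gambar\eqsim\log^2(T)I_{n+d}/\delta$. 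Your route instead needs the pathwise statement $\norm{x_t}\lesssim\log(t/\delta)$ simultaneously for all $t\le T$, which is a strictly stronger ingredient: the reset rule in \cref{alg:myAlg} constrains the \emph{controller} when the state is large, not the state itself, so the logarithmic envelope on $\norm{x_t}$ still has to be proved from the closed-loop contraction argument (it is available in \citet{wang2020exact}, but it is the heavy part of your plan, as you note). The two $\Gambar$'s carry their $\delta$-dependence differently ($1/\delta$ multiplicative versus $\log^2(T/\delta)$), but since $\Gambar$ enters the final bound only through $\log\det(\Gambar\Gamlow^{-1})$, both give $\lesssim\log T+\log(1/\delta)$ and the same conclusion. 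In short: your argument works, but the paper's Markov-on-the-expected-Gram-matrix trick buys you the same $\Gambar$ from a much cheaper hypothesis, and you may want to adopt it to avoid re-deriving (or carefully citing) the uniform state bound.
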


\begin{proof}
If we can tolerate the additional log terms, the way of upper bounding the Gram matrix is to use the result from Eq. (104) of \citet{wang2020exact}: $\E{\norm{z_t}^2} \lesssim \log^2(t)$. 
Inspired by the equation before Proposition 3.1 of \citet{simchowitz2018learning}, we have:
for any random positive semi-definite matrix $M \in \real^{d_M \times d_M}$ with $\EE{M} \succ 0$, 
\begin{equation*}
\begin{split}
    &\PP{M \npreceq \frac{d_M}{\delta}\EE{M}} \\
    &\quad = \PP{\lambda_{\max}\left(
    (\EE{M})^{-1/2}M(\EE{M})^{-1/2} \right)
    \ge \frac{d_M}{\delta} 
    } \\
    &\quad \le 
    \EE{
        \lambda_{\max}
        \left((\EE{M})^{-1/2}M(\EE{M})^{-1/2} \right)
    }
    / \frac{d_M}{\delta} \\
    &\quad \le 
    \EE{
        \Tr
        \left((\EE{M})^{-1/2}M(\EE{M})^{-1/2} \right)
    }
    / \frac{d_M}{\delta} \\
    &\quad = 
        \Tr
        \left((\EE{M})^{-1/2}\EE{M}(\EE{M})^{-1/2} \right)
    / \frac{d_M}{\delta} \\
    &\quad = 
        \Tr \left(I_{d_M}\right)
    / \frac{d_M}{\delta} \\
    &\quad = \delta,
\end{split}
\end{equation*}
which means 
\begin{equation}
\label{eq: matrix M upper bound}
\PP{M \preceq \frac{d_M}{\delta}\EE{M}} \ge 1-\delta.
\end{equation}
Take $M = \sum_{t=0}^{T-1} z_tz_t^\top$:
\begin{equation*}
    \PP{\sum_{t=0}^{T-1} z_tz_t^\top \preceq \frac{n+d}{\delta}\sum_{t=0}^{T-1} \E z_tz_t^\top} \ge 1-\delta.
\end{equation*}
On the other hand, we have:
\begin{equation*}
     \sum_{t=0}^{T-1} \E z_tz_t^\top \preceq \sum_{t=0}^{T-1}\E \norm{z_t}^2 I_{n+d} \precsim T\log^2(T) I_{n+d}.
\end{equation*}
As a result, we can take $\bar\Gamma \eqsim  \log^2(T) I_{n+d}/\delta$. 

By Theorem 2.4 of \citet{simchowitz2018learning}, the lower bound condition $\PP{\sum_{t=0}^{T-1} z_tz_t^\top \succeq T\Gamlow} \ge 1- \delta$ in \cref{eq: simchowitz conclusion} could be replaced by the $(k, \Gamlow, p)$-BMSB condition on $\{z_t\}_{t \ge 1}$.
\begin{defn}[BMSB condition from \citep{simchowitz2018learning}]\label{def:bmsb}  
	    
	     Given an $\{\calF_t\}_{t \ge 1}$-adapted random process $\{x_t\}_{t \ge 1}$ taking values in $\R^d$, we say that it satisfies the $(k,\Gamlow,p)$-matrix block martingale small-ball (BMSB) condition for $\Gamlow \succ 0$ if, for any unit vector $w$ and $j \ge 0$, $\frac{1}{k}\sum_{i=1}^{k} \P( |\langle w, x_{j+i}\rangle | \ge \sqrt{w^\top \Gamlow w} | \calF_{j}) \ge p \as$ 
    \end{defn}

The BMSB condition ensures a lower bound on the independent randomness in each entry of an adapted sequence $\{x_t\}_{t \ge 1}$ given all past history. 
%
By Lemma 15 of \citet{wang2020exact}, the process $\{z_t\}_{t=0}^{T-1}$ satisfies the 
\begin{equation*}
    (k, \Gamlow, p) = \left(1, \sigma_\eta^2 T^{-1/2} \min\left(\frac12, \frac{\sigma_\varepsilon^2}{2\sigma_\varepsilon^2C_K^2 + \sigma_\eta^2}\right)I_{n+d},
    \frac{3}{10}\right) \text{BMSB condition.}
\end{equation*}
Thus $\bar\Gamma\Gamlow^{-1} \eqsim \sqrt{T}\log^2(T)I_{n+d}$
Then \cref{eq: simchowitz conclusion} becomes 
\begin{align*}
\P\left[\lnorm{\hat\Theta_T-\Theta} \gtrsim \sqrt{\frac{1 + 
\log \det (\log^2(T)\sqrt{T}I_{n+d}/\delta) + \log\left(\frac{1}{\delta}\right)}
{T^{1/2}}} 
\right] \le  \delta.
\end{align*}
Here $\log^2(T)\sqrt{T}$ is dominated by $T$ when $T$ is large enough. Also we can hide the constant $1$ because $\delta < 1/2$ which implies $\log(1/\delta) \gtrsim 1$. Thus,
\begin{align*}
\P\left[\lnorm{\hat\Theta_T-\Theta} \gtrsim \sqrt{\frac{ \log(T) + \log\left(\frac{1}{\delta}\right)}
{T^{1/2}}} 
\right] \le  \delta.
\end{align*}
The condition for the above equation to hold is \cref{eq: T condition simchowitz}:
\begin{align*}
    T &\gtrsim  \log \left(\frac{1}{\delta}\right) + 1 +  \log \det (\Gambar \Gamlow^{-1}) \\
    & \quad \text{(because $\delta < 1/2$ we can hide $1$)} \\
    &\gtrsim  \log \left(\frac{1}{\delta}\right) +  \log \det (\Gambar \Gamlow^{-1})
    ,
\end{align*}
which by $\bar\Gamma\Gamlow^{-1} \eqsim \sqrt{T}\log^2(T)I_{n+d}$ becomes
\begin{equation*}
    T \gtrsim \log(1/\delta) + \log(T).
\end{equation*}
This condition can be simplified to $T \gtrsim \log(1/\delta)$ because $T$ dominates $\log(T)$.
\end{proof}

\subsection{Proof of Lemma \ref{lemma: better upper and lower bounds of GT}}
\label{Proof of lemma: better upper and lower bounds of GT}
\begin{lemma*}
\cref{alg:myAlg} applied to a system described by \cref{eq:system eq} under Assumption \ref{asm:InitialStableCondition} satisfies, for any $0 < \delta < 1/2$ and $T \gtrsim \log^3(1/\delta)$, with probability at least $1-\delta$:
\begin{align}
\label{eq: GT upper and lower bounds*}
\begin{split}
    &
    T\Gamlow :=
    \begin{bmatrix}
            I_n \\
            K
        \end{bmatrix} T 
        \begin{bmatrix}
            I_n \\
            K
        \end{bmatrix}^\top + 
        \begin{bmatrix}
-K^\top \\
I_d
\end{bmatrix} T^{1/2}
\begin{bmatrix}
-K^\top \\
I_d
\end{bmatrix}^\top \\
& \qquad \precsim 
\sum_{t=0}^{T-1} z_tz_t^\top
\precsim \left(
    \frac1\delta 
    \begin{bmatrix}
I_n \\
K
\end{bmatrix} T 
\begin{bmatrix}
I_n \\
K
\end{bmatrix}^\top + 
\begin{bmatrix}
-K^\top \\
I_d
\end{bmatrix} \maxEig{ \sum_{t=0}^{T-1}\Delta_t\Delta_t^\top}
\begin{bmatrix}
-K^\top \\
I_d
\end{bmatrix}^\top 
\right)
:= T\Gambar,
\end{split}
\end{align}
\end{lemma*}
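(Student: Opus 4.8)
The plan is to split $G_T=\sum_{t=0}^{T-1}z_tz_t^\top$ into a dominant piece and a remainder via $z_t=\begin{bmatrix}I_n\\ K\end{bmatrix}x_t+\begin{bmatrix}0\\ \Delta_t\end{bmatrix}$ (using $u_t=\Kh_tx_t+\eta_t=Kx_t+\Delta_t$), which is exactly the decomposition \cref{eq: end of equation}. The first sub-goal is to pin down the dominant piece, i.e.\ to prove \cref{eq: partial bound of GT}, which amounts to $\sum_{t=0}^{T-1}x_tx_t^\top\eqsim TI_n$ up to a $1/\delta$ factor on the upper side. For the upper side I would apply the trace-plus-Markov argument already used in \cref{eq: matrix M upper bound} to $M=\sum_{t=0}^{T-1}x_tx_t^\top$, but intersected with the favorable event $E_\delta$ of \cref{lemma: favorable event} together with a uniform fourth-moment bound $\EE{\norm{x_t}^4 1_{E_\delta}}\lesssim 1$, so that $\sum_t\EE{\norm{x_t}^2 1_{E_\delta}}\lesssim T$ instead of $T\log^2T$; the first $t\lesssim\log^2(1/\delta)$ indices contribute only $\log^2(1/\delta)$ via \cref{eq: norm xt 4 bound}, which is absorbed once $T\gtrsim\log^3(1/\delta)$. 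The lower side is the key new ingredient: since $x_{t+1}=(A+B\Kh_t)x_t+B\eta_t+\varepsilon_t$ with $\varepsilon_t$ independent of the past and of covariance $\sigma_\varepsilon^2I_n$, the adapted process $\{x_t\}$ satisfies a $(1,cI_n,p)$-BMSB condition (\cref{def:bmsb}) with constants $c,p$ depending only on $\sigma_\varepsilon$, so the matrix small-ball machinery behind Theorem~2.4 of \citet{simchowitz2018learning} gives $\sum_{t=0}^{T-1}x_tx_t^\top\succsim TI_n$ with high probability; conjugating by $\begin{bmatrix}I_n\\ K\end{bmatrix}$ yields the left inequality of \cref{eq: partial bound of GT}.

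For the lower bound in the statement I would combine the left inequality of \cref{eq: partial bound of GT} with Lemma~34 of \citet{wang2020exact}, which gives $G_T\succsim T^{1/2}I_{n+d}$ with high probability; adding the two estimates and using $\begin{bmatrix}-K^\top\\ I_d\end{bmatrix}T^{1/2}\begin{bmatrix}-K^\top\\ I_d\end{bmatrix}^\top\precsim T^{1/2}I_{n+d}$ gives $2G_T\succsim T\Gamlow$, hence $G_T\succsim T\Gamlow$.

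For the upper bound I would exploit that $\mathrm{span}\,\begin{bmatrix}I_n\\ K\end{bmatrix}$ and $\mathrm{span}\,\begin{bmatrix}-K^\top\\ I_d\end{bmatrix}$ are orthogonal complements in $\real^{n+d}$, write an arbitrary $\xi=\xi_1+\xi_2$ accordingly, and bound $\xi^\top G_T\xi\le 2\xi_1^\top G_T\xi_1+2\xi_2^\top G_T\xi_2$. For the $\xi_1$ term I would invoke the high-probability bound $G_T\precsim\frac{1}{\delta}TI_{n+d}$ (proved by the same trace-plus-Markov argument on the favorable event), so $\xi_1^\top G_T\xi_1\lesssim\frac{1}{\delta}T\norm{\xi_1}^2$. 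For the $\xi_2$ term, write $\xi_2=\begin{bmatrix}-K^\top\\ I_d\end{bmatrix}v$; because such a $\xi_2$ annihilates $\begin{bmatrix}I_n\\ K\end{bmatrix}x_t$, we get $\xi_2^\top z_t=v^\top\Delta_t$, hence $\xi_2^\top G_T\xi_2=v^\top\big(\textstyle\sum_t\Delta_t\Delta_t^\top\big)v\le\norm{v}^2\maxEig{\sum_t\Delta_t\Delta_t^\top}\le\norm{\xi_2}^2\maxEig{\sum_t\Delta_t\Delta_t^\top}$, using $\norm{v}\le\norm{\xi_2}$. Finally, since $\xi_1^\top\begin{bmatrix}I_n\\ K\end{bmatrix}T\begin{bmatrix}I_n\\ K\end{bmatrix}^\top\xi_1\eqsim T\norm{\xi_1}^2$, since $\xi_2^\top\begin{bmatrix}-K^\top\\ I_d\end{bmatrix}\maxEig{\sum_t\Delta_t\Delta_t^\top}\begin{bmatrix}-K^\top\\ I_d\end{bmatrix}^\top\xi_2\eqsim\norm{\xi_2}^2\maxEig{\sum_t\Delta_t\Delta_t^\top}$, and since the two quadratic forms constituting $T\Gambar$ decouple over $\xi_1,\xi_2$ by orthogonality, we conclude $\xi^\top G_T\xi\lesssim\xi^\top(T\Gambar)\xi$; a union bound over the $O(1)$ high-probability events in play finishes the argument and accounts for the $T\gtrsim\log^3(1/\delta)$ requirement.

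I expect the main obstacle to be the lower estimate $\sum_{t=0}^{T-1}x_tx_t^\top\succsim TI_n$: one must verify that the BMSB / matrix small-ball apparatus of \citet{simchowitz2018learning} applies to $\{x_t\}$ uniformly over the algorithm's adaptive, occasionally reset choices of $\Kh_t$, and one must carefully track how the several high-probability events (the favorable event $E_\delta$, the small-ball lower bound, the two trace-based upper bounds, and the $T^{1/2}I_{n+d}$ lower bound of \citet{wang2020exact}) combine to yield the stated sample-size threshold. Everything else is bookkeeping with the orthogonal-subspace decomposition and the moment bounds already developed in \cref{lemma: delta theta bound} and its proof.
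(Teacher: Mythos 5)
Your proposal is correct and follows essentially the same route as the paper: the same decomposition of $G_T$ into the $\begin{bmatrix}I_n\\K\end{bmatrix}\sum_t x_tx_t^\top\begin{bmatrix}I_n\\K\end{bmatrix}^\top$ part plus a $\Delta_t$ remainder, the same BMSB argument on $\{x_t\}$ (valid for any adapted $\Kh_t$ since $x_{t+1}\mid\mathcal{F}_t$ is Gaussian with covariance $\sigma_\varepsilon^2 I_n$) plus trace-and-Markov on the favorable event for \cref{eq: partial bound of GT}, the same combination with Lemma~34 of \citet{wang2020exact} for the lower bound, and the same orthogonal splitting $\xi=\xi_1+\xi_2$ with $\xi_2^\top G_T\xi_2=v^\top\bigl(\sum_t\Delta_t\Delta_t^\top\bigr)v$ for the upper bound. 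The obstacle you flag (uniformity of the small-ball argument over the adaptive $\Kh_t$) is resolved exactly as you suggest, so no gap remains.
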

where $\Delta_t := (\Kh_t - K)x_t + \eta_t$.
\begin{proof}
Consider the matrix
\begin{equation}
\label{eq: GT definition}
\begin{split}
    G_T &:= \sum_{t=0}^{T-1} z_tz_t^\top \\
    &=\sum_{t=0}^{T-1} 
        \begin{bmatrix}
        x_t \\
        u_t
        \end{bmatrix}
        \begin{bmatrix}
        x_t \\
        u_t
        \end{bmatrix}^\top \\
    &=
        \sum_{t=0}^{T-1} 
        \begin{bmatrix}
        x_t \\
        Kx_t + (\Kh_t-K)x_t + \eta_t
        \end{bmatrix}
        \begin{bmatrix}
        x_t \\
        Kx_t + (\Kh_t-K)x_t + \eta_t
        \end{bmatrix}^\top.
\end{split}
\end{equation}
Let
$$\Delta_t = (\Kh_t-K)x_t + \eta_t,$$
and then the previous equation can be written as 
\begin{align*}
G_T = \sum_{t=0}^{T-1} 
z_t z_t^\top = 
\begin{bmatrix}
I \\
K
\end{bmatrix}
\sum_{t=0}^{T-1} 
x_t x_t^\top
\begin{bmatrix}
I \\
K
\end{bmatrix}^\top +
    \sum_{t=0}^{T-1}
    \begin{bmatrix}
    0_n &  x_t \Delta_t^\top\\
    \Delta_tx_t^\top  & \Delta_t\Delta_t^\top + K x_t \Delta_t^\top + \Delta_tx_t^\top K^\top  \\
    \end{bmatrix}.
\end{align*}

We consider the dominating part $\begin{bmatrix}
I \\
K
\end{bmatrix}
\sum_{t=0}^{T-1} 
x_t x_t^\top
\begin{bmatrix}
I \\
K
\end{bmatrix}^\top $
(smallest eigenvalue scales with $T$) and the remainder part 
$\sum_{t=0}^{T-1}
    \begin{bmatrix}
    0_n &  x_t \Delta_t^\top\\
    \Delta_tx_t^\top  & \Delta_t\Delta_t^\top + K x_t \Delta_t^\top + \Delta_tx_t^\top K^\top  \\
    \end{bmatrix}
$ separately. This separation is useful because the dominating part purely lies in the subspace spanned by $\begin{bmatrix}
I \\
K
\end{bmatrix}$, and the remainder part is of smaller order than $T$. For the dominating part we have 

\begin{lemma}
\label{lemma: large eigenvalue part}
\cref{alg:myAlg} applied to a system described by \cref{eq:system eq} under Assumption \ref{asm:InitialStableCondition} satisfies, for any $0 < \delta < 1/2$ and $T \gtrsim \log^3(1/\delta)$, with probability at least $1-\delta$:
\begin{equation}
\label{eq: upper lower bound of the main part in gram matrix*}
\begin{bmatrix}
I \\
K
\end{bmatrix} T 
\begin{bmatrix}
I \\
K
\end{bmatrix}^\top
\preceq
\begin{bmatrix}
I \\
K
\end{bmatrix}
\sum_{t=0}^{T-1} 
x_t x_t^\top
\begin{bmatrix}
I \\
K
\end{bmatrix}^\top \preceq 
1/\delta
\begin{bmatrix}
I \\
K
\end{bmatrix} T 
\begin{bmatrix}
I \\
K
\end{bmatrix}^\top.
\end{equation} 

\end{lemma}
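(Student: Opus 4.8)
The plan is to reduce the matrix inequality to a two-sided bound on $\sum_{t=0}^{T-1} x_t x_t^\top$ and prove the two sides by separate arguments. For the reduction, note that the map sending a symmetric $n\times n$ matrix $M$ to $\begin{bmatrix}I\\K\end{bmatrix}M\begin{bmatrix}I\\K\end{bmatrix}^\top$ is monotone for the L\"owner order (its value at a vector $v$ depends on $M$ only through the quadratic form $u^\top M u$ at $u=\begin{bmatrix}I\\K\end{bmatrix}^\top v$), so it suffices to show that, with probability at least $1-\delta$ and for $T\gtrsim\log^3(1/\delta)$, one has $T\, I_n \precsim \sum_{t=0}^{T-1} x_t x_t^\top \precsim \delta^{-1}\, T\, I_n$. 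I would establish the upper and lower bounds separately and merge them by a union bound, shrinking $\delta$ by a constant at the end.

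For the upper bound I would reuse the elementary random-matrix Markov inequality already derived in the proof of \cref{lemma: delta theta bound} (see \cref{eq: matrix M upper bound}): any random positive semidefinite $M$ with $\E M\succ 0$ satisfies $\P(M\preceq \frac{n}{\delta}\,\E M)\ge 1-\delta$. Taking $M=\sum_{t=0}^{T-1} x_t x_t^\top$ reduces the task to proving $\sum_{t=0}^{T-1}\E\|x_t\|^2\lesssim T$. This is exactly where we must improve on \cref{lemma: delta theta bound}, since the crude estimate $\E\|x_t\|^2\lesssim\log^2 t$ from Eq.~(104) of \citet{wang2020exact} only gives $T\log^2 T$. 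Instead I would condition on the favorable event $E_\delta$ from \cref{lemma: favorable event}, on which $A+B\Kh_t$ is a uniform contraction in a fixed norm adapted to $A+BK$ for all $t\gtrsim\log^2(1/\delta)$, so that $\E[\|x_t\|^2 1_{E_\delta}]\lesssim 1$ once the transient has decayed---the same state-moment control that underlies \cref{lemma: lambda max sum of delta deltaT bound}. Splitting the sum at $t_0$, a constant multiple of $\log^2(1/\delta)$, the tail contributes $\lesssim T$ while the head contributes $\lesssim\sum_{t<t_0}\log^2 t\lesssim\log^3(1/\delta)\lesssim T$, which is precisely where the hypothesis $T\gtrsim\log^3(1/\delta)$ is used. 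Applying the Markov bound to $M\, 1_{E_\delta}$ together with a union bound against $\P(E_\delta^c)\le\delta$ then gives $\sum_{t=0}^{T-1} x_t x_t^\top\precsim\delta^{-1}T I_n$ with probability $1-2\delta$.

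For the lower bound the favorable event is not needed; the key point is that each step $x_{t+1}=(A+B\Kh_t)x_t+B\eta_t+\varepsilon_t$ injects a fresh, independent, isotropic Gaussian increment $\varepsilon_t\sim\calN(0,\sigma_\varepsilon^2 I_n)$. Conditioning on the history up to time $t$, $\langle w,x_{t+1}\rangle$ is Gaussian with variance at least $\sigma_\varepsilon^2\|w\|^2$, so Gaussian anti-concentration (the probability that a Gaussian of variance $\ge\sigma_\varepsilon^2\|w\|^2$ lands outside a ball of radius $\sigma_\varepsilon\|w\|$ about the origin is at least $\P(|\calN(0,1)|\ge 1)=:p>0$, uniformly in the mean) shows that $\{x_t\}$ satisfies the $(1,\sigma_\varepsilon^2 I_n,p)$-BMSB condition of \cref{def:bmsb}. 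The matrix small-ball lower bound of \citet{simchowitz2018learning}---the same tool invoked for the full Gram matrix via Lemma~15 of \citet{wang2020exact}---then yields $\sum_{t=0}^{T-1} x_t x_t^\top\succsim T I_n$ with probability $\ge 1-\delta$ as soon as $T$ exceeds a constant multiple of $\log(1/\delta)$ (up to lower-order terms absorbed by $T\gtrsim\log^3(1/\delta)$). Combining the two bounds and relabelling $\delta$ gives \cref{eq: upper lower bound of the main part in gram matrix*}.

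I expect the upper bound to be the main obstacle, and specifically the step that upgrades $\sum_t\E\|x_t\|^2$ from order $T\,\text{polylog}(T)$ to order $T$: this is what forces the detour through the favorable event and the careful accounting of the $O(\log^2(1/\delta))$-length transient, and it is responsible for the $T\gtrsim\log^3(1/\delta)$ requirement. The lower bound is by comparison routine, because $\varepsilon_t$ is isotropic of constant order uniformly in the random, time-varying gain $\Kh_t$, so the BMSB constant does not degrade with $T$.
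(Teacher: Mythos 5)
Your proposal is correct and follows essentially the same route as the paper's proof: the same L\"owner-monotone reduction to a two-sided bound on $\sum_{t}x_tx_t^\top$, the same matrix-Markov upper bound with the transient/stationary split on the favorable event $E_\delta$ (which is exactly where $T\gtrsim\log^3(1/\delta)$ enters), and the same $(1,\sigma_\varepsilon^2 I_n,p)$-BMSB small-ball lower bound. The only detail to make explicit when writing it out is that the small-ball lower bound of \citet{simchowitz2018learning} is stated relative to an a priori high-probability upper bound $\Gambar$ (the failure probability carries a $\log\det(\Gambar\Gamlow^{-1})$ term), so the upper-bound half must be fed into the lower-bound half rather than the two being fully independent.
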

The proof can be found at \cref{subsection: proof of lemma: large eigenvalue part}. By Lemma 34 of \citet{wang2020exact}, the process $(z_t)_{t = 0}^{T-1}$ satisfies the $(1, \sigma_\eta^2 T^{-1/2}\min\left(\frac12, \frac{\sigma_\varepsilon^2}{2\sigma_\varepsilon^2C_K^2 + \sigma_\eta^2}\right)I_{n+d}, \frac3{10})$-BMSB condition, which guarantees us a lower bound $G_T = \sum_{t=0}^{T-1}z_tz_t^\top \gtrsim T^{1/2}I_{n+d}$. Also by the left hand side of Lemma \ref{lemma: large eigenvalue part},
\begin{equation*}
    G_T \succeq 
    \begin{bmatrix}
    I \\
    K
    \end{bmatrix}
    \sum_{t=0}^{T-1} 
    x_t x_t^\top
    \begin{bmatrix}
    I \\
    K
    \end{bmatrix}^\top
    \succeq 
    \begin{bmatrix}
    I \\
    K
    \end{bmatrix} T 
    \begin{bmatrix}
    I \\
    K
    \end{bmatrix}^\top.
\end{equation*}
Combining these two equations we have:

\begin{lemma}[Lower bound of $G_T$]
\label{lemma: lower bound of GT}
\cref{alg:myAlg} applied to a system described by \cref{eq:system eq} under Assumption \ref{asm:InitialStableCondition} satisfies, when $0 < \delta < 1/2$, for any $T \gtrsim \log^3(1/\delta)$, with probability at least $1-\delta$:
\begin{equation}
\label{eq: G_T lower bound}
    G_T  \succsim 
        \begin{bmatrix}
            I_n \\
            K
        \end{bmatrix} T
        \begin{bmatrix}
            I_n \\
            K
        \end{bmatrix}^\top + 
        T^{1/2} I_{n+d}.
\end{equation}
\end{lemma}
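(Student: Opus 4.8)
The plan is to obtain the claimed two-term lower bound on $G_T$ by \emph{averaging} two separate high-probability lower bounds: an isotropic bound of order $T^{1/2}$ coming from the matrix block martingale small-ball (BMSB) machinery, and a rank-$n$ bound of order $T$ supported on the column span of $\begin{bmatrix} I_n \\ K \end{bmatrix}$ coming from Lemma~\ref{lemma: large eigenvalue part}. The BMSB bound is what controls $G_T$ on the $d$-dimensional orthogonal complement $\operatorname{span}\begin{bmatrix} -K^\top \\ I_d \end{bmatrix}$, on which the term $\begin{bmatrix} I_n \\ K \end{bmatrix} T \begin{bmatrix} I_n \\ K \end{bmatrix}^\top$ vanishes, while Lemma~\ref{lemma: large eigenvalue part} supplies the larger, order-$T$ contribution on the span of $\begin{bmatrix} I_n \\ K \end{bmatrix}$ itself; adding the two and dividing by two recovers both pieces simultaneously.

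Concretely, I would proceed as follows. First, record the BMSB lower bound: by Lemma 34 of \citet{wang2020exact} the process $(z_t)_{t=0}^{T-1}$ satisfies the $(1,\, \sigma_\eta^2 T^{-1/2}\min\left(\tfrac{1}{2}, \tfrac{\sigma_\varepsilon^2}{2\sigma_\varepsilon^2 C_K^2 + \sigma_\eta^2}\right) I_{n+d},\, \tfrac{3}{10})$-BMSB condition of Definition~\ref{def:bmsb}, and the lower-bound half of the Gram-matrix estimate of \citet{simchowitz2018learning} (the same one underlying \cref{eq: simchowitz conclusion}) then produces a constant $c_1 > 0$ with $\PP{G_T \succeq c_1 T^{1/2} I_{n+d}} \ge 1-\delta$ for $T \gtrsim \log(1/\delta)$. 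Second, invoke the left inequality of Lemma~\ref{lemma: large eigenvalue part}, which for $T \gtrsim \log^3(1/\delta)$ holds with probability at least $1-\delta$ and amounts to $\sum_{t=0}^{T-1} x_t x_t^\top \succeq T I_n$, hence gives $\begin{bmatrix} I_n \\ K \end{bmatrix}(\sum_{t=0}^{T-1} x_t x_t^\top)\begin{bmatrix} I_n \\ K \end{bmatrix}^\top \succeq \begin{bmatrix} I_n \\ K \end{bmatrix} T \begin{bmatrix} I_n \\ K \end{bmatrix}^\top$. Third, lift this to a bound on $G_T$ itself: writing $z_t = \begin{bmatrix} I_n \\ K \end{bmatrix} x_t + \begin{bmatrix} 0 \\ I_d \end{bmatrix}\Delta_t$, the decomposition \cref{eq: end of equation} expresses $G_T$ as $\begin{bmatrix} I_n \\ K \end{bmatrix}(\sum_{t=0}^{T-1} x_t x_t^\top)\begin{bmatrix} I_n \\ K \end{bmatrix}^\top$ plus a remainder assembled from $\sum_{t=0}^{T-1} x_t\Delta_t^\top$ and $\sum_{t=0}^{T-1}\Delta_t\Delta_t^\top$ whose operator norm is of strictly smaller order than $T$ on the event in question (using the right inequality of Lemma~\ref{lemma: large eigenvalue part}, the bound on $\maxEig{\sum_{t=0}^{T-1}\Delta_t\Delta_t^\top}$ from Lemma~\ref{lemma: lambda max sum of delta deltaT bound}, and the moment bounds on $x_t$), so that $G_T \succsim \begin{bmatrix} I_n \\ K \end{bmatrix} T \begin{bmatrix} I_n \\ K \end{bmatrix}^\top$ there. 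Finally, I would intersect the two events (replacing $\delta$ by $\delta/2$, which changes $\log^3(1/\delta)$ only by a constant) and average the two positive semi-definite inequalities, $G_T = \tfrac{1}{2} G_T + \tfrac{1}{2} G_T \succeq \tfrac{1}{2}\begin{bmatrix} I_n \\ K \end{bmatrix} T \begin{bmatrix} I_n \\ K \end{bmatrix}^\top + \tfrac{c_1}{2} T^{1/2} I_{n+d}$, which is the asserted bound, with binding sample-size requirement $T \gtrsim \log^3(1/\delta)$ inherited from Lemma~\ref{lemma: large eigenvalue part}.

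The delicate point is the third step. Because $G_T$ is not block-diagonal with respect to the orthogonal splitting $\real^{n+d} = \operatorname{span}\begin{bmatrix} I_n \\ K \end{bmatrix} \oplus \operatorname{span}\begin{bmatrix} -K^\top \\ I_d \end{bmatrix}$, the remainder in \cref{eq: end of equation} is not positive semi-definite and cannot be discarded outright; the argument goes through only because Lemma~\ref{lemma: large eigenvalue part} delivers the \emph{full} factor $T$ on the first subspace (so subtracting a remainder of strictly smaller order still leaves an order-$T$ bound there), the cross-terms coupling the two subspaces are controlled by Cauchy--Schwarz against that same smaller-order budget, and the BMSB bound carries the $T^{1/2}$ contribution on the complementary subspace. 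Everything else --- the elementary positive-semidefinite averaging identity $G \succeq A,\, G\succeq B \Rightarrow 2G \succeq A+B$, and the union bound over two high-probability events --- is routine.
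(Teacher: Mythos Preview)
Your overall strategy matches the paper's exactly: derive the isotropic bound $G_T \succsim T^{1/2} I_{n+d}$ from the BMSB condition on $(z_t)$ (Lemma~34 of \citet{wang2020exact}), derive the rank-$n$ bound from Lemma~\ref{lemma: large eigenvalue part}, and then combine the two via the averaging identity $2G_T = G_T + G_T \succsim \begin{bmatrix}I_n\\K\end{bmatrix}T\begin{bmatrix}I_n\\K\end{bmatrix}^\top + T^{1/2}I_{n+d}$.

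Where you diverge is your third step. The paper does not bound the operator norm of the remainder in \cref{eq: end of equation} and does not invoke Lemma~\ref{lemma: lambda max sum of delta deltaT bound} at all here; it simply asserts, ``by definition \cref{eq: GT definition},'' that $G_T - \begin{bmatrix}I\\K\end{bmatrix}\bigl(\sum_{t}x_t x_t^\top\bigr)\begin{bmatrix}I\\K\end{bmatrix}^\top \succeq 0$, and then applies the left inequality of Lemma~\ref{lemma: large eigenvalue part} directly to conclude $G_T \succsim \begin{bmatrix}I\\K\end{bmatrix}T\begin{bmatrix}I\\K\end{bmatrix}^\top$. Your instinct that this is delicate is well placed—that difference matrix has zero $(1,1)$-block but generically nonzero off-diagonal block $\sum_t x_t \Delta_t^\top$, hence is \emph{not} positive semidefinite, so the paper's one-line justification is not literally correct as stated. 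But your proposed patch does not close the gap either: from $\begin{bmatrix}I\\K\end{bmatrix}\bigl(\sum_t x_t x_t^\top\bigr)\begin{bmatrix}I\\K\end{bmatrix}^\top \succsim \begin{bmatrix}I\\K\end{bmatrix}T\begin{bmatrix}I\\K\end{bmatrix}^\top$ together with a remainder of operator norm $o(T)$ you only obtain $G_T \succsim \begin{bmatrix}I\\K\end{bmatrix}T\begin{bmatrix}I\\K\end{bmatrix}^\top - o(T)I_{n+d}$, which is indefinite on $\operatorname{span}\begin{bmatrix}-K^\top\\I_d\end{bmatrix}$, so the stand-alone inequality $G_T \succsim \begin{bmatrix}I\\K\end{bmatrix}T\begin{bmatrix}I\\K\end{bmatrix}^\top$ does not follow from that route. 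In short, your plan and the paper's coincide except that you attempt to justify (and add extra machinery for) a step the paper simply takes for granted; neither version, as written, makes that lifting step fully rigorous.
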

See the proof at \cref{subsection: Proof of lemma: lower bound of GT}. At first glance this seems wrong because the right hand side of \cref{eq: G_T lower bound} does not have $\delta$. Actually, the role of $\delta$ is present in the constraint $T \gtrsim \log^3(1/\delta)$. The result is not surprising because as $T$ grows larger it becomes exponentially unlikely that $G_T$ can be smaller than, for example, $\frac12 \E{G_T}$.

\begin{lemma}[Upper bound of $G_T$]
\label{lemma: upper bound of $G_T$}
\cref{alg:myAlg} applied to a system described by \cref{eq:system eq} under Assumption \ref{asm:InitialStableCondition} satisfies, when $0 < \delta < 1/2$, for any $T \gtrsim \log^3(1/\delta)$, with probability at least $1-\delta$:
\begin{equation}
\label{eq: G_T upper bound}
    G_T \precsim \left(
    \frac1\delta 
    \begin{bmatrix}
I_n \\
K
\end{bmatrix} T
\begin{bmatrix}
I_n \\
K
\end{bmatrix}^\top + 
\begin{bmatrix}
-K^\top \\
I_d
\end{bmatrix} \maxEig{ \sum_{t=0}^{T-1}\Delta_t\Delta_t^\top}
\begin{bmatrix}
-K^\top \\
I_d
\end{bmatrix}^\top
\right).
\end{equation}
\end{lemma}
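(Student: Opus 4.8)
(plan)
The plan is to split $\Real^{n+d}$ into two orthogonal subspaces adapted to the right-hand side of \cref{eq: G_T upper bound} and to bound the quadratic form $\xi^\top G_T\xi$ on each. Set $\Phi := \begin{bmatrix} I_n \\ K \end{bmatrix}$ and $\Psi := \begin{bmatrix} -K^\top \\ I_d \end{bmatrix}$; since $\Phi^\top\Psi = -K^\top + K^\top = 0$, the column spaces $V_1 := \mathrm{col}(\Phi)$ and $V_2 := \mathrm{col}(\Psi)$ are orthogonal and of complementary dimensions $n$ and $d$, giving $\Real^{n+d} = V_1 \oplus V_2$. Write $R_T := \tfrac1\delta\,\Phi T\Phi^\top + \Psi\,\maxEig{\sum_{t=0}^{T-1}\Delta_t\Delta_t^\top}\,\Psi^\top$ for the right-hand side of \cref{eq: G_T upper bound}. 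Because $R_T$ is block-diagonal with respect to $V_1\oplus V_2$ (its first summand annihilates $V_2$, since $\Phi^\top\xi_2 = 0$ for $\xi_2 \in V_2$, and its second annihilates $V_1$), for any $\xi = \xi_1 + \xi_2$ with $\xi_i \in V_i$ we get $\xi^\top R_T\xi = \tfrac1\delta\,\xi_1^\top\Phi T\Phi^\top\xi_1 + \xi_2^\top\Psi\,\maxEig{\sum_{t=0}^{T-1}\Delta_t\Delta_t^\top}\,\Psi^\top\xi_2$; combined with $\xi^\top G_T\xi \le 2\,\xi_1^\top G_T\xi_1 + 2\,\xi_2^\top G_T\xi_2$ (Cauchy--Schwarz for the positive semidefinite form $G_T$), it suffices to dominate $\xi_1^\top G_T\xi_1$ and $\xi_2^\top G_T\xi_2$ by constant multiples of the respective diagonal blocks of $R_T$.

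The $V_2$ bound is exact and uses no randomness: writing $\xi_2 = \Psi b$ with $b \in \Real^d$ and using $u_t = \Kh_t x_t + \eta_t$ and $\Delta_t = (\Kh_t - K)x_t + \eta_t$, we have $\xi_2^\top z_t = b^\top(u_t - Kx_t) = b^\top\Delta_t$, so $\xi_2^\top G_T\xi_2 = \sum_{t=0}^{T-1}(b^\top\Delta_t)^2 = b^\top\big(\sum_{t=0}^{T-1}\Delta_t\Delta_t^\top\big)b \le \norm{b}^2\,\maxEig{\sum_{t=0}^{T-1}\Delta_t\Delta_t^\top}$, whereas $\xi_2^\top\Psi\,\maxEig{\sum_{t=0}^{T-1}\Delta_t\Delta_t^\top}\,\Psi^\top\xi_2 = \maxEig{\sum_{t=0}^{T-1}\Delta_t\Delta_t^\top}\,\norm{(I_d + KK^\top)b}^2 \ge \maxEig{\sum_{t=0}^{T-1}\Delta_t\Delta_t^\top}\,\norm{b}^2$ since $I_d + KK^\top \succeq I_d$.

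The $V_1$ bound rests on the auxiliary estimate $\P\big[\,G_T \precsim \tfrac1\delta T I_{n+d}\,\big] \ge 1 - \delta$ for $T \gtrsim \log^3(1/\delta)$. Granting it, for $\xi_1 = \Phi v$ we have $\xi_1^\top G_T\xi_1 \le \maxEig{G_T}\norm{\xi_1}^2 \lesssim \tfrac1\delta T\norm{\xi_1}^2$, and since $\norm{\xi_1}^2 = v^\top(I_n + K^\top K)v \le (1 + \norm{K}^2)\,v^\top(I_n + K^\top K)^2 v = (1 + \norm{K}^2)\,\xi_1^\top\Phi\Phi^\top\xi_1$, this is $\lesssim \tfrac1\delta\,\xi_1^\top\Phi T\Phi^\top\xi_1$, the required block. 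To prove the auxiliary estimate I would apply the matrix Markov inequality \cref{eq: matrix M upper bound} --- with $M = G_T$, or, to obtain the clean $T$ rather than $T\,\mathrm{polylog}(T)$ scaling, with $M = G_T\,1_{E_\delta}$ for the favorable event $E_\delta$ of \cref{lemma: favorable event} --- which reduces it to $\E[M] \precsim T I_{n+d}$, i.e.\ to $\sum_{t=0}^{T-1}\E\norm{z_t}^2 \lesssim T$ (or its $1_{E_\delta}$-truncated analogue). Putting the $V_1$ and $V_2$ bounds together and intersecting the relevant probability-$(1-\delta)$ events, with a constant rescaling of $\delta$, yields \cref{eq: G_T upper bound}.

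The main obstacle is precisely the estimate $\sum_{t=0}^{T-1}\E\norm{z_t}^2 \lesssim T$ \emph{with no polylog loss}: the elementary bound $\E\norm{z_t}^2 \lesssim \log^2(t)$ (Eq.~(104) of \citet{wang2020exact}) would only give $G_T \precsim \tfrac1\delta T\log^2(T) I$ and so reintroduce the factor we are removing. Eliminating it requires showing $\E\norm{z_t}^2 \lesssim 1$ for all but finitely many $t$, which uses the $O_p(T^{-1/4})$ convergence $\Kh_t \to K$ of \cref{theorem: theta bound without log} together with careful control of the exponentially rare events on which \cref{alg:myAlg} resets $\Kh_t = K_0$ or truncates a large state, so that $x_t$ (hence $z_t$) settles into a near-stationary regime of bounded second moment. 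Everything else is routine linear algebra and one Cauchy--Schwarz step.
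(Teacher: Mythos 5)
Your plan follows essentially the same route as the paper: the same orthogonal decomposition along $\begin{bmatrix} I_n \\ K\end{bmatrix}$ and $\begin{bmatrix} -K^\top \\ I_d\end{bmatrix}$, the same Cauchy--Schwarz split $\xi^\top G_T\xi \le 2\xi_1^\top G_T\xi_1 + 2\xi_2^\top G_T\xi_2$, the exact identity $\Psi^\top z_t = \Delta_t$ for the $V_2$ block (a slightly cleaner derivation than the paper's block-matrix bookkeeping, but the same fact), and the matrix-Markov bound $G_T \precsim \frac1\delta T I_{n+d}$ on the favorable event for the $V_1$ block. One caveat: in your last paragraph you propose to establish $\E\norm{z_t}^2\lesssim 1$ using the $O_p(T^{-1/4})$ convergence of \cref{theorem: theta bound without log}, which would be circular --- that theorem is proved downstream of this lemma; the non-circular ingredient (which you also correctly name) is the favorable event $E_\delta$ of \cref{lemma: favorable event}, built only on the polylog bound of \cref{lemma: delta theta bound}, on which $\norm{\Kh_t-K}\le\epsilon_0$ uniformly for $t\gtrsim\log^2(1/\delta)$ suffices to make the state transition products contract and give $\E[\norm{z_t}^21_{E_\delta}]\lesssim 1$ (this is also where the $T\gtrsim\log^3(1/\delta)$ condition enters, to absorb the early terms bounded only by $\log^2(t)$).
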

Proof can be found at \cref{subsection: Proof of lemma: upper bound of $G_T$}.

\cref{eq: GT upper and lower bounds*} is a direct corollary of \cref{eq: G_T lower bound} and \cref{eq: G_T upper bound}. 
\begin{equation*}
\begin{split}
&
    \begin{bmatrix}
I_n \\
K
\end{bmatrix} T
\begin{bmatrix}
I_n \\
K
\end{bmatrix}^\top + 
\begin{bmatrix}
-K^\top \\
I_d
\end{bmatrix} T^{1/2}
\begin{bmatrix}
-K^\top \\
I_d
\end{bmatrix}^\top
 \\
     & \qquad \precsim \begin{bmatrix}
            I_n \\
            K
        \end{bmatrix} T
        \begin{bmatrix}
            I_n \\
            K
        \end{bmatrix}^\top + 
        T^{1/2} I_{n+d}  \\
        & \qquad \precsim G_T \\
        & \qquad \precsim 
    \frac1\delta 
    \begin{bmatrix}
I_n \\
K
\end{bmatrix} T 
\begin{bmatrix}
I_n \\
K
\end{bmatrix}^\top + 
\begin{bmatrix}
-K^\top \\
I_d
\end{bmatrix} \maxEig{ \sum_{t=0}^{T-1}\Delta_t\Delta_t^\top}
\begin{bmatrix}
-K^\top \\
I_d
\end{bmatrix}^\top.
\end{split}
\end{equation*}
The first inequality is because
\begin{equation*}
\alpha^\top
\left(
    \norm{\begin{bmatrix}
-K^\top \\
I_d
\end{bmatrix}}^2I_{n+d} - 
\begin{bmatrix}
-K^\top \\
I_d
\end{bmatrix} 
\begin{bmatrix}
-K^\top \\
I_d
\end{bmatrix}^\top
\right)
\alpha
\ge 0.
\end{equation*}
for any $\alpha \in \real^{n+d}$.
\end{proof}

\subsection{Proof of Lemma \ref{lemma: lambda max sum of delta deltaT bound}}
\label{proof of lemma: lambda max sum of delta deltaT bound}

\begin{lemma*}
\cref{alg:myAlg} applied to a system described by \cref{eq:system eq} under Assumption \ref{asm:InitialStableCondition} satisfies, for any $0 < \delta < 1/2$ and $T \gtrsim \log^2(1/\delta)$,

\begin{equation*}
    \PP{ \maxEig{\sum_{t=0}^{T-1}\Delta_t\Delta_t^\top } \gtrsim
    1/\delta 
    \left(
    \sum_{t=1}^{T-1}\EE{t^{1/2}\norm{\Kh_t - K}^4 } +\log^2(1/\delta) +  T^{1/2} 
    \right)} \le 2\delta.
\end{equation*}

\end{lemma*}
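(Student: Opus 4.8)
The plan is to carry out the three-step strategy already outlined in \cref{sec: proof_est}: (i) restrict attention to a favorable event on which the estimation error is eventually small; (ii) control the (truncated) random matrix $\sum_t\Delta_t\Delta_t^\top$ by a Markov-type bound in expectation; and (iii) supply the fourth-moment bounds on the state $x_t$ that the expectation computation needs. Concretely, I would first invoke \cref{lemma: favorable event}, which gives an event $E_\delta$ with $\PP{E_\delta}\ge 1-\delta$ on which $\lnorm{\Kh_T-K}\le\epsilon_0$ for every $T\gtrsim\log^2(1/\delta)$. Since on $E_\delta$ the matrices $\sum_{t=0}^{T-1}\Delta_t\Delta_t^\top$ and $M:=\sum_{t=0}^{T-1}\Delta_t\Delta_t^\top 1_{E_\delta}$ coincide, it suffices to prove the claimed bound for $M$, after which a union bound with $E_\delta$ inflates the failure probability from $\delta$ to $2\delta$.

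Next I would apply the Markov-type matrix inequality \cref{eq: matrix M upper bound} to the random PSD matrix $M\in\real^{(n+d)\times(n+d)}$: with probability at least $1-\delta$ one has $M\preceq\tfrac{n+d}{\delta}\E M$, hence $\maxEig{M}\lesssim\tfrac1\delta\sum_{t=0}^{T-1}\EE{\norm{\Delta_t}^2 1_{E_\delta}}$. Bounding each summand via $\norm{\Delta_t}^2\lesssim\norm{\Kh_t-K}^2\norm{x_t}^2+\norm{\eta_t}^2$, the weighted AM--GM inequality $\norm{\Kh_t-K}^2\norm{x_t}^2\le\tfrac12\big(t^{1/2}\norm{\Kh_t-K}^4+t^{-1/2}\norm{x_t}^4\big)$, and $\EE{\norm{\eta_t}^2}=d\sigma_\eta^2 t^{-1/2}$, and using $\sum_{t=1}^{T-1}t^{-1/2}\lesssim T^{1/2}$ for the $\eta_t$ contribution (the $t\in\{0,1\}$ terms being $O(1)$ since $\norm{\Kh_t}\le C_K$), gives
\begin{align*}
\maxEig{M}\lesssim\frac1\delta\left(\sum_{t=1}^{T-1}\EE{t^{1/2}\norm{\Kh_t-K}^4}+\sum_{t=1}^{T-1}t^{-1/2}\EE{\norm{x_t}^4 1_{E_\delta}}+T^{1/2}\right).
\end{align*}

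It then remains to absorb the middle sum into $\tfrac1\delta(\log^2(1/\delta)+T^{1/2})$. For $t\gtrsim\log^2(1/\delta)$ I would use \cref{lemma: xt norm bound}, $\EE{\norm{x_t}^4 1_{E_\delta}}\lesssim 1$, so those terms sum to $\lesssim\sum_t t^{-1/2}\lesssim T^{1/2}$. For $t\lesssim\log^2(1/\delta)$ I would use the fourth-moment version \cref{eq: norm xt 4 bound} of Eq.~(104) of \citet{wang2020exact}, i.e.\ $\EE{\norm{x_t}^4}\lesssim\log^4(t)$, which bounds those terms by $\sum_{t\lesssim\log^2(1/\delta)}t^{-1/2}\log^4(t)$; since $\sum_{t\le N}t^{-1/2}\lesssim N^{1/2}$ this is $\lesssim\log(1/\delta)\cdot\mathrm{polylog}(\log(1/\delta))\lesssim\log^2(1/\delta)$. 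Combining with the previous display and intersecting the two probability-$(1-\delta)$ events yields the stated conclusion.

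The main obstacle is \cref{lemma: xt norm bound}: establishing $\EE{\norm{x_t}^4 1_{E_\delta}}\lesssim 1$ uniformly over $t\gtrsim\log^2(1/\delta)$. This is the step that genuinely exploits the structure of \cref{alg:myAlg} and the definition of $E_\delta$ — one has to argue that on $E_\delta$, and after the $\log^2(1/\delta)$ burn-in, the closed-loop maps $A+B\Kh_t$ are uniformly contractive even though $\Kh_t$ changes at every step and even though the resets in \cref{line:check} may occasionally be triggered, so that the fourth moment of $x_t$ stays bounded in $t$. Everything else — the AM--GM decoupling of $\norm{\Kh_t-K}$ from $\norm{x_t}$, the routine $\eta_t$ moment computation, and checking that the small-$t$ polylogarithmic contributions are dominated by $\log^2(1/\delta)$ — is bookkeeping.
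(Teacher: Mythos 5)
Your proposal is correct and follows essentially the same route as the paper's proof: the favorable event $E_\delta$ from \cref{lemma: favorable event}, a Markov-type bound relating the truncated matrix $\sum_t\Delta_t\Delta_t^\top 1_{E_\delta}$ to its expected trace, the AM--GM decoupling $\norm{\Kh_t-K}^2\norm{x_t}^2\le\tfrac12(t^{1/2}\norm{\Kh_t-K}^4+t^{-1/2}\norm{x_t}^4)$, and the split of the fourth-moment bounds on $x_t$ at $t\eqsim\log^2(1/\delta)$ using \cref{lemma: xt norm bound} and \cref{eq: norm xt 4 bound}. The only cosmetic difference is that the paper bounds $\maxEig{\cdot}$ by the trace and then applies scalar Markov, whereas you apply the matrix Markov inequality first; the two are interchangeable here.
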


\begin{proof}
Lemma \ref{lemma: favorable event} defines a high probability ``stable'' event, where when $T$ is large enough, the estimation errors are uniformly bounded by some constant.
See the proof at \cref{proof of lemma: favorable event}.

Lemma \ref{lemma: xt norm bound} establishes moment bounds in the ``stable'' event $E_\delta$.
\begin{lemma}
\label{lemma: xt norm bound}
\cref{alg:myAlg} applied to a system described by \cref{eq:system eq} under Assumption \ref{asm:InitialStableCondition} satisfies, for any $0 < \delta < 1/2$, $k \in \mathbb{N}$ and $T \gtrsim \log^2(1/\delta)$,
\begin{equation}
    \label{eq: improved Ext bound}
    \EE{\norm{x_{t}}^k1_{E_{\delta}}} \lesssim 1.
\end{equation}
\end{lemma}
See the proof at \cref{subsection: proof of lemma: xt norm bound}.

Notice that
\begin{equation*}
    \norm{\Delta_t} = \norm{(\Kh_t - K)x_t + \eta_t} \le \norm{\Kh_t - K}\norm{x_t} + \norm{\eta_t}.
\end{equation*}
Then, with probability $1-\delta$: 
\begin{align*}
    &\maxEig{\sum_{t=0}^{T-1}\Delta_t\Delta_t^\top 1_{E_{\delta}}}  \\
    & \quad \le \Trr{\sum_{t=0}^{T-1}\Delta_t\Delta_t^\top 1_{E_{\delta}}} \\
    & \quad \text{(by Markov inequality, this holds with probability } 1-\delta) \\
    & \quad \le 1/\delta \EE{\Trr{\sum_{t=0}^{T-1}\Delta_t\Delta_t^\top 1_{E_{\delta}}}} \\
    & \quad = 1/\delta \EE{\sum_{t=0}^{T-1}\norm{\Delta_t}^2 
    1_{E_{\delta}}}  \\
    & \quad \le 1/\delta 
    \sum_{t=0}^{T-1}\EE{\left(\norm{\Kh_t - K}\norm{x_t} + \norm{\eta_t}\right)^2 
    1_{E_{\delta}}}  \\
    & \quad \lesssim 1/\delta 
    \sum_{t=0}^{T-1}\EE{\left(\norm{\Kh_t - K}^2\norm{x_t}^2 + \norm{\eta_t}^2\right) 
    1_{E_{\delta}}}  \\
    & \quad \text{(Bound $t=0$ separately by constant 1 because $t^{-1/2}$ is not well defined)} \\
    & \quad \lesssim 1/\delta 
    \left(
    \sum_{t=1}^{T-1}\left(\EE{\left(t^{1/2}\norm{\Kh_t - K}^4 +  t^{-1/2}\norm{x_t}^4 \right) 
    1_{E_{\delta}}} + t^{-1/2} \right) + 1 
    \right) \\
    & \quad \lesssim 1/\delta 
    \left(
    \sum_{t=1}^{T-1}\EE{t^{1/2}\norm{\Kh_t - K}^4 +  t^{-1/2}\norm{x_t}^4 1_{E_{\delta}}
    }  + T^{1/2} 
    \right).
\end{align*}
By Lemma \ref{lemma: xt norm bound}, for $t \gtrsim \log^2(1/\delta)$, $\EE{\norm{x_t}^41_{E_{\delta}}} \lesssim 1$.
Thus
\begin{align}
\label{eq: deltat eigenvalue bound}
    &\maxEig{\sum_{t=0}^{T-1}\Delta_t\Delta_t^\top 1_{E_{\delta}}}  \nonumber \\
    & \quad \lesssim 1/\delta 
    \left(
    \sum_{t=1}^{T-1}\EE{t^{1/2}\norm{\Kh_t - K}^4} + 
    \sum_{t=1}^{C\log^2(1/\delta)}\EE{ t^{-1/2}\norm{x_t}^4} +
    \sum_{t = C\log^2(1/\delta)}^{T-1}t^{-1/2}  + 
    T^{1/2} 
    \right) \nonumber \\
    & \quad \text{(By \cref{eq: norm xt 4 bound},  $\E{\norm{x_t}^4} \lesssim \log^4(t)$)} \nonumber \\
    & \quad \lesssim 1/\delta 
    \left(
    \sum_{t=1}^{T-1}\EE{t^{1/2}\norm{\Kh_t - K}^4} + 
    \sum_{t=1}^{C\log^2(1/\delta)}t^{-1/2}\log^4(C\log^2(1/\delta)) +
    T^{1/2}  + 
    T^{1/2} 
    \right) \nonumber \\
    & \quad \lesssim 1/\delta 
    \left(
    \sum_{t=1}^{T-1}\EE{t^{1/2}\norm{\Kh_t - K}^4} + 
    (\log^2(1/\delta))^{1/2} * \left(2\log\log(1/\delta) + \log(C))\right)^4 +
    T^{1/2}  
    \right) \nonumber \\
    & \quad \lesssim 1/\delta 
    \left(
    \sum_{t=1}^{T-1}\EE{t^{1/2}\norm{\Kh_t - K}^4} + 
    \log^2(1/\delta) +
    T^{1/2}  
    \right). 
\end{align}

Finally we finish the proof by removing $1_{E_{\delta}}$ on the left hand side and changing the probability from $1-\delta$ to $1-2\delta$.
\end{proof}

\subsection{Proof of Lemma \ref{lemma: dtheta recursive}}
\label{subsection: proof of lemma: dtheta recursive}
\begin{lemma*}
\cref{alg:myAlg} applied to a system described by \cref{eq:system eq} under Assumption \ref{asm:InitialStableCondition} satisfies, for any $0 < \delta < 1/2$ and $T \gtrsim \log^3(1/\delta)$, 
\begin{align}
\label{eq: dtheta recursive eq*}
\begin{split}
    &\P\Bigg[
    T^{1/2}\lnorm{\hat\Theta_T-\Theta}^2 \gtrsim 
    \logg{T^{-1/2}\left(
    \sum_{t=1}^{T-1}\EE{t^{1/2}\norm{\Kh_t - K}^4} 
    \right) + 1 } + 
    \log\left(\frac{1}{\delta}\right)
    \Bigg] \le  3\delta.   
\end{split}
\end{align}
\end{lemma*}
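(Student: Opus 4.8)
The plan is to feed the self-bounding control on $\maxEig{\sum_{t=0}^{T-1}\Delta_t\Delta_t^\top}$ from \cref{lemma: lambda max sum of delta deltaT bound} into the estimation-error bound \cref{eq: theta bound with lambda max deltat} — which already incorporates the sharp Gram-matrix bounds of \cref{lemma: better upper and lower bounds of GT} — and then to simplify the resulting logarithm so that all $\delta$-dependent clutter collapses into the single additive term $\log(1/\delta)$.

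First I would combine the two bounds. Since $T \gtrsim \log^3(1/\delta)$, \cref{eq: theta bound with lambda max deltat} holds, so on an event of probability at least $1-\delta$,
\[
T^{1/2}\lnorm{\hat\Theta_T-\Theta}^2 \lesssim 1 + \logg{\maxEig{\sum_{t=0}^{T-1}\Delta_t\Delta_t^\top}\,T^{-1/2}} + \log(1/\delta),
\]
and on an event of probability at least $1-2\delta$, \cref{lemma: lambda max sum of delta deltaT bound} gives $\maxEig{\sum_{t=0}^{T-1}\Delta_t\Delta_t^\top}\lesssim \tfrac1\delta\bigl(\sum_{t=1}^{T-1}\EE{t^{1/2}\norm{\Kh_t-K}^4}+\log^2(1/\delta)+T^{1/2}\bigr)$. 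On the intersection of these events, whose failure probability is at most $3\delta$ by a union bound, I would divide the second estimate by $T^{1/2}$, substitute into the first using monotonicity of $\log$, and peel off both the hidden constant and the $1/\delta$ factor via $\log(xy)=\log x+\log y$, arriving at
\[
T^{1/2}\lnorm{\hat\Theta_T-\Theta}^2 \lesssim 1 + \log(1/\delta) + \logg{T^{-1/2}\sum_{t=1}^{T-1}\EE{t^{1/2}\norm{\Kh_t-K}^4} + T^{-1/2}\log^2(1/\delta) + 1}.
\]

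The remaining and only delicate step is to absorb the $T^{-1/2}\log^2(1/\delta)$ appearing inside the last logarithm. For this I would use the elementary inequality $\log(a+b+1)\le\log(a+1)+\log(b+1)$ for $a,b\ge 0$, taking $a=T^{-1/2}\sum_{t=1}^{T-1}\EE{t^{1/2}\norm{\Kh_t-K}^4}$ and $b=T^{-1/2}\log^2(1/\delta)$: the $a$-term reproduces exactly the quantity in \cref{eq: dtheta recursive eq*}, while for the $b$-term the hypothesis $T\gtrsim\log^3(1/\delta)$ gives $b\lesssim\log^{1/2}(1/\delta)$, so $\log(b+1)\le b\lesssim\log^{1/2}(1/\delta)\lesssim\log(1/\delta)$ by $\log(1+x)\le x$. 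The leftover additive constant is then absorbed because $\delta<1/2$ forces $\log(1/\delta)\gtrsim 1$, which completes the argument.

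I do not expect a genuine obstacle here: the whole argument is logarithmic bookkeeping layered on top of \cref{lemma: lambda max sum of delta deltaT bound} and \cref{eq: theta bound with lambda max deltat}, and the only subtlety is to verify that every implied constant generated by these manipulations stays independent of $T$ and $\delta$ — which is precisely why the statement is confined to the regime $T\gtrsim\log^3(1/\delta)$.
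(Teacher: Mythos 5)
Your proposal is correct and follows essentially the same route as the paper: a union bound combining \cref{eq: theta bound with lambda max deltat} with \cref{lemma: lambda max sum of delta deltaT bound}, followed by absorbing the $\log^2(1/\delta)$ term inside the logarithm into the additive $\log(1/\delta)$. The only cosmetic difference is the elementary inequality used for that absorption (you use $\log(a+b+1)\le\log(a+1)+\log(b+1)$ together with $T\gtrsim\log^3(1/\delta)$, while the paper uses $a+b\le 2ab$ for $a,b\ge1$); both are valid.
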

\begin{proof}

Now we can use the refined upper and lower bound in Lemma \ref{lemma: better upper and lower bounds of GT}.
\begin{equation*}
    T\Gamlow \eqsim \begin{bmatrix}
            I_n \\
            K
        \end{bmatrix} T 
        \begin{bmatrix}
            I_n \\
            K
        \end{bmatrix}^\top + 
        \begin{bmatrix}
-K^\top \\
I_d
\end{bmatrix} T^{1/2}
\begin{bmatrix}
-K^\top \\
I_d
\end{bmatrix}^\top
\end{equation*}
and 
\begin{equation*}
    T\Gambar \eqsim 
    \frac1\delta 
    \begin{bmatrix}
I_n \\
K
\end{bmatrix} T 
\begin{bmatrix}
I_n \\
K
\end{bmatrix}^\top + 
\begin{bmatrix}
-K^\top \\
I_d
\end{bmatrix} \maxEig{ \sum_{t=0}^{T-1}\Delta_t\Delta_t^\top}
\begin{bmatrix}
-K^\top \\
I_d
\end{bmatrix}^\top
.
\end{equation*}
Then 
\begin{equation*}
    T\lambda_{\min}(\Gamlow) \eqsim T^{1/2}.
\end{equation*}
Also 
\begin{equation*}
    \det(T\Gamlow) = 
    \det\left(
        \begin{bmatrix}
            I_n  & -K^T\\
            K & I_d
        \end{bmatrix} 
        \begin{bmatrix}
        T  I_n  & 0\\
        0 & T^{1/2}I_d
        \end{bmatrix}
        \begin{bmatrix}
            I_n  & -K^T\\
            K & I_d
        \end{bmatrix}^\top
    \right) \eqsim T^{n} \cdot \left(T^{1/2}\right)^{d}.
\end{equation*}
and similarly
\begin{equation*}
    \det(T\Gambar) = 
    \det\left(
        \begin{bmatrix}
            I_n  & -K^T\\
            K & I_d
        \end{bmatrix} 
        \begin{bmatrix}
        T  I_n  & 0\\
        0 & \maxEig{ \sum_{t=0}^{T-1}\Delta_t\Delta_t^\top}I_d
        \end{bmatrix}
        \begin{bmatrix}
            I_n  & -K^T\\
            K & I_d
        \end{bmatrix}^\top
    \right) \eqsim T^{n} \cdot \maxEig{\sum_{t=0}^{T-1}\Delta_t\Delta_t^\top}^{d}.
\end{equation*}
With this particular upper and lower bound choices $\Gamlow$ and $\Gambar$, \cref{eq: simchowitz conclusion}
can be written as 
\begin{align*}
\P\left[\lnorm{\hat\Theta_T-\Theta} \gtrsim 
\sqrt{\frac{\log\left(\maxEig{ \sum_{t=0}^{T-1}\Delta_t\Delta_t^\top} T^{-1/2} \right) + \log\left(\frac{1}{\delta}\right)}{T^{1/2}}} \right] \le  \delta.
\end{align*}
Then we apply Lemma \ref{lemma: lambda max sum of delta deltaT bound}:
\begin{align*}
\begin{split}
    &\P\Bigg[
    T^{1/2}\lnorm{\hat\Theta_T-\Theta}^2 \gtrsim 
    \logg{T^{-1/2}\left(
    \sum_{t=1}^{T-1}\EE{t^{1/2}\norm{\Kh_t - K}^4} + 
    \log^2(1/\delta)
    \right) + 1 } + 
    \log\left(\frac{1}{\delta}\right)
    \Bigg] \le  3\delta.   
\end{split}
\end{align*}
Notice that
\begin{align*}
    & \logg{T^{-1/2}\left(
    \sum_{t=1}^{T-1}\EE{t^{1/2}\norm{\Kh_t - K}^4} + 
    \log^2(1/\delta)
    \right) + 1 } + 
    \log\left(\frac{1}{\delta}\right) 
    \\
    & \quad \le
    \logg{T^{-1/2}\left(
    \sum_{t=1}^{T-1}\EE{t^{1/2}\norm{\Kh_t - K}^4} 
    \right) + \log^2(1/\delta) + 1 } + 
    \log\left(\frac{1}{\delta}\right) 
    \\
    & \quad \text{(because $2ab \ge a + b$ when both $a \ge 1$ and $b \ge 1$)} \\
    & \quad \le
    \logg{
    2
    \left(
        T^{-1/2}
        \left(
            \sum_{t=1}^{T-1}\EE{t^{1/2}\norm{\Kh_t - K}^4}  
        \right) +  1 
    \right) * \log^2(1/\delta)} + 
    \log\left(\frac{1}{\delta}\right) 
    \\
    & \quad \lesssim
    \logg{
        T^{-1/2}
        \left(
            \sum_{t=1}^{T-1}\EE{t^{1/2}\norm{\Kh_t - K}^4}  
        \right) +  1 
    } + 
    2\logg{\log(1/\delta)} + \log(2) + 
    \log\left(\frac{1}{\delta}\right) 
    \\
    & \quad \lesssim
    \logg{
        T^{-1/2}
        \left(
            \sum_{t=1}^{T-1}\EE{t^{1/2}\norm{\Kh_t - K}^4}  
        \right) +  1 
    } + 
    \log\left(\frac{1}{\delta}\right).
\end{align*}
Finally we have
\begin{align*}
\begin{split}
    &\P\Bigg[
    T^{1/2}\lnorm{\hat\Theta_T-\Theta}^2 \gtrsim \logg{
        T^{-1/2}
        \left(
            \sum_{t=1}^{T-1}\EE{t^{1/2}\norm{\Kh_t - K}^4}  
        \right) +  1 
    } + 
    \log\left(\frac{1}{\delta}\right)
    \Bigg] \le  3\delta.   
\end{split}
\end{align*}

\end{proof}

\subsection{Proof of \cref{theorem: theta bound without log}}
\label{subsection: lemma: theta bound without log}

\begin{theorem*}
\cref{alg:myAlg} applied to a system described by \cref{eq:system eq} under Assumption \ref{asm:InitialStableCondition} satisfies
\begin{equation*}
    \lnorm{\hat\Theta_T-\Theta} = O_p(T^{-1/4})\text{ and }\lnorm{\Kh_T-K} = O_p(T^{-1/4}).
\end{equation*}
\end{theorem*}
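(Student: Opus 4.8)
The plan is to run a \emph{self-bounding} argument on the quantity $M_T := \EE{T\lnorm{\Kh_T - K}^4}$, using \cref{lemma: dtheta recursive} as the recursive inequality that feeds the values $M_t$ for $t<T$ back into a bound on $M_T$. The point is that \cref{lemma: dtheta recursive} controls $T^{1/2}\lnorm{\hTh_T - \Theta}^2$ by (essentially) $\logg{T^{-1/2}\sum_{t<T}\EE{t^{1/2}\lnorm{\Kh_t - K}^4}} + \logg{1/\delta}$, i.e., only by a \emph{logarithm} of the past fourth moments; so if those moments were known to be uniformly $O(1)$, the bound would collapse to $T^{1/2}\lnorm{\hTh_T-\Theta}^2 \lesssim \logg{1/\delta}$, which is exactly the claimed $\bigOp{T^{-1/4}}$ rate. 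The job is therefore to bootstrap from the crude $\bigOp{T^{-1/4}\log^{1/2}(T)}$ control of \cref{lemma: delta theta bound} to the uniform moment bound $\sup_{s}M_s\lesssim 1$.

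First I would convert the tail bound of \cref{lemma: dtheta recursive} into a moment bound: integrating $\PP{T^{1/2}\lnorm{\hTh_T-\Theta}^2 \ge c(\logg{\text{sum}}+1) + s}\lesssim e^{-s}$ over $s$ (valid for $s$ up to order $T^{1/3}$, from $T\gtrsim\log^3(1/\delta)$) and truncating the remaining tail at the deterministic bound $\lnorm{\hTh_T-\Theta}\le\epsilon_0$ gives $\EE{T\lnorm{\hTh_T-\Theta}^4 \,1_{\lnorm{\hTh_T-\Theta}\le\epsilon_0}} \lesssim \big(\logg{T^{-1/2}\sum_{t<T}t^{-1/2}\EE{t\lnorm{\Kh_t-K}^4}}+1\big)^2 + 1$. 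Next I would pass from $\hTh$ to $\Kh$: by Proposition 4 of \citet{simchowitz2020naive}, on $\{\lnorm{\hTh_T-\Theta}\le\epsilon_0\}$ one has $\lnorm{\Kh_T-K}\lesssim\lnorm{\hTh_T-\Theta}$, so the left side controls $\EE{T\lnorm{\Kh_T-K}^4\,1_{\lnorm{\hTh_T-\Theta}\le\epsilon_0}}$; and on the complement, \cref{alg:myAlg} forces $\lnorm{\Kh_T}\le C_K$ so $T\lnorm{\Kh_T-K}^4\lesssim T$, while \cref{lemma: delta theta bound} makes that event have probability $\lesssim T e^{-\epsilon_0^2 T^{1/2}}$, hence contributes only $\lesssim T^2 e^{-\epsilon_0^2 T^{1/2}}\lesssim 1$. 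Thus $M_T \lesssim \big(\logg{T^{-1/2}\sum_{t<T}t^{-1/2}M_t}+1\big)^2 + 1$ for all $T\ge T_0$ with $T_0$ an absolute constant, the first $T_0$ summands being negligible because $\lnorm{\Kh_t}\le C_K$.

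Then comes the fixed-point step. Fix a horizon $T_{\max}$, set $f := \max_{T_0 \le s \le T_{\max}} M_s$, and use $\sum_{t<T}t^{-1/2}\lesssim T^{1/2}$ to reduce the recursion to $f \lesssim (\logg{1+f}+1)^2 + 1$; since $x \le C((\logg{1+x}+1)^2+1)$ forces $x$ below an absolute constant, we get $f\lesssim 1$ with a constant depending only on $T_0$ and the system/algorithm parameters — crucially \emph{not} on $T_{\max}$ — so $\sup_{s\ge T_0}\EE{s\lnorm{\Kh_s-K}^4}\lesssim 1$. Plugging this back into \cref{lemma: dtheta recursive}, the argument of the logarithm becomes $T^{-1/2}\sum_{t<T}t^{-1/2}\,O(1) = O(1)$, giving $\PP{T^{1/2}\lnorm{\hTh_T-\Theta}^2 \gtrsim \logg{1/\delta}}\le 3\delta$, i.e., $\lnorm{\hTh_T-\Theta}=\bigOp{T^{-1/4}}$; a final application of Proposition 4 of \citet{simchowitz2020naive} upgrades this to $\lnorm{\Kh_T-K}=\bigOp{T^{-1/4}}$.

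The step I expect to be most delicate is the transfer from $\hTh$ to $\Kh$ coupled with the moment truncation: one must verify that the event $\{\lnorm{\hTh_T-\Theta}\le\epsilon_0\}$ on which Proposition 4 applies fails with probability small enough (exponentially in $T^{1/2}$, via \cref{lemma: delta theta bound}) that its contribution to the fourth moment — where $T\lnorm{\Kh_T-K}^4$ is a priori only $O(T)$ — remains $O(1)$, and that every constant along the chain stays independent of both $\delta$ and $T_{\max}$ so that the fixed-point conclusion is genuinely uniform in the horizon. The remainder is bookkeeping: integrating sub-exponential tails, summing $t^{-1/2}$, and checking the scalar implication $x\lesssim(\logg{1+x}+1)^2+1 \Rightarrow x\lesssim 1$.
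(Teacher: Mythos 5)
Your proposal is correct and follows essentially the same route as the paper's own proof: the same conversion of the tail bound in \cref{lemma: dtheta recursive} into a fourth-moment bound with truncation at $\epsilon_0$, the same transfer from $\hat\Theta_T$ to $\Kh_T$ via Proposition 4 of \citet{simchowitz2020naive} together with the exponentially small bad-event contribution from \cref{lemma: delta theta bound}, and the same fixed-point argument $f \lesssim (\log(1+f)+1)^2+1 \Rightarrow f \lesssim 1$ uniformly in $T_{\max}$. The delicate points you flag (uniformity of constants in $\delta$ and $T_{\max}$, and the $O(1)$ contribution of the complement event) are exactly the ones the paper handles.
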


\begin{proof}

By Proposition 4 of \citet{simchowitz2020naive}, 
\begin{equation}
\label{eq: delta K controlled by delta theta*}
    \norm{\Kh_T - K} \lesssim \norm{\hat\Theta_T - \Theta}.
\end{equation}
as long as $\norm{\hat\Theta_T - \Theta} \le \epsilon_0$, where $\epsilon_0$ is some fixed constant determined by the system parameters (this is the same $\epsilon_0$ as in \cref{lemma: favorable event}). We want to focus on cases where $\norm{\hat\Theta_T - \Theta} \le \epsilon_0$ to transfer $T^{1/2}\lnorm{\hat\Theta_T-\Theta}^2$ to $T^{1/2}\lnorm{\Kh_T-K}^2$ so that Lemma \ref{lemma: dtheta recursive} has only estimation error of $K$.

We can estimate $\EE{T\lnorm{\hat\Theta_T-\Theta}^4 1_{\norm{\hat\Theta_T-\Theta} \le \epsilon_0}}$ by calculating the integral using the tail bound from Lemma \ref{lemma: dtheta recursive} as long as $T \gtrsim \log^3(1/\delta)$. The further tails can be bounded by the constant $\epsilon_0$. 
Add an extra $1_{\norm{\hat\Theta_T-\Theta} \le \epsilon_0}$ on the left hand side of \cref{eq: dtheta recursive eq*} inside the probability. When $0 < \delta < 1/2$ and $T \gtrsim \log^3(1/\delta)$,
\begin{align*}
\begin{split}
    &\P\Bigg[
    T^{1/2}\lnorm{\hat\Theta_T-\Theta}^2 1_{\norm{\hat\Theta_T-\Theta} \le \epsilon_0} \gtrsim 
    \logg{
    T^{-1/2}\left(
    \sum_{t=1}^{T-1}\EE{t^{1/2}\norm{\Kh_t - K}^4} 
    \right) + 1 
    } + 
    \log\left(\frac{1}{\delta}\right)
    \Bigg] \le  3\delta.   
\end{split}
\end{align*}

Denote $a_T = T^{1/2}\lnorm{\hat\Theta_T-\Theta}^2 1_{\norm{\hat\Theta_T-\Theta} \le \epsilon_0}$, $C_{T} := T^{-1/2}\left(
    \sum_{t=1}^{T-1}\EE{t^{1/2}\norm{\Kh_t - K}^4} 
    \right) + 1 $.  When $T \gtrsim \log^3(1/\delta)$, which is $\delta \ge e^{-(cT)^{1/3}}$, we have
\begin{equation}
\label{eq: summarize probability}
    \PP{a_T \ge C\left(\log{C_{T}} + \logg{\frac{1}{\delta}}\right)} \le 3\delta.
\end{equation}
Here $c$ and $C$ are two fixed constants which do not depend on $\delta$ and $T$.
Denote the tail bound of $a_T$ corresponding to probability $\delta = e^{-(cT)^{1/3}}$ as $U_T := C\left(\logg{C_{T}} + (cT)^{1/3}\right)$. When $a_T > U_T$, we bound it by the bound $a_T \le T^{1/2} \cdot \epsilon_0^2$.

\begin{align*}
\E a_{T}^2 
    &\le \int_{s = 0}^{U_{T}} s\PP{a_{T}^2 = s} ds 
    + 3e^{-(cT)^{1/3}} T \epsilon_0^4
    \\
    &\le -\int_{s=0}^{U_{T}} s \; d\PP{a_{T}^2 > s} 
    + 3e^{-(cT)^{1/3}} T \epsilon_0^4 
    \\
    &\le -U_{T}\PP{a_{T}^2 > U_{T}} + \int_{s=0}^{U_{T}} \PP{a_{T}^2 > s} \; ds 
    + 3e^{-(cT)^{1/3}} T \epsilon_0^4 
    \\
    &\quad \text{(because } a_{T} > 0 \text{)} \\
    &\le \int_{s=0}^{U_T} \PP{a_{T} > \sqrt{s}} ds 
    + 3e^{-(cT)^{1/3}} T \epsilon_0^4 
    \\
    &\le \int_{s = (C\log(2C_{T}) )^2}^{U_T} \PP{a_{T} > \sqrt{s}} ds + (C\log(2C_{T}) )^2 + 3e^{-(cT)^{1/3}} T \epsilon_0^4 
\end{align*}
We choose $s = (C\log(2C_{T}) )^2$ because we only know the bound of $\PP{a_{T} > \sqrt{s}}$ up to $\delta = 1/2$ with the restriction $\delta < 1/2$. We know that $s = (C\log(2C_{T}) )^2$ corresponds to $\delta = 1/2$ by \cref{eq: summarize probability}.

We also need to express probability $\delta$ in terms of the tail value $s$ as $\delta(s)$.
Solve the equation
\begin{align*}
    &\sqrt{s} = C\left(\log{C_{T}} + \logg{\frac{1}{\delta(s)}}\right) \\
    \Longrightarrow \quad &
    e^{\sqrt{s}/C} = C_{T} \cdot \frac{1}{\delta(s)} \\
    \Longrightarrow \quad &
    \delta(s)  = e^{-\sqrt{s}/C} C_{T}.
\end{align*}
Thus
\begin{align*}
    \E a_{T}^2 
    &\le \int_{s=(C\log(2C_{T}) )^2}^{U_T} \PP{a_{T} > \sqrt{s}} ds + (C\log(2C_{T}) )^2 + 3e^{-(cT)^{1/3}} T \epsilon_0^4\\
    &\le \int_{s=(C\log(2C_{T}) )^2}^{U_T} 3\delta(s) \;ds + (C\log(2C_{T}) )^2 + 3e^{-(cT)^{1/3}} T \epsilon_0^4\\
    &\le \int_{s=(C\log(2C_{T}) )^2}^{\infty} 3e^{-\sqrt{s}/C} C_{T} ds + (C\log(2C_{T}) )^2 + 3e^{-(cT)^{1/3}} T \epsilon_0^4\\
    & \text{(By integral calculation)}\\
    &= 3C(C\log(2C_{T}) + C) + (C\log(2C_{T}) )^2 + 3e^{-(cT)^{1/3}} T \epsilon_0^4\\
    &\eqsim \log(2C_{T}) + (\log(2C_{T}) )^2 +  e^{-(cT)^{1/3}} T \epsilon_0^4 \\
    &= (\log(2C_{T}) + 1/2)^2 - 1/4  + e^{-(cT)^{1/3}} T \epsilon_0^4 \\
    &\lesssim (\log(C_{T}) +1)^2 + e^{-(cT)^{1/3}} T \epsilon_0^4.
\end{align*}
As a result, when $T \ge T_0$ ($T_0$ is a large enough constant so that $e^{-cT^{1/3}}T\epsilon_0^4 \le 1$):
\begin{align*}
    \EE{T\lnorm{\hat\Theta_T-\Theta}^4 1_{\norm{\hat\Theta_T-\Theta} \le \epsilon_0}} &\lesssim \left(
        \log\left(T^{-1/2}\left(\sum_{t=1}^{T-1}t^{-1/2} \EE{t\lnorm{\Kh_t-K}^4} \right) \right)  + 1
    \right)^2 + e^{-cT^{1/3}}T\epsilon_0^4 \\
    &\lesssim \left(
        \log\left(T^{-1/2}\left(\sum_{t=1}^{T-1}t^{-1/2} \EE{t\lnorm{\Kh_t-K}^4} \right) \right)  + 1
    \right)^2 + 1.
\end{align*}
On the right hand side, consider the maximum of $\EE{t\lnorm{\Kh_t-K}^4}$ from $T_0$ to $T_{\max} \ge T$,
\begin{align*}
    &\EE{T\lnorm{\hat\Theta_T-\Theta}^4
    1_{\norm{\hat\Theta_T-\Theta} \le \epsilon_0}
    } \\
    &\quad \lesssim \left(
        \log\left(T^{-1/2}\left(\sum_{t=1}^{T_0}t^{-1/2} \EE{t\lnorm{\Kh_t-K}^4}
        + \sum_{t=T_0}^{T-1}t^{-1/2} \max_{T_0 \le s \le T_{\max}} \EE{s\lnorm{\Kh_s-K}^4}
        \right) \right)  + 1
    \right)^2 + 1 \\
    & \quad 
    \text{(\cref{alg:myAlg} restricted $\norm{\Kh_t} \le C_K$)
    }
    \\
    & \quad \lesssim \left(
        \log\left(T^{-1/2}\left(\sum_{t=1}^{T_0}t^{-1/2} (C_K + \norm{K})^4
        \right)+ \max_{T_0 \le s \le T_{\max}} \EE{s\lnorm{\Kh_s-K}^4} \right)  + 1
    \right)^2 + 1  \\
    &\quad \lesssim \left(
        \log\left(T^{-1/2}T_0^{1/2}\cdot 1+ \max_{T_0 \le s \le T_{\max}} \EE{s\lnorm{\Kh_s-K}^4} \right)  + 1
    \right)^2 + 1 \\
    &\quad \lesssim 
    \left(
        \log\left(1+ \max_{T_0 \le s \le T_{\max}} \EE{s\lnorm{\Kh_s-K}^4} \right)  + 1
    \right)^2 + 1.
\end{align*}
By \cref{eq: delta K controlled by delta theta*}, we can transfer $\lnorm{\hat\Theta_T-\Theta}$ on the right hand side to $\lnorm{\Kh_T-K}$ as long as $\lnorm{\hat\Theta_T-\Theta} \le \epsilon_0$.
By Lemma \ref{lemma: delta theta bound}, the upper bound for the probability $\delta$ that $\lnorm{\hat\Theta_T-\Theta} \le \epsilon_0$ does not hold can be solved from 
\begin{equation*}
    T^{-1/4}\sqrt{
    \left(
    \log T + \log\left(\frac{1}{\delta}\right)
    \right)} = \epsilon_0,
\end{equation*}
which gives
\begin{equation*}
    \delta = T e^{-\epsilon_0^2T^{1/2}}.
\end{equation*}
As a result, when $T \ge T_0$:
\begin{align*}
    &\EE{T\lnorm{\hat\Theta_T-\Theta}^4
     1_{\norm{\hat\Theta_T-\Theta} \le \epsilon_0}
    }  \\
    & \quad \gtrsim 
    \EE{T\lnorm{\Kh_T-K}^4
     1_{\norm{\hat\Theta_T-\Theta} \le \epsilon_0}
    }  \\
    & \quad \ge \EE{T\lnorm{\Kh_T-K}^4} - T e^{-\epsilon_0^2T^{1/2}}T(C_K + \norm{K})^4 \\
    & \quad \text{(Choose $T_0$ such that for any $T \ge T_0$, $T e^{-\epsilon_0^2T^{1/2}}T(C_K + \norm{K})^4 \le 1$)} \\
    & \quad \ge \EE{T\lnorm{\Kh_T-K}^4} - 1.
\end{align*}
Now we have, for any $T_0 \le T \le T_{\max}$
\begin{equation*}
\EE{T\lnorm{\Kh_T-K}^4} \lesssim 
        \left(
        \log\left(1+ \max_{T_0 \le s \le T_{\max}} \EE{s\lnorm{\Kh_s-K}^4} \right)  + 1
    \right)^2 + 1.
\end{equation*}
Take maximum across $T_0$ to $T_{\max}$ on the left hand side:
\begin{equation*}
    \max_{T_0 \le s \le T_{\max}} \EE{s\lnorm{\Kh_s-K}^4}
    \lesssim \left(
        \log\left(1+ \max_{T_0 \le s \le T_{\max}} \EE{s\lnorm{\Kh_s-K}^4} \right)  + 1
    \right)^2 + 1.
\end{equation*}
Thus 
\begin{equation*}
    \max_{T_0 \le s \le T_{\max}} \EE{s\lnorm{\Kh_s-K}^4} \lesssim 1.
\end{equation*}
The hidden constant is only related with $T_0$. The same inequality hold for any $T_{\max}$. As a result,
\begin{equation}
\label{eq:E K estimation err ^4 bound}
    \max_{s \ge T_0} \EE{s\lnorm{\Kh_s-K}^4}  
    \lesssim 1.
\end{equation}
Plug this back to \cref{eq: dtheta recursive eq*}. When $T \gtrsim \log^3(1/\delta)$,
\begin{equation*}
    \P\left[
    T^{1/2}\lnorm{\hat\Theta_T-\Theta}^2 \gtrsim 
    \logg{T^{-1/2}\left(\sum_{t=1}^{T_0}t^{-1/2} \EE{t\lnorm{\Kh_t-K}^4}
    \right)+ 1} + \log\left(\frac{1}{\delta} \right) 
    \right] \le  3\delta.
\end{equation*}
Because \cref{alg:myAlg} restricted that $\norm{\Kh_t} \le C_K$, the initial $T_0$ items is of negligible order. The above equation can be simplified as
\begin{equation*}
    \P\left[
    T^{1/2}\lnorm{\hat\Theta_T-\Theta}^2 \gtrsim 
    \log\left(\frac{1}{\delta} \right) 
    \right] \le  3\delta.
\end{equation*}
Finally,
\begin{equation*}
    \lnorm{\hat\Theta_T-\Theta} = O_p(T^{-1/4}).
\end{equation*}
$\lnorm{\Kh_T-K} = O_p(T^{-1/4})$ is a direct corollary from \cref{eq: delta K controlled by delta theta*}. 
\end{proof}

\subsection{Proof of Lemma \ref{lemma: large eigenvalue part}}
\label{subsection: proof of lemma: large eigenvalue part}
\begin{lemma*}
\cref{alg:myAlg} applied to a system described by \cref{eq:system eq} under Assumption \ref{asm:InitialStableCondition} satisfies, for any $0 < \delta < 1/2$ and $T \gtrsim \log^3(1/\delta)$, with probability at least $1-\delta$:
\begin{equation}
\label{eq: upper lower bound of the main part in gram matrix**}
\begin{bmatrix}
I \\
K
\end{bmatrix} T 
\begin{bmatrix}
I \\
K
\end{bmatrix}^\top
\preceq
\begin{bmatrix}
I \\
K
\end{bmatrix}
\sum_{t=0}^{T-1} 
x_t x_t^\top
\begin{bmatrix}
I \\
K
\end{bmatrix}^\top \preceq 
1/\delta
\begin{bmatrix}
I \\
K
\end{bmatrix} T 
\begin{bmatrix}
I \\
K
\end{bmatrix}^\top.
\end{equation} 

\end{lemma*}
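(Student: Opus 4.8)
The plan is to reduce the stated matrix inequality to a two-sided bound on the $n\times n$ matrix $\sum_{t=0}^{T-1}x_tx_t^\top$ alone. Since $\begin{bmatrix} I \\ K \end{bmatrix}$ has full column rank, for symmetric $M_1,M_2\in\real^{n\times n}$ one has $\begin{bmatrix} I \\ K \end{bmatrix}M_1\begin{bmatrix} I \\ K \end{bmatrix}^\top\preceq\begin{bmatrix} I \\ K \end{bmatrix}M_2\begin{bmatrix} I \\ K \end{bmatrix}^\top$ if and only if $M_1\preceq M_2$ (take $w=(v,0)^\top$ in $w^\top(\cdot)w$ to recover $v^\top M_iv$), so it suffices to show that with probability at least $1-\delta$, when $T\gtrsim\log^3(1/\delta)$, $\sum_{t=0}^{T-1}x_tx_t^\top\succsim TI_n$ and $\sum_{t=0}^{T-1}x_tx_t^\top\precsim\tfrac1\delta TI_n$. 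These two bounds will be proved by different means: the lower bound ``pathwise'' via a block-martingale small-ball estimate, and the upper bound by restricting to the favorable event $E_\delta$ of \cref{lemma: favorable event}.

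For the lower bound, write $x_{t+1}=(A+B\Kh_t)x_t+B\eta_t+\varepsilon_t$, and note that the fresh noise $B\eta_t+\varepsilon_t$ has conditional covariance $\sigma_\eta^2t^{-1/2}BB^\top+\sigma_\varepsilon^2I_n\succeq\sigma_\varepsilon^2I_n$ given the history --- \emph{deterministically}, no matter how $\Kh_t$ behaves or whether a reset has occurred. Hence for every unit vector $w$, $\langle w,x_{t+1}\rangle$ is conditionally Gaussian with variance at least $\sigma_\varepsilon^2$, and Gaussian anti-concentration gives $\PP{|\langle w,x_{t+1}\rangle|\ge\sqrt{\sigma_\varepsilon^2/2}\mid\calF_t}\ge p_0$ for a universal $p_0$ (the worst case is a conditional mean of order $\sigma_\varepsilon$). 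Thus $\{x_t\}$ satisfies the $(1,\tfrac{\sigma_\varepsilon^2}{2}I_n,p_0)$-BMSB condition, and the BMSB lower-bound theorem of \citet{simchowitz2018learning} (the one underlying \cref{eq: simchowitz conclusion}, applied to $\{z_t\}$ in Lemma~34 of \citet{wang2020exact}) gives $\sum_{t=0}^{T-1}x_tx_t^\top\succsim TI_n$ with probability at least $1-\delta$, provided $T\gtrsim\log(1/\delta)+\log\det(\Gambar\Gamlow^{-1})$. Taking a crude a priori envelope $\sum_{t=0}^{T-1}\EE{x_tx_t^\top}\preceq\sum_t\EE{\norm{x_t}^2}I_n\precsim T\log^2(T)I_n$ (from $\EE{\norm{x_t}^2}\lesssim\log^2(t)$, Eq.~(104) of \citet{wang2020exact}) makes $\log\det(\Gambar\Gamlow^{-1})\lesssim\log\log T$ negligible, so this holds once $T\gtrsim\log^3(1/\delta)$. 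The rate here is $T$ rather than the $T^{1/2}$ of Lemma~34 of \citet{wang2020exact} precisely because the $x$-noise does not shrink in $t$, so the BMSB lower-bound matrix is a constant rather than of order $T^{-1/2}$.

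For the upper bound, the naive route --- combining the matrix-Markov bound \cref{eq: matrix M upper bound} with $\EE{\norm{x_t}^2}\lesssim\log^2(t)$ --- would give only $\sum_{t=0}^{T-1}x_tx_t^\top\precsim\tfrac1\delta T\log^2(T)I_n$, reintroducing a $\text{polylog}(T)$ factor. Instead I would work on $E_\delta$: by \cref{lemma: xt norm bound}, $\EE{\norm{x_t}^21_{E_\delta}}\lesssim1$ for $t\gtrsim\log^2(1/\delta)$, while for the at most $\lesssim\log^2(1/\delta)$ earlier steps $\EE{\norm{x_t}^21_{E_\delta}}\le\EE{\norm{x_t}^2}\lesssim\log^2(t)\lesssim(\log\log(1/\delta))^2$; these early steps contribute at most $\lesssim\log^2(1/\delta)(\log\log(1/\delta))^2\lesssim\log^3(1/\delta)\lesssim T$. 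Hence $\EE{\sum_{t=0}^{T-1}x_tx_t^\top1_{E_\delta}}\precsim TI_n$, and applying \cref{eq: matrix M upper bound} to the PSD matrix $M=\sum_{t=0}^{T-1}x_tx_t^\top1_{E_\delta}$ --- whose expectation is positive definite, since $x_1\mid\calF_0$ is a nondegenerate Gaussian and $\P(E_\delta)\ge\tfrac12$ --- gives $\sum_{t=0}^{T-1}x_tx_t^\top1_{E_\delta}\precsim\tfrac1\delta TI_n$ with probability at least $1-\delta$. Intersecting with $E_\delta$ and relabeling $\delta\mapsto\delta/2$ yields the upper bound with probability at least $1-\delta$; combining with the (relabeled) lower bound and conjugating by $\begin{bmatrix} I \\ K \end{bmatrix}$ completes the proof.

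The main obstacle is exactly the last point: obtaining the upper bound \emph{without} the $\log^2(T)$ factor. This is what forces the argument through the favorable event and, in turn, explains the hypothesis $T\gtrsim\log^3(1/\delta)$ --- it is the budget needed to absorb the moments of the $O(\log^2(1/\delta))$ early steps, during which the system is not yet guaranteed to be well controlled. Everything else is careful but routine: verifying Gaussian anti-concentration uniformly over conditional means, checking the sample-size condition of the BMSB theorem, and bookkeeping the failure probabilities across the two bounds and the event $E_\delta$.
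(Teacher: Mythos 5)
Your proof is correct and follows essentially the same route as the paper's: the lower bound via verifying a $(1,\Theta(\sigma_\varepsilon^2)I_n,p_0)$-BMSB condition for $\{x_t\}$ (using the non-shrinking state noise) together with a crude high-probability envelope, and the upper bound via the split at $t\eqsim\log^2(1/\delta)$, the moment bound $\EE{\norm{x_t}^2 1_{E_\delta}}\lesssim 1$ from Lemma~\ref{lemma: xt norm bound} on the favorable event, and the matrix-Markov inequality \cref{eq: matrix M upper bound}. The only differences are cosmetic (conditioning on a coarser filtration when checking BMSB, and which high-probability envelope is fed into the BMSB sample-size condition), and your identification of the early-time moment budget as the source of the $T\gtrsim\log^3(1/\delta)$ hypothesis matches the paper's accounting.
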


\paragraph{High probability upper bound}
For $t \lesssim \log^2(1/\delta)$, we can use the bound from Eq.~(104) in \citet{wang2020exact}: $\E \norm{x_t}^2 \lesssim \log^2(t)$.  For $t \lesssim \log^2(1/\delta)$ part, the total effect is bounded by $\log^2(1/\delta)\log^2\left(\log^2(1/\delta)\right)
\lesssim  \log^3(1/\delta)$.
For $t \gtrsim \log^2(1/\delta)$ , we can use Lemma \ref{lemma: xt norm bound}: $\EE{\norm{x_{t}}^21_{E_{\delta}}} \lesssim 1.$
We then combine the bounds for $t \lesssim \log^2(1/\delta) $ and $t \gtrsim \log^2(1/\delta)$ to get
\begin{align*}
    &\EE{\sum_{t=0}^{T-1}x_{t}x_t^\top 1_{E_{\delta}}} \\
    & \quad = 
\sum_{t=0}^{T-1}\EE{x_{t}x_t^\top1_{E_{\delta}}} \\
& \quad \preceq 
\sum_{t=0}^{T-1}\EE{\norm{x_t}^21_{E_{\delta}}}I_{n} \\
& \quad \preceq 
\left(T  + \log^3(1/\delta)\right)I_{n}.
\end{align*}
In order to make the formula neat, require $T \gtrsim \log^3(1/\delta)$, which guarantees the simplified formula
\begin{align}
\label{eq: Expectation Gram matrix bounded by T}
    \EE{\sum_{t=0}^{T-1}x_{t}x_t^\top 1_{E_{\delta}}}  \precsim 
T \cdot  I_n.
\end{align}

By \cref{eq: matrix M upper bound} we have:
\begin{equation*}
    \PP{\sum_{t=0}^{T-1}x_{t}x_t^\top 1_{E_{\delta}} \preceq \frac{d}{\delta}\E[\sum_{t=0}^{T-1}x_{t}x_t^\top 1_{E_{\delta}}]} \ge 1-\delta.
\end{equation*}
Further combine this with \cref{eq: Expectation Gram matrix bounded by T}:
\begin{equation*}
    \PP{\sum_{t=0}^{T-1}x_{t}x_t^\top 1_{E_{\delta}} \preceq \frac{C}{\delta} TI_{n} } \ge 1- \delta.
\end{equation*}
Also we can remove the $1_{E_{\delta}}$ part by subtracting another $\delta$ on the right:
\begin{equation*}
    \PP{\sum_{t=0}^{T-1}x_{t}x_t^\top \precsim \frac{1}{\delta} TI_{n} } \ge 1-2\delta
\end{equation*}
or just hide the constant $2$ by $\delta \to \delta/2$. Now we can conclude that when $T \gtrsim \log^3(1/\delta)$:
\begin{equation}
\label{eq: matrix upper bound}
    \PP{\sum_{t=0}^{T-1}x_{t}x_t^\top \precsim \frac{1}{\delta} TI_{n} } \ge 1-\delta.
\end{equation}
or, with probability $1-\delta$, 
\begin{equation*}
    \begin{bmatrix}
I \\
K
\end{bmatrix}
\sum_{t=0}^{T-1} 
x_t x_t^\top
\begin{bmatrix}
I \\
K
\end{bmatrix}^\top \precsim 
1/\delta
\begin{bmatrix}
I \\
K
\end{bmatrix} T
\begin{bmatrix}
I \\
K
\end{bmatrix}^\top.
\end{equation*}

\paragraph{High probability lower bound}
Next we want to show a high probability lower bound of this term, which is a component in $G_T$. It is sufficient to prove some BMSB condition because when BMSB condition and high probability upper bounds both hold, the lower bound is also guaranteed (we will illustrate this soon).

Following \cref{def:bmsb}, in order to show the process $\{x_t\}_{t \ge 1}$ satisfies the $(k, \Gamlow, p) = (1, \sigma_\varepsilon^2 I_n, \frac3{10})$-BMSB condition, we only need to prove that for any $w\in \calS^{n-1}$, $\P( |\langle w, x_{j+1}\rangle | \ge \sqrt{w^\top \sigma_\varepsilon^2 I_n w} | \calF_{j}) \ge p \as$
Let $\calF_t$ be the filtration on all history before time $t$ (including $x_t$ and $u_t$), we know that
\begin{equation*}
    x_{t+1} | \calF_t \sim \calN\left(Ax_t + Bu_t, \sigma_\varepsilon^2 I_n\right).
\end{equation*}
We also know the distribution of its inner product with any constant vector $w$:
\begin{equation*}
    \langle w, x_{t+1} \rangle | \calF_t \sim \calN\left(\langle w, Ax_t + Bu_t \rangle, w^\top \sigma_\varepsilon^2 I_n w\right). 
\end{equation*}
We want to lower bound the probability that the absolute value of this inner product (which follows a normal distribution) is larger than its standard error, which is always lower bounded by the case where the normal distribution is centered at zero. More specifically, 
\begin{align*}
    &\P\left( |\langle w, x_{j+1}\rangle | \ge \sqrt{w^\top \sigma_\varepsilon^2 I_n w} | \calF_{j}
    \right) \\
   & \quad \ge 
   \P\left( \left|\calN\left(0, w^\top \sigma_\varepsilon^2 I_n w\right) \right| \ge \sqrt{w^\top \sigma_\varepsilon^2 I_n w}
   \right) \\
   & \quad \ge 3/10.
\end{align*}
The last equation is simply a numerical property of the normal distribution. Now we have proved that the process $\{x_t\}_{t \ge 1}$ follows $(k, \Gamlow, p) = (1, \sigma_\varepsilon^2 I_n, \frac3{10})$-BMSB condition.

The BMSB-condition is useful in deriving high probability lower bounds. Specifically, 
assume $X = (x_0, x_1, \dots, x_{T-1})$, then if the Gram matrix $\sum_{t=0}^{T-1}x_{t}x_t^\top$ has a high probability upper bound, then the BMSB-condition can guarantee a high probability lower bound. In the last equation from section D.1 in \citet{simchowitz2018learning}, it is shown that if $\{x_t\}_{t \ge 1}$ satisfies the $(k, \Gamlow, p)$-BMSB condition, then
\begin{equation}
\label{eq: lower bound gram matrix by BMSB origin}
    \PP{
        \mathset{
            \sum_{t=0}^{T-1}x_{t}x_t^\top \nsucceq \frac{k\floor{T/k}p^2\Gamlow}{16}
        }
        \cap
        \mathset{
            \sum_{t=0}^{T-1}x_{t}x_t^\top \preceq T\bar\Gamma
        }
    }
    \le 
    \expebrace{-\frac{Tp^2}{10k} + 2d\log(10/p) + \log\det(\bar\Gamma\Gamlow^{-1})}.
\end{equation}
Here $\Gambar$ comes from the assumption in \cref{eq: simchowitz conclusion} which says $\P[\sum_{t = 0}^{T-1}  z_t z_t^\top \preceq T\Gambar] \ge 1-\delta$. the upper bound $T\Gambar$ is guaranteed by \cref{eq: matrix upper bound} with $\Gambar \simeq 1/\delta
I_n$, and the lower bound is just shown to be $\Gamlow = \sigma_\varepsilon^2 I_n$ with $k=1$ and $p=3/10$. Put these representations into the previous equation 
\begin{equation*}
    \PP{
        \mathset{
            \sum_{t=0}^{T-1}x_{t}x_t^\top \nsucceq \frac{\floor{T}\left(\frac{3}{10}\right)^2\Gamlow}{16}
        }
        \cap
        \mathset{
            \sum_{t=0}^{T-1}x_{t}x_t^\top \preceq T\bar\Gamma
        }
    }
    \le 
    \expebrace{-\frac{T\left(\frac{3}{10}\right)^2}{10} + C + d\log(1/\delta)}.
\end{equation*}
Here $C$ is some constant independent of $\delta$ and $T$.
To make the right hand side smaller than $\delta$, the condition is $\expebrace{-\frac{9T}{1000} + C + d\log(1/\delta)} < \delta$, which means
\begin{equation*}
    \frac{9T}{1000} - C - d\log(1/\delta) > \log(1/\delta),
\end{equation*}
which is just $T \gtrsim \logg{1/\delta}$. With this condition, we have 
\begin{equation*}
    \PP{
        \mathset{
            \sum_{t=0}^{T-1}x_{t}x_t^\top \nsucceq \frac{9\floor{T}\Gamlow}{1600}
        }
        \cap
        \mathset{
            \sum_{t=0}^{T-1}x_{t}x_t^\top \preceq T\bar\Gamma
        }
    }
    \le 
    \delta.
\end{equation*}
which is
\begin{equation*}
    \PP{
        \mathset{
            \sum_{t=0}^{T-1}x_{t}x_t^\top \nsucceq \frac{9\floor{T} \sigma_\varepsilon^2 I_n }{1600}
        }
        \cap
        \mathset{
            \sum_{t=0}^{T-1}x_{t}x_t^\top \precsim \frac{1}{\delta}TI_n
        }
    }
    \le 
    \delta.
\end{equation*}
We can exclude the later event and change the probability on the right hand side to $\delta + \delta = 2\delta$. When $T \gtrsim \log(1/\delta)$, 
\begin{equation*}
    \PP{
        \mathset{
            \sum_{t=0}^{T-1}x_{t}x_t^\top \nsucceq \frac{9\floor{T} \sigma_\varepsilon^2 I_n }{1600}
        }
    }
    \le 
    2\delta.
\end{equation*}
We can change $2\delta$ to $\delta$, and the constraint is still $T \gtrsim \log(1/\delta)$.
\begin{equation}
\label{eq: lower bound gram matrix by BMSB}
    \PP{
        \mathset{
            \sum_{t=0}^{T-1}x_{t}x_t^\top \nsucceq \frac{9\floor{T} \sigma_\varepsilon^2 I_n }{1600}
        }
    }
    \le 
    \delta.
\end{equation}

Now with probability $1-2\delta$ (one $\delta$ from upper bound \cref{eq: matrix upper bound}, 
another $\delta$ from lower bound \cref{eq: lower bound gram matrix by BMSB}) we have both upper and lower bound of 
\begin{equation*}
    T I_n \precsim \sum_{t=0}^{T-1} x_t x_t^\top
    \precsim  \frac{1}{\delta}TI_n,
\end{equation*}
and
\begin{equation*}
\begin{bmatrix}
I \\
K
\end{bmatrix} T 
\begin{bmatrix}
I \\
K
\end{bmatrix}^\top
\precsim
\begin{bmatrix}
I \\
K
\end{bmatrix}
\sum_{t=0}^{T-1} 
x_t x_t^\top
\begin{bmatrix}
I \\
K
\end{bmatrix}^\top \precsim
1/\delta
\begin{bmatrix}
I \\
K
\end{bmatrix} T 
\begin{bmatrix}
I \\
K
\end{bmatrix}^\top.
\end{equation*} 
when $T \gtrsim \log^3(1/\delta)$. We can replace $\delta$ with $\delta/2$ so that $1-2\delta$ becomes $1-\delta$. 
$\blacksquare$

\subsection{Proof of Lemma \ref{lemma: lower bound of GT}}
\label{subsection: Proof of lemma: lower bound of GT}
\begin{lemma*}
\cref{alg:myAlg} applied to a system described by \cref{eq:system eq} under Assumption \ref{asm:InitialStableCondition} satisfies, when $0 < \delta < 1/2$, for any $T \gtrsim \log^3(1/\delta)$, with probability at least $1-\delta$,
\begin{equation*}
    G_T  \succsim 
        \begin{bmatrix}
            I_n \\
            K
        \end{bmatrix} T 
        \begin{bmatrix}
            I_n \\
            K
        \end{bmatrix}^\top + 
        T^{1/2}  I_{n+d}.
\end{equation*}
\end{lemma*}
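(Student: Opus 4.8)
The plan is to prove the bound by establishing two separate high-probability lower bounds on $G_T=\sum_{t=0}^{T-1}z_tz_t^\top$ and then averaging them through the trivial identity $G_T=\tfrac12G_T+\tfrac12G_T$.

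The first, and harder, bound is that $G_T$ is of order $T$ on the column space of $\begin{bmatrix}I_n\\K\end{bmatrix}$, namely $G_T\succsim\begin{bmatrix}I_n\\K\end{bmatrix}T\begin{bmatrix}I_n\\K\end{bmatrix}^\top$. The starting point is the splitting of \cref{eq: end of equation}, $G_T=\begin{bmatrix}I_n\\K\end{bmatrix}\bigl(\sum_{t=0}^{T-1}x_tx_t^\top\bigr)\begin{bmatrix}I_n\\K\end{bmatrix}^\top+R_T$, where the remainder $R_T$ gathers all the terms carrying a factor $\Delta_t=(\Kh_t-K)x_t+\eta_t$. By the left-hand inequality of \cref{lemma: large eigenvalue part}, with probability at least $1-\delta$ (for $T\gtrsim\log^3(1/\delta)$) the first summand is $\succsim\begin{bmatrix}I_n\\K\end{bmatrix}T\begin{bmatrix}I_n\\K\end{bmatrix}^\top$, so what remains is to show that $R_T$ is of lower order than $T$ in those directions. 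I would do this on the stable event $E_\delta$ of \cref{lemma: favorable event} — on which, after a union bound over \cref{lemma: delta theta bound}, one additionally has $\norm{\Kh_t-K}\lesssim t^{-1/4}\text{polylog}(t)$ for all $t$ past the burn-in — combining the moment bounds $\E[\norm{x_t}^k1_{E_\delta}]\lesssim1$ of \cref{lemma: xt norm bound} and $\E\norm{\eta_t}^2\lesssim t^{-1/2}$ to control $\maxEig{\sum_{t}\Delta_t\Delta_t^\top}$ and the cross term $\sum_t x_t\Delta_t^\top$, while splitting $\Delta_t$ into its $\eta_t$-part — a martingale increment, since $\Kh_t$ and $x_t$ are conditionally fixed, which therefore concentrates at a $\text{polylog}(T)$ rate — and its $(\Kh_t-K)x_t$-part, which is controlled directly using the polylogarithmic estimation-error rate available on $E_\delta$.

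The second bound, $G_T\succsim T^{1/2}I_{n+d}$, is exactly Lemma~34 of \citet{wang2020exact}: the process $(z_t)$ satisfies a $(1,\,\sigma_\eta^2T^{-1/2}c\,I_{n+d},\,\tfrac3{10})$-BMSB condition which, fed into the Simchowitz et al.\ estimate recalled in the proof of \cref{lemma: large eigenvalue part} (\cref{eq: lower bound gram matrix by BMSB origin}) together with the high-probability upper bound $G_T\precsim\tfrac1\delta TI_{n+d}$ (the exact analogue of \cref{eq: matrix upper bound} with $z_t$ in place of $x_t$), yields $G_T\succsim T^{1/2}I_{n+d}$ with probability at least $1-\delta$ as soon as $T\gtrsim\log(1/\delta)$; in particular this controls $G_T$ in the orthogonal complement of the column space of $\begin{bmatrix}I_n\\K\end{bmatrix}$, where $G_T$ restricts to $\sum_t\Delta_t\Delta_t^\top$. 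Intersecting the two events (probability at least $1-\delta$ after relabelling $\delta\mapsto\delta/2$, with binding constraint $T\gtrsim\log^3(1/\delta)$) and adding $\tfrac12$ of each bound gives the claim, since $\begin{bmatrix}I_n\\K\end{bmatrix}T\begin{bmatrix}I_n\\K\end{bmatrix}^\top+T^{1/2}I_{n+d}$ is, up to constants, order $T$ on the column space of $\begin{bmatrix}I_n\\K\end{bmatrix}$ and order $T^{1/2}$ on its complement.

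I expect the control of $R_T$ in the first bound to be the main obstacle: the decomposition above does \emph{not} leave $R_T$ positive semidefinite, so a blunt Cauchy--Schwarz bound $\norm{\sum_t x_t\Delta_t^\top}\le(\sum_t\norm{x_t}^2)^{1/2}(\maxEig{\sum_t\Delta_t\Delta_t^\top})^{1/2}$ is only of order $T^{3/4}\text{polylog}(T)$, which is too crude to be harmless against the $T^{1/2}I_{n+d}$ term; the remedy is precisely the $\eta_t$-versus-$(\Kh_t-K)x_t$ split described above, together with the care to invoke the resulting refined bounds only in the directions where order-$T$ control is actually needed.
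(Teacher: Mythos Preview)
Your high-level strategy — establish $G_T\succsim\begin{bmatrix}I_n\\K\end{bmatrix}T\begin{bmatrix}I_n\\K\end{bmatrix}^\top$ and $G_T\succsim T^{1/2}I_{n+d}$ separately, then average via $G_T=\tfrac12G_T+\tfrac12G_T$ — is exactly the paper's, and your treatment of the second bound (BMSB on $(z_t)$ from Lemma~34 of \citet{wang2020exact}, combined with the Markov-type upper bound to feed into \cref{eq: lower bound gram matrix by BMSB origin}) matches the paper line for line.

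The divergence is in the first bound. The paper does \emph{not} control $R_T$ at all: it simply asserts, ``by definition \cref{eq: GT definition},'' that $G_T-\begin{bmatrix}I_n\\K\end{bmatrix}\sum_t x_tx_t^\top\begin{bmatrix}I_n\\K\end{bmatrix}^\top\succeq 0$ deterministically, then invokes the left inequality of \cref{lemma: large eigenvalue part}. That is the whole argument — two lines, no stable event, no martingale split. So your program of bounding $R_T$ on $E_\delta$ via the $\eta_t$-versus-$(\Kh_t-K)x_t$ decomposition is absent from the paper.

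Your instinct that $R_T$ is not PSD is well-founded: writing $z_t=\begin{bmatrix}I_n\\K\end{bmatrix}x_t+\begin{bmatrix}0\\I_d\end{bmatrix}\Delta_t$ gives $R_T=\begin{bmatrix}0 & C\\ C^\top & KC+C^\top K^\top+D\end{bmatrix}$ with $C=\sum_t x_t\Delta_t^\top$, and a block matrix with zero upper-left block is PSD only when $C=0$. So the paper's one-line step is itself not justified as written. But your alternative does not close the gap either. The $(\Kh_t-K)x_t$ contribution to $C$ is $\sum_t x_tx_t^\top(\Kh_t-K)^\top$, and on $E_\delta$ its operator norm is at best $\sum_t\norm{x_t}^2\norm{\Kh_t-K}\lesssim\sum_t t^{-1/4}\text{polylog}(t)\sim T^{3/4}\text{polylog}(T)$ — exactly the crude Cauchy--Schwarz rate you flagged, with the martingale argument helping only the $\eta_t$-part. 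The ``only in the directions where order-$T$ control is needed'' caveat does not rescue this: the obstruction sits in the off-diagonal block $C$, which couples the column space of $\begin{bmatrix}I_n\\K\end{bmatrix}$ to its complement, so restricting attention to either subspace does not see it, and for test vectors $\xi$ with $\norm{\begin{bmatrix}I_n\\K\end{bmatrix}^\top\xi}\sim T^{-1/4}$ the $T^{3/4}$ cross term is the same order as both the target and the BMSB floor.
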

By definition \cref{eq: GT definition},
$G_T-\begin{bmatrix}
I \\
K
\end{bmatrix}
\sum_{t=0}^{T-1} 
x_t x_t^\top
\begin{bmatrix}
I \\
K
\end{bmatrix}^\top\succeq 0$, thus by Lemma \ref{lemma: large eigenvalue part}
\begin{equation}
\label{eq: first lower bound of GT}
    G_T \succsim 
\begin{bmatrix}
I \\
K
\end{bmatrix} T 
\begin{bmatrix}
I \\
K
\end{bmatrix}^\top.
\end{equation}
This lower bound is growing linearly with $T$ but still is low rank, so we combine this with another lower bound which is full rank but grows sub-linearly with $T$.

By Lemma 34 of \citet{wang2020exact}, the process $(z_t)_{t = 0}^{T-1}$ satisfies the $(1, \sigma_\eta^2 T^{-1/2}I_{n+d}, \frac3{10})$-BMSB condition. Here $z_t = \begin{bmatrix}
    x_t \\
    u_t
\end{bmatrix}$. Now we only need an upper bound to gaurantee the lower bound using BMSB condition. Again by \cref{eq: matrix M upper bound}, we have
\begin{equation*}
    \PP{\sum_{t=0}^{T-1} z_tz_t^\top \npreceq \frac{n+d}{\delta}\E\left[\sum_{t=0}^{T-1} z_tz_t^\top\right]} \le \delta.
\end{equation*}
Also by Eq. (104) from \citet{wang2020exact}, $\E\norm{z_t}^2 \lesssim \log^2(t)$. Thus $\E\left[\sum_{t=0}^{T-1} z_tz_t^\top\right] \lesssim \log^2(T)T$, we have
\begin{equation*}
    \PP{\sum_{t=0}^{T-1} z_tz_t^\top \npreceq \frac{C}{\delta}\log^2(T)TI_{n+d}} \le \delta.
\end{equation*}

Now that we have an upper bound, similar to \cref{eq: lower bound gram matrix by BMSB}, we can get the BMSB implied lower bound with $\Gamlow \eqsim T^{-1/2}I_{n+d}$: when $T \gtrsim \log(1/\delta)$, with probability at least $1-\delta$, $G_T = \sum_{t=0}^{T-1} z_tz_t^\top \succsim  T^{1/2}I_{n+d}$. Combining this and \cref{eq: first lower bound of GT}, with probability at least $1-\delta$:
\begin{equation*}
    G_T + G_T \succsim 
        \begin{bmatrix}
            I_n \\
            K
        \end{bmatrix} T 
        \begin{bmatrix}
            I_n \\
            K
        \end{bmatrix}^\top + 
        T^{1/2}  I_{n+d}.
\end{equation*}
We derived the lower bound, and we hope that the upper bound can have a similar form.

\subsection{Proof of Lemma \ref{lemma: upper bound of $G_T$}}
\label{subsection: Proof of lemma: upper bound of $G_T$}

\begin{lemma*}[Upper bound of $G_T$]
\cref{alg:myAlg} applied to a system described by \cref{eq:system eq} under Assumption \ref{asm:InitialStableCondition} satisfies, when $0 < \delta < 1/2$, for any $T \gtrsim \log^3(1/\delta)$, with probability at least $1-\delta$:
\begin{equation}
\label{eq: G_T upper bound*}
    G_T \precsim \left(
    \frac1\delta 
    \begin{bmatrix}
I_n \\
K
\end{bmatrix} T
\begin{bmatrix}
I_n \\
K
\end{bmatrix}^\top + 
\begin{bmatrix}
-K^\top \\
I_d
\end{bmatrix} \maxEig{ \sum_{t=0}^{T-1}\Delta_t\Delta_t^\top}
\begin{bmatrix}
-K^\top \\
I_d
\end{bmatrix}^\top
\right).
\end{equation}
\end{lemma*}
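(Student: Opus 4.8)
The plan is to exploit the orthogonal direct-sum decomposition $\real^{n+d}=\operatorname{col}(\Pi_1)\oplus\operatorname{col}(\Pi_2)$, where $\Pi_1:=\begin{bmatrix}I_n\\K\end{bmatrix}$ and $\Pi_2:=\begin{bmatrix}-K^\top\\I_d\end{bmatrix}$. These two column spaces are orthogonal because $\Pi_1^\top\Pi_2=-K^\top+K^\top=0$, and since they have ranks $n$ and $d$ respectively, every $\xi\in\real^{n+d}$ decomposes uniquely as $\xi=\xi_1+\xi_2$ with $\xi_1=\Pi_1v$ and $\xi_2=\Pi_2w$ for some $v\in\real^n$, $w\in\real^d$. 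Since $G_T\succeq 0$, we have $\xi^\top G_T\xi\le 2\xi_1^\top G_T\xi_1+2\xi_2^\top G_T\xi_2$, and since $\Pi_1^\top\xi_2=\Pi_2^\top\xi_1=0$ the right-hand side of \cref{eq: G_T upper bound*} has no cross terms: $\xi^\top\big(\tfrac1\delta\Pi_1T\Pi_1^\top+\Pi_2\maxEig{\sum_{t=0}^{T-1}\Delta_t\Delta_t^\top}\Pi_2^\top\big)\xi=\tfrac1\delta\,\xi_1^\top\Pi_1T\Pi_1^\top\xi_1+\maxEig{\sum_{t=0}^{T-1}\Delta_t\Delta_t^\top}\,\xi_2^\top\Pi_2\Pi_2^\top\xi_2$. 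So it suffices to bound $\xi_1^\top G_T\xi_1$ by a constant times $\tfrac1\delta\,\xi_1^\top\Pi_1T\Pi_1^\top\xi_1$ and $\xi_2^\top G_T\xi_2$ by a constant times $\maxEig{\sum_{t=0}^{T-1}\Delta_t\Delta_t^\top}\,\xi_2^\top\Pi_2\Pi_2^\top\xi_2$, uniformly in $\xi$, on a high-probability event.

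The $\Pi_2$ direction is the crux of the improvement and is handled exactly. Writing $u_t=Kx_t+\Delta_t$ we get $\Pi_2^\top z_t=-Kx_t+u_t=\Delta_t$, hence $\xi_2^\top z_t=w^\top\Delta_t$ and $\xi_2^\top G_T\xi_2=\sum_{t=0}^{T-1}(w^\top\Delta_t)^2=w^\top\!\Big(\sum_{t=0}^{T-1}\Delta_t\Delta_t^\top\Big)w\le\maxEig{\sum_{t=0}^{T-1}\Delta_t\Delta_t^\top}\norm{w}^2$; that is, in this direction $z_t$ only ``sees'' the perturbation $\Delta_t$ and not the $\Pi_1x_t$ part, so the form is of order $\maxEig{\sum\Delta_t\Delta_t^\top}$ rather than order $T$. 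Since $\Pi_2^\top\Pi_2=KK^\top+I_d\succeq I_d$, we have $\xi_2^\top\Pi_2\Pi_2^\top\xi_2=\norm{(KK^\top+I_d)w}^2\ge\norm{w}^2$, so $\xi_2^\top G_T\xi_2\le\maxEig{\sum_{t=0}^{T-1}\Delta_t\Delta_t^\top}\,\xi_2^\top\Pi_2\Pi_2^\top\xi_2$ with hidden constant $1$.

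For the $\Pi_1$ direction, $\xi_1^\top G_T\xi_1=\sum_t(\xi_1^\top z_t)^2$ mixes the $x_t$- and $\Delta_t$-parts of $z_t$, so \cref{lemma: large eigenvalue part} cannot be applied directly. Instead I would first establish the coarse full-rank bound $\PP{G_T\precsim\tfrac1\delta TI_{n+d}}\ge 1-\delta$ for $T\gtrsim\log^3(1/\delta)$ (discussed below), which gives $\xi_1^\top G_T\xi_1\precsim\tfrac1\delta T\norm{\xi_1}^2$; then, using $\norm{\xi_1}^2=\norm{v}^2+\norm{Kv}^2\le(1+\norm{K}^2)\norm{v}^2$ together with $\xi_1^\top\Pi_1T\Pi_1^\top\xi_1=T\norm{(I_n+K^\top K)v}^2\ge T\norm{v}^2$, one gets $\xi_1^\top G_T\xi_1\precsim\tfrac1\delta\,\xi_1^\top\Pi_1T\Pi_1^\top\xi_1$. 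Combining this with the $\Pi_2$ bound via $\xi^\top G_T\xi\le 2\xi_1^\top G_T\xi_1+2\xi_2^\top G_T\xi_2$ and the cross-term-free identity of the first paragraph yields $\xi^\top G_T\xi\precsim\xi^\top\big(\tfrac1\delta\Pi_1T\Pi_1^\top+\Pi_2\maxEig{\sum_{t=0}^{T-1}\Delta_t\Delta_t^\top}\Pi_2^\top\big)\xi$ for every $\xi$ on the high-probability event, which is \cref{eq: G_T upper bound*}.

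The only nontrivial ingredient, and what I expect to be the main obstacle, is the clean full-rank bound $\PP{G_T\precsim\tfrac1\delta TI_{n+d}}\ge 1-\delta$ without any $\text{polylog}(T)$ factor. I would prove it by mimicking the ``high probability upper bound'' part of the proof of \cref{lemma: large eigenvalue part}, but for $z_t$ rather than $x_t$: on the stable event $E_{\delta}$ of \cref{lemma: favorable event} (which has $\PP{E_{\delta}}\ge 1-\delta$), \cref{lemma: xt norm bound} gives $\EE{\norm{x_t}^21_{E_{\delta}}}\lesssim 1$ for $t\gtrsim\log^2(1/\delta)$, and since \cref{alg:myAlg} enforces $\norm{\Kh_t}\le C_K$ we also have $\EE{\norm{u_t}^21_{E_{\delta}}}\lesssim C_K^2\EE{\norm{x_t}^21_{E_{\delta}}}+\EE{\norm{\eta_t}^2}\lesssim 1$, hence $\EE{\norm{z_t}^21_{E_{\delta}}}\lesssim 1$; the early terms $t\lesssim\log^2(1/\delta)$ contribute at most $\lesssim\log^3(1/\delta)\lesssim T$ through the bound $\EE{\norm{z_t}^2}\lesssim\log^2(t)$ from Eq.~(104) of \citet{wang2020exact}. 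Thus $\EE{G_T1_{E_{\delta}}}\preceq\sum_{t=0}^{T-1}\EE{\norm{z_t}^21_{E_{\delta}}}I_{n+d}\precsim TI_{n+d}$, and applying the matrix Markov inequality \cref{eq: matrix M upper bound} to $G_T1_{E_{\delta}}$ gives $\PP{G_T1_{E_{\delta}}\precsim\tfrac1\delta TI_{n+d}}\ge 1-\delta$; discarding $1_{E_{\delta}}$ costs one more $\delta$, and rescaling $\delta\mapsto\delta/2$ finishes. It is precisely the sharp moment bound $\EE{\norm{x_t}^21_{E_{\delta}}}\lesssim 1$ (in place of the naive $\lesssim\log^2(t)$) that lets this step avoid the stray logarithmic factor; everything else is routine linear algebra.
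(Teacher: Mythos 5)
Your proposal is correct and follows essentially the same route as the paper's proof: the same orthogonal decomposition along $\begin{bmatrix}I_n\\K\end{bmatrix}$ and $\begin{bmatrix}-K^\top\\I_d\end{bmatrix}$, the same $2\xi_1^\top G_T\xi_1+2\xi_2^\top G_T\xi_2$ bound, the same exact identity $\xi_2^\top G_T\xi_2=w^\top\big(\sum_t\Delta_t\Delta_t^\top\big)w$ (the paper derives it via the block-matrix remainder term, you via $\Pi_2^\top z_t=\Delta_t$, which is equivalent and slightly cleaner), and the same coarse bound $G_T\precsim\tfrac1\delta TI_{n+d}$ established through the stable event, the constant-order moment bounds on $\norm{z_t}^2$, and the matrix Markov inequality. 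No gaps.
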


\begin{proof}
Any vector $\xi \in \real^{n+d}$ can be represented as the summation of vectors $\xi_1 \in \real^{n+d}$ and $\xi_2 \in \real^{n+d}$ from orthogonal subspaces spanned by the columns of
$\begin{bmatrix}
I_n \\
K
\end{bmatrix}$ and $\begin{bmatrix}
-K^\top \\
I_d
\end{bmatrix}$. We only need to show that, for any $\xi_1 = \begin{bmatrix}
    I_n \\
    K
\end{bmatrix} \alpha_1$ and $\xi_2 = 
\begin{bmatrix}
    -K^\top \\
    I_d
\end{bmatrix} \alpha_2
$ (with $\alpha_1 \in \real^n$, $\alpha_2 \in \real^d$)
, for any $T \gtrsim \log^3(1/\delta)$, with probability at least $1-\delta$, we have
\begin{equation*}
    (\xi_1 + \xi_2)^\top G_T (\xi_1 + \xi_2) \precsim 
    (\xi_1 + \xi_2)^\top
    \left(
    \frac1\delta 
    \begin{bmatrix}
I_n \\
K
\end{bmatrix} T 
\begin{bmatrix}
I_n \\
K
\end{bmatrix}^\top + 
\begin{bmatrix}
-K^\top \\
I_d
\end{bmatrix} \maxEig{ \sum_{t=0}^{T-1}\Delta_t\Delta_t^\top}
\begin{bmatrix}
-K^\top \\
I_d
\end{bmatrix}^\top
\right)
(\xi_1 + \xi_2).
\end{equation*}
We then show this inequality by the following two inequalities:
\begin{enumerate}
    \item When $T \gtrsim \log^3(1/\delta)$, with probability at least $1-\delta$:
    \begin{align*}
    &(\xi_1 + \xi_2)^\top G_T (\xi_1 + \xi_2) \\
    &\quad \le 2 \xi_1^\top G_T \xi_1 + 2 \xi_2^\top G_T \xi_2 \\
    &\quad \text{(because $\xi_2$ is orthogonal to $\begin{bmatrix}
        I \\
        K
        \end{bmatrix}
        \sum_{t=0}^{T-1} 
        x_t x_t^\top
        \begin{bmatrix}
        I \\
        K
        \end{bmatrix}^\top$)} \\
    &\quad = 2 \xi_1^\top G_T \xi_1 + 
        2 \xi_2^\top 
        \sum_{t=0}^{T-1}
    \begin{bmatrix}
    0_n &  x_t \Delta_t^\top\\
    \Delta_tx_t^\top  & \Delta_t\Delta_t^\top + K x_t \Delta_t^\top + \Delta_tx_t^\top K^\top  \\
    \end{bmatrix}
        \xi_2 \\
    &\quad = 2 \xi_1^\top G_T \xi_1 + 
        2 \left(\begin{bmatrix}
    -K^\top \\
    I_d
    \end{bmatrix} \alpha_2\right)^\top 
        \sum_{t=0}^{T-1}
    \begin{bmatrix}
    0_n &  x_t \Delta_t^\top\\
    \Delta_tx_t^\top  & \Delta_t\Delta_t^\top + K x_t \Delta_t^\top + \Delta_tx_t^\top K^\top  \\
    \end{bmatrix}
        \left(\begin{bmatrix}
        -K^\top \\
        I_d
    \end{bmatrix} \alpha_2\right)
    \\
    &\quad = 2 \xi_1^\top G_T \xi_1 + 
        2 \left(\alpha_2\right)^\top 
    \left(
    \sum_{t=0}^{T-1}\Delta_t\Delta_t^\top 
    \right)
        \left(\alpha_2\right)
    \\
    &\quad \le 2 \xi_1^\top \sum_{t=0}^{T-1}z_{t}z_t^\top \xi_1 + 
        2\norm{\alpha_2}^2
        \maxEig{ \sum_{t=0}^{T-1}\Delta_t\Delta_t^\top}
    \\
    &\quad \text{(Similar to \cref{eq: matrix upper bound}, when $T \gtrsim \log^3(1/\delta)$, $ \PP{\sum_{t=0}^{T-1}z_{t}z_t^\top \precsim \frac{1}{\delta} TI_{n+d} } \ge 1-\delta.$)} \\
    &\quad \lesssim  \frac{1}{\delta} T \norm{\xi_1}^2 + 
         \norm{\xi_2}^2
        \maxEig{ \sum_{t=0}^{T-1}\Delta_t\Delta_t^\top}.
\end{align*}
In the last step for bounding $\sum_{t=0}^{T-1}z_{t}z_t^\top$, most of the steps are the same as \cref{eq: matrix upper bound}. The only differences are:
\begin{enumerate}
    \item Replacing the $x_t$ with $z_t$.
    \item Replacing the dimension $d$ of $x_t$ with the dimension $n+d$ of $z_t$.
\end{enumerate}
We also need two other properties
\begin{enumerate}
    \item $\E \norm{z_t}^2 \lesssim \log^2(t)$, which is proved by Eq.~(104) in \citet{wang2020exact}.
    \item 
    For $t \gtrsim \log^2(1/\delta)$ , we can use Lemma \ref{lemma: xt norm bound}'s conclusion $\EE{\norm{x_{t}}^21_{E_{\delta}}} \lesssim 1$ to prove $\EE{\norm{z_{t}}^21_{E_{\delta}}} \lesssim 1$.
    Recall that $u_t = \Kh_t x_t + \eta_t$.
        \begin{align*}
            &\norm{z_t}^2 \\
            & \quad = \norm{x_t}^2 + \norm{u_t}^2  \\
            & \quad \le \norm{x_t}^2 + 2\norm{\Kh_t x_t}^2 + 2\norm{\eta_t}^2  \\
            & \quad \le (1+2C_K^2)\norm{x_t}^2 + 2\norm{\eta_t}^2.
        \end{align*}
    Thus,
        \begin{align*}
            &\EE{\norm{z_{t}}^21_{E_{\delta}}} \\
            & \quad \le (1+2C_K^2)\EE{\norm{x_t}^21_{E_{\delta}}} + 2\EE{\norm{\eta_t}^2}  \\
            & \quad \lesssim 1.
        \end{align*}
\end{enumerate}
    \item 
\begin{align*}
    & (\xi_1 + \xi_2)^\top\left(
    \frac1\delta 
    \begin{bmatrix}
    I_n \\
    K
    \end{bmatrix} T 
    \begin{bmatrix}
    I_n \\
    K
    \end{bmatrix}^\top + 
    \begin{bmatrix}
    -K^\top \\
    I_d
    \end{bmatrix} \maxEig{ \sum_{t=0}^{T-1}\Delta_t\Delta_t^\top}
    \begin{bmatrix}
    -K^\top \\
    I_d
    \end{bmatrix}^\top
    \right)
    (\xi_1 + \xi_2) \\
    & \quad = 
    \frac1\delta 
    \xi_1^\top
    \begin{bmatrix}
    I_n \\
    K
    \end{bmatrix} T 
    \begin{bmatrix}
    I_n \\
    K
    \end{bmatrix}^\top
    \xi_1 + 
    \xi_2^\top
    \begin{bmatrix}
    -K^\top \\
    I_d
    \end{bmatrix} \maxEig{ \sum_{t=0}^{T-1}\Delta_t\Delta_t^\top}
    \begin{bmatrix}
    -K^\top \\
    I_d
    \end{bmatrix}^\top
    \xi_2 \\
    & \quad =
    \frac1\delta 
    \alpha_1^\top
    (I_n + K^\top K) T  (I_n + K^\top K)
    \alpha_1 + 
    \alpha_2^\top
    (I_d + K K^\top)
    \maxEig{ \sum_{t=0}^{T-1}\Delta_t\Delta_t^\top}
    (I_d + K K^\top)
    \alpha_2 \\
    & \quad \text{(because } I_n + K^\top K \succeq I_n)\\
    & \quad \ge
    \frac1\delta 
    \alpha_1^\top
     T  
    \alpha_1 + 
    \alpha_2^\top
    \maxEig{ \sum_{t=0}^{T-1}\Delta_t\Delta_t^\top}
    \alpha_2 \\
    & \quad =
    \frac1\delta 
    \norm{\alpha_1}^2
     T   + 
    \norm{\alpha_2}^2
    \maxEig{ \sum_{t=0}^{T-1}\Delta_t\Delta_t^\top} \\
    & \quad \gtrsim
    \frac1\delta 
    \norm{\alpha_1}^2 \norm{ \begin{bmatrix}
    I_n \\
    K
    \end{bmatrix}}^2
     T   + 
    \norm{\alpha_2}^2 \norm{ \begin{bmatrix}
    -K^\top \\
    I_d
    \end{bmatrix}}^2
    \maxEig{ \sum_{t=0}^{T-1}\Delta_t\Delta_t^\top} \\
    & \quad \gtrsim
    \frac1\delta 
    T \norm{\xi_1}^2 + 
     \norm{\xi_2}^2
    \maxEig{ \sum_{t=0}^{T-1}\Delta_t\Delta_t^\top}. 
\end{align*}
\end{enumerate}

We complete the proof by combining these two inequalities which have identical right hand side.
\end{proof}

\subsection{Proof of Lemma \ref{lemma: favorable event}}
\label{proof of lemma: favorable event}
\begin{lemma*}
\cref{alg:myAlg} applied to a system described by \cref{eq:system eq} under Assumption \ref{asm:InitialStableCondition} satisfies, for fixed $\epsilon_0 \lesssim 1$ and any  $\delta > 0$,
\begin{equation}
\label{eq: favorable event*}
    E_\delta := \mathset{\lnorm{\hat\Theta_T - \Theta}, \lnorm{\Kh_T - K} \le \epsilon_0
\text{, for all } T \gtrsim  \log^2(1/\delta)}, \PP{E_\delta} \ge 1-\delta.
\end{equation}
\end{lemma*}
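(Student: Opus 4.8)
The plan is to obtain $E_\delta$ from a union bound over all times $T$ at least a threshold $T_\ast := C_0\log^2(1/\delta)$, where the constant $C_0$ (allowed to depend on $\epsilon_0$ and the system parameters, but not on $T$ or $\delta$) will be fixed later. For each integer $T \ge T_\ast$ I would apply \cref{lemma: delta theta bound} with its confidence parameter set to $\delta_T$, where $\{\delta_T\}_{T \ge T_\ast}$ is a summable sequence with $\sum_{T \ge T_\ast}\delta_T \le \delta$; a concrete choice is $\delta_T := (6/\pi^2)\,\delta\,(T - T_\ast + 1)^{-2}$, which automatically satisfies $\delta_T \le \delta < 1/2$. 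The hypothesis $T \gtrsim \log(1/\delta_T)$ of \cref{lemma: delta theta bound} is easily checked, since $\log(1/\delta_T) \lesssim \log(1/\delta) + \log T$ while $T \ge T_\ast = C_0\log^2(1/\delta)$. Taking a union bound over $T \ge T_\ast$ then gives, with probability at least $1 - \delta$, simultaneously for every such $T$,
\[
\lnorm{\hat\Theta_T - \Theta} \lesssim T^{-1/4}\sqrt{\log T + \log(1/\delta_T)}.
\]

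I would then bound this right-hand side uniformly by a quantity that vanishes as $C_0 \to \infty$. Using $\log(1/\delta_T) \lesssim \log(1/\delta) + \log T$ together with $\log(1/\delta) = \sqrt{T_\ast/C_0} \le \sqrt{T/C_0}$ for $T \ge T_\ast$, and the fact that $T \mapsto T^{-1/4}\sqrt{\log T}$ is decreasing past an absolute constant (which applies for all $T \ge T_\ast$, as $\delta < 1/2$ forces $T_\ast \gtrsim C_0$), the displayed bound is at most an absolute-constant multiple of $C_0^{-1/4}\sqrt{\log C_0}$, uniformly over $T \ge T_\ast$. Choosing $C_0$ large enough — in terms of $\epsilon_0$, the constant hidden in \cref{lemma: delta theta bound}, and the constant $C'$ hidden in Proposition~4 of \citet{simchowitz2020naive} — makes this at most $\min(\epsilon_0, \epsilon_0/C')$ for every $T \ge T_\ast$ on this event, whence $\lnorm{\hat\Theta_T - \Theta} \le \epsilon_0$; Proposition~4 of \citet{simchowitz2020naive} then also gives $\lnorm{\Kh_T - K} \le C'\lnorm{\hat\Theta_T - \Theta} \le \epsilon_0$. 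Thus on an event of probability at least $1 - \delta$ we have $\lnorm{\hat\Theta_T - \Theta}, \lnorm{\Kh_T - K} \le \epsilon_0$ for all $T \ge T_\ast = C_0\log^2(1/\delta)$, which is exactly $E_\delta$.

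The delicate part is the interplay between the decay rate of $\{\delta_T\}$ and the size of $T_\ast$: keeping $\lnorm{\hat\Theta_T - \Theta} \le \epsilon_0$ forces $\log(1/\delta_T)$ to be at most of order $\epsilon_0^2\sqrt{T}$, so $\{\delta_T\}$ cannot decay faster than roughly $\exp(-c\,\epsilon_0^2\sqrt{T})$; for such slowly-decaying tails, pushing $\sum_{T \ge T_\ast}\delta_T$ below $\delta$ is exactly what forces $T_\ast$ to be a sufficiently large constant multiple of $\log^2(1/\delta)$ — the multiple blowing up like $\epsilon_0^{-4}$, but remaining a genuine constant — and the main bookkeeping is confirming that this yields $T_\ast$ of order exactly $\log^2(1/\delta)$ rather than a higher power of $\log(1/\delta)$. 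Everything else is off-the-shelf: the per-time tail bound is \cref{lemma: delta theta bound}, and the passage from the estimate of $\hat\Theta_T$ to that of $\Kh_T$ is a black-box invocation of the cited proposition.
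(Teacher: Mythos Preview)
Your proposal is correct and follows essentially the same approach as the paper: apply \cref{lemma: delta theta bound} with per-time confidence $\delta_T \propto \delta/T^2$, union-bound over $T$, check that the resulting uniform bound $T^{-1/4}\sqrt{\log T + \log(1/\delta)}$ drops below $\epsilon_0$ once $T \gtrsim \epsilon_0^{-4}\log^2(1/\delta)$, and then transfer to $\Kh_T$ via Proposition~4 of \citet{simchowitz2020naive}. The only cosmetic difference is that the paper first union-bounds over the larger range $T \gtrsim \log(1/\delta)$ and then restricts, whereas you start the union bound directly at $T_\ast = C_0\log^2(1/\delta)$; both lead to the same conclusion with the same ingredients.
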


\begin{proof}
By Lemma \ref{lemma: delta theta bound}, replacing $\delta$ by $\delta/T^2$, the condition on $T$ becomes $T \gtrsim \log(T^2/\delta)$, which is $T \gtrsim \log(1/\delta)$, we have
\begin{align*}
\P\left[\lnorm{\hat\Theta_T - \Theta} \gtrsim T^{-1/4}\sqrt{
\log T + \log\left(T^2/\delta\right)} 
\right] \le  \delta/T^2.
\end{align*}
Since $\sum_{T=2}^\infty 1/T^2 < \infty$, we can sum $T \gtrsim \log(1/\delta) $ these equations up:
\begin{equation*}
    \P\left[\text{Exists } T \gtrsim \log(1/\delta), \lnorm{\hat\Theta_T - \Theta} \gtrsim
    T^{-1/4}\sqrt{
3\log t + \log\left(1/\delta\right)} 
\right] \le  \delta \sum_{T=2}^\infty 1/T^2.
\end{equation*}
Let new $\delta = 3 \delta \sum_{T=2}^\infty 1/T^2$
, and we can hide the constants. As a result, we still have
\begin{equation*}
    \P\left[\text{Exists } T \gtrsim \log(1/\delta), \lnorm{\hat\Theta_T - \Theta} \gtrsim
    T^{-1/4}\sqrt{
\log T + \log\left(1/\delta\right)} 
\right] \le  \delta.
\end{equation*}

We need a uniform upper bound $\epsilon_0$ on $\lnorm{\hat\Theta_T - \Theta}$. Take $\epsilon_0$ that satisfies: $\norm{B}\epsilon_0 < 1 - \frac{1 + \rho(A+BK)}{2}$. Here $\rho(\cdot)$ is the spectural radius function. This choice of $\epsilon_0$ is to make sure even after perturbation, the system controlled by $\Kh$ is still ``stable''. Here we use quotes on stable as stability is not satisfied by the perturbation bound itself because $\rho(A+B\Kh_T) \le \rho(A+BK) + \rho(B(\Kh_T-K))$ does not hold, but this condition serves similar utility as stability as we will see in \cref{subsection: proof of lemma: xt norm bound}. 
We need to find the condition for $T$ to satisfy:
\begin{equation*}
    T^{-1/4}\sqrt{
\log t + \log\left(1/\delta\right)} \lesssim \epsilon_0.
\end{equation*}
\begin{equation*}
    T^{-1/2}(
\log t + \log\left(1/\delta\right)) \lesssim \epsilon_0^2.
\end{equation*}
\begin{equation*}
    \epsilon_0^{-4}
    \left(
\log T + \log\left(1/\delta\right)
\right)^2 \lesssim T.
\end{equation*}
Because $T$ dominates $\log(T)$, and we can hide constant $\epsilon_0^4$, the final equation can be simplified to 
\begin{equation*}
T \gtrsim \log^2(1/\delta).
\end{equation*}
Now we replace $T^{-1/4}\sqrt{
    \log T + \log\left(1/\delta\right)} $ with $\epsilon_0$, and take the complement of the whole event:
\begin{equation*}
    \P\left[\text{For all } T \gtrsim \log^2(1/\delta), \lnorm{\hat\Theta_T - \Theta} \lesssim \epsilon_0
    \right] \le  1-\delta.
\end{equation*}
By \cref{eq: delta K controlled by delta theta} we can also control $\lnorm{\Kh_T - K}$ along with $\lnorm{\hat\Theta_T - \Theta}$. 
With probability $1-\delta$, we have the following event holds:
\begin{equation*}
    E_\delta := \mathset{\lnorm{\hat\Theta_T - \Theta}, \lnorm{\Kh_T - K} \le \epsilon_0
\text{, for all } T \gtrsim  \log^2(1/\delta)}.
\end{equation*}

\end{proof}

\subsection{Proof of Lemma \ref{lemma: xt norm bound}}
\label{subsection: proof of lemma: xt norm bound}

\begin{lemma*}
\cref{alg:myAlg} applied to a system described by \cref{eq:system eq} under Assumption \ref{asm:InitialStableCondition} satisfies, for any $0 < \delta < 1/2$, $k \in \mathbb{N}$ and $T \gtrsim \log^2(1/\delta)$,
\begin{equation*}
    \EE{\norm{x_{t}}^k1_{E_{\delta}}} \lesssim 1.
\end{equation*}
\end{lemma*}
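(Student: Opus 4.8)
The plan is to restart the closed-loop recursion at the first time after which, on $E_\delta$, the estimated controller is provably close to $K$, and to use a \emph{single} contraction norm so that the (at most polylogarithmic) size of the state at that time washes out geometrically. Write $x_{t+1}=(A+B\Kh_t)x_t+w_t$ with $w_t:=B\eta_t+\varepsilon_t$; since $\eta_t\sim\calN(0,\sigma_\eta^2 t^{-1/2}I_d)$ and $\varepsilon_t\sim\calN(0,\sigma_\varepsilon^2 I_n)$ are independent, $w_t$ is mean-zero with covariance $\preceq(\sigma_\varepsilon^2+\sigma_\eta^2\norm{B}^2)I_n$, so $\EE{\norm{w_t}^k}\lesssim1$ uniformly in $t$ for each fixed $k$. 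Put $\gamma:=\tfrac12(1+\rho(A+BK))<1$; since $\rho(A+BK)<\gamma$, there is $P\succ0$ whose induced operator norm $\norm{M}_P:=\norm{P^{1/2}MP^{-1/2}}$ satisfies $\norm{A+BK}_P\le\gamma$, and $\norm{\cdot}_P\eqsim\norm{\cdot}$. By the choice of $\epsilon_0$ in \cref{lemma: favorable event} (the condition $\norm{B}\epsilon_0<1-\gamma$, up to absorbing norm-equivalence constants, and taking $\epsilon_0$ small enough that $\norm{K}+\epsilon_0<C_K$), any $\Kh$ with $\norm{\Kh-K}\le\epsilon_0$ has $\norm{A+B\Kh}_P\le\gamma'<1$.

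Next, fix the threshold $T_1\eqsim\log^2(1/\delta)$ from \cref{lemma: favorable event}. On $E_\delta$ and for every $t\ge T_1$ we have $\norm{\Kh_t-K}\le\epsilon_0$, hence $\norm{\Kh_t}\le\norm{K}+\epsilon_0<C_K$, so the spectral-norm branch of the reset in line~\ref{line:check} never fires; I would argue similarly that the state-norm branch is inactive on $E_\delta$ for $t\ge T_1$, so that $\Kh_t$ remains the DARE solution and $\norm{A+B\Kh_t}_P\le\gamma'$. Consequently the products $\Phi_{t,s}:=\prod_{r=s}^{t-1}(A+B\Kh_r)$ satisfy $\norm{\Phi_{t,s}}_P\le(\gamma')^{t-s}$ for $t\ge s\ge T_1$, and unrolling the recursion from $T_1$ gives, on $E_\delta$,
\begin{equation*}
    \norm{x_t}_P\;\le\;(\gamma')^{t-T_1}\norm{x_{T_1}}_P\;+\;\sum_{s=T_1}^{t-1}(\gamma')^{\,t-1-s}\norm{w_s}_P .
\end{equation*}

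Multiplying through by $1_{E_\delta}$, dropping the indicator on the (nonnegative) right-hand side, and taking $L^k$-norms (Minkowski) yields
\begin{equation*}
    \EE{\norm{x_t}^k 1_{E_\delta}}^{1/k}\;\lesssim\;(\gamma')^{t-T_1}\EE{\norm{x_{T_1}}^k}^{1/k}\;+\;\sum_{s=T_1}^{t-1}(\gamma')^{\,t-1-s}\EE{\norm{w_s}^k}^{1/k}.
\end{equation*}
The second term is a geometric sum of uniformly bounded noise moments, hence $\lesssim(1-\gamma')^{-1}\lesssim1$. For the first term, the straightforward extension to exponent $k$ of the argument behind Eq.~(104) of \citet{wang2020exact} gives $\EE{\norm{x_{T_1}}^k}\lesssim\mathrm{polylog}(T_1)\lesssim\mathrm{polylog}(\log(1/\delta))$; since the hypothesis $T\gtrsim\log^2(1/\delta)$ lets us take the implicit constant large enough that $t-T_1\gtrsim\log^2(1/\delta)$, the super-polynomial decay of $(\gamma')^{c\log^2(1/\delta)}$ dominates $\mathrm{polylog}(\log(1/\delta))$, so $\sup_{0<\delta<1/2}(\gamma')^{t-T_1}\EE{\norm{x_{T_1}}^k}^{1/k}$ is a finite, $\delta$-free constant. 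Combining the two terms and using $\norm{\cdot}_P\eqsim\norm{\cdot}$ gives $\EE{\norm{x_t}^k 1_{E_\delta}}\lesssim1$.

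The hard part is the step deferred above: verifying that on $E_\delta$ the reset of line~\ref{line:check} leaves $\Kh_t$ untouched for all $t\ge T_1$, or — failing that — that the transition products $\Phi_{t,s}$ still admit a uniform geometric bound in one fixed norm for $t\ge s\ge T_1$ (since $A+BK_0$ is also stable, each reset can be absorbed into an extra multiplicative constant and a slightly larger rate $\gamma''<1$, but controlling how often resets can occur on $E_\delta$ is what requires care). Everything else is routine moment bookkeeping; this argument also explains the improvement over the unconditional bound $\EE{\norm{x_t}^2}\lesssim\log^2 t$, since on $E_\delta$ there are no badly-estimated (non-stabilizing) controllers after $T_1$ and hence no occasional polylogarithmically-large excursions of the state.
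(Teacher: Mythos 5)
Your proposal is correct and follows essentially the same route as the paper's proof: restart the closed-loop recursion at a time $T_1\eqsim\log^2(1/\delta)$, use that on $E_\delta$ the transition products contract geometrically (you via a $P$-weighted operator norm, the paper via the constant $\tau(L,\rho)$ from Lemma~43 of \citet{wang2020exact} --- these are interchangeable), bound the noise convolution by a convergent geometric series of uniformly bounded moments, and let the contraction absorb the state at time $T_1$.

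Two remarks. First, the step you defer --- ruling out resets on $E_\delta$ after $T_1$ --- is not actually needed: $E_\delta$ as defined in \cref{lemma: favorable event} is the event that the controller \emph{actually applied} by the algorithm satisfies $\norm{\Kh_t-K}\le\epsilon_0$ for all $t\gtrsim\log^2(1/\delta)$, so on $E_\delta$ every factor $A+B\Kh_r$ in your product $\Phi_{t,s}$ is within $\norm{B}\epsilon_0$ of $A+BK$ by fiat, which is all the geometric bound requires; whether line~\ref{line:check} could violate this is the burden of the separate claim $\P(E_\delta)\ge 1-\delta$, which you are entitled to cite from \cref{lemma: favorable event}. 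Second, the only substantive difference from the paper is your treatment of the initial condition: you invoke $\E\norm{x_{T_1}}^k\lesssim\mathrm{polylog}(T_1)$ for general $k$, which the paper only asserts for $k=2,4$; the paper instead uses the cruder bound $\E\norm{x_m}^k\lesssim C^{km}$ (obtained by iterating the one-step recursion with $\norm{\Kh_m}\le C_K$) and compensates with a proportionally longer burn-in $t\ge(1+\log_{1/\rho_0}C)m$, which still reads as $t\gtrsim\log^2(1/\delta)$. Either works; the paper's version avoids having to extend Eq.~(104) of \citet{wang2020exact} to arbitrary moments.
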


\begin{proof}
We know from the proof of Lemma 19 from \citet{wang2020exact} that for any $m > 0$:
\begin{equation}
\label{eq: x_t+m m step decomposition}
    x_{t+m} = \sum_{p=t}^{t+m-1} 
    (A + B\Kh_{t+m-1})\cdots (A + B\Kh_{p+1})(B \eta_p + \varepsilon_p) +
    (A + B\Kh_{t+m-1})\cdots (A + B\Kh_{t})x_t.
\end{equation}

By Lemma 43 from \citet{wang2020exact}, as long as $\Kh_t$ is consistent, the norm of such product $(A + B\Kh_{t+m-1})\cdots (A + B\Kh_{p+1})$ is decaying exponentially fast. More specifically, denote $L := A+BK$, which by Assumption \ref{asm:InitialStableCondition} satisfies $\rho(L) < 1$. Further define
\begin{equation*}
    \tau(L, \rho) := \sup\mathset{\norm{L^k}\rho^{-k}: k \ge 0}.
\end{equation*}
The proof of Lemma 43 of \citet{wang2020exact} showed that
\begin{align*}
    &\lnorm{(A + B\Kh_{t+m-1})\cdots (A + B\Kh_{p+1})} \\
    & \quad \le
    \tau\left(L, \frac{1 + \rho(L)}{2}\right) 
    \left(
    \frac{1+\rho(L)}{2} + \norm{B(\Kh_{t+m-1} - K)} 
    \right) \cdots 
    \left(
    \frac{1+\rho(L)}{2} + \norm{B(\Kh_{p} - K)} 
    \right).
\end{align*}
By \cref{eq: favorable event}, under event $E_\delta$, when $t \gtrsim \log^2(1/\delta)$, the difference $\Kh_{t} - K$ is uniformly bounded by $\epsilon_0$. Denote $\rho_0 = \frac{1 + \rho(L)}{2} + \norm{B}\epsilon_0 < 1$. When $p \gtrsim  \log^2(1/\delta)$, 
\begin{equation*}
    \lnorm{(A + B\Kh_{t+m-1})\cdots (A + B\Kh_{p+1})1_{E_{\delta}}} 
    \lesssim \rho_0^{t+m-p}.
\end{equation*}
Apply this equation to \cref{eq: x_t+m m step decomposition}:
\begin{align*}
    &\EE{\norm{x_{t+m}}^k 1_{E_{\delta} }}  \\
    & \quad \lesssim  \EE{\left(
    \norm{\sum_{p=t}^{t+m-1} 
    \rho_0^{t+m-p} (B \eta_p + \varepsilon_p)}
    +
    \rho_0^{m}\norm{x_t}\right)^k 1_{E_{\delta}}} \\
    & \quad \text{(By Holder's inequality)} \\
    &  \quad \lesssim  2^{k-1} \EE{
        \norm{\sum_{p=t}^{t+m-1} 
    \rho_0^{t+m-p} (B \eta_p + \varepsilon_p)}^k 1_{E_{\delta}}
    +
    \rho_0^{km}\norm{x_t}^k 1_{E_{\delta}}} \\
    & \quad \lesssim
    \EE{
        \norm{\sum_{p=t}^{t+m-1} 
    \rho_0^{t+m-p} (B \eta_p + \varepsilon_p)}^k} + 
    \rho_0^{km} \EE{\norm{x_{t}}^k1_{E_{\delta}}}.
\end{align*}
Consider the variance of $\sum_{p=t}^{t+m-1} 
    \rho_0^{t+m-p} (B \eta_p + \varepsilon_p)$:
\begin{align*}
    &\Var{\sum_{p=t}^{t+m-1} 
    \rho_0^{t+m-p} (B \eta_p + \varepsilon_p)} \\
    & \quad =  \sum_{p=t}^{t+m-1}\rho_0^{2(t+m-p)} \Var{
    (B \eta_p + \varepsilon_p)} \\
    & \quad \lesssim  \sum_{p=t}^{t+m-1}\rho_0^{2(t+m-p)} =  \sum_{i=1}^{m}\rho_0^{2i} \lesssim 1.
\end{align*}
Since $\sum_{p=t}^{t+m-1} 
    \rho_0^{t+m-p} (B \eta_p + \varepsilon_p)$ is Gaussian with finite variance,
     the first item is of constant order for any $m$. Thus 
\begin{equation*}
    \EE{\norm{x_{t+m}}^k 1_{E_{\delta} }} \lesssim 1 + \rho_0^{km} \EE{\norm{x_{t}}^k1_{E_{\delta}}}.
\end{equation*}
Replace $t \leftarrow m$, and $m \leftarrow t-m$, then for $m \gtrsim \log^2(1/\delta)$,
\begin{equation*}
    \EE{\norm{x_{t}}^k 1_{E_{\delta} }} \lesssim 1 + \rho_0^{k(t-m)} \EE{\norm{x_{m}}^k1_{E_{\delta}}}.
\end{equation*}
Since the $\Kh_t$ in \cref{alg:myAlg} cannot have norm greater than $C_K$, we have
    \begin{align*}
    &\EE{\norm{x_{m}}^k1_{E_{\delta}}} \\
        & \quad \le \E\norm{x_m}^k \\
        & \quad \le \E\left((\norm{A}+\norm{B}\norm{\Kh_m})\norm{x_{m-1}} + \norm{B}\norm{\eta_m} + \norm{\varepsilon_m}\right)^k \\
        & \quad \text{(By Holder's inequality)} \\
        & \quad \le 3^{k-1}\left((\norm{A}+\norm{B}C_K)^k\E\norm{x_{m-1}}^k + \norm{B}^k\E\norm{\eta_m}^k + \norm{\varepsilon_m}^k\right)\\
        & \quad \le 3^{k-1}\left((\norm{A}+\norm{B}C_K)^k\E\norm{x_{m-1}}^k + \norm{B}^k\sigma_{\eta}^k + \sigma_\varepsilon^k\right).
    \end{align*}
By iterating this inequality down to $\norm{x_0}^2$, we know that $\E\norm{x_m}^k  \lesssim C^{km}$ for some constant $C$. Thus, we know
\begin{equation*}
    \EE{\norm{x_{t}}^k 1_{E_{\delta} }} \lesssim 1 + \rho_0^{k(t-m)} C^{km}.
\end{equation*}
Since $\rho_0 < 1$, we can take $t \ge (\log_{1/\rho_0}(C) + 1)m$ which satisfies $\rho_0^{k(t-m)} C^{km} \le 1$. Because we require $m \gtrsim \log^2(1/\delta)$, the condition for $t \ge (\log_{1/\rho_0}(C) + 1)m$ is still $t \gtrsim \log^2(1/\delta)$, which satisfies

\begin{equation*}
    \EE{\norm{x_{t}}^k 1_{E_{\delta}} } \lesssim 1.
\end{equation*}

\end{proof}

\section{Proof of \cref{theorem:regret}}
\label{subsection: proof of theorem:regret}
\begin{theorem*}
\cref{alg:myAlg} applied to a system described by \cref{eq:system eq} under Assumption \ref{asm:InitialStableCondition} satisfies
\begin{equation}
    \label{eq: tight regret*}
    \mathcal{R}(U,T) = \bigOp{\sqrt{T}}.
\end{equation}
\end{theorem*}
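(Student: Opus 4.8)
The plan is to derive the regret bound directly from \cref{lemma: cost for our algo and optimal algo}, which supplies the two cost expansions
\[
\mathcal{J}(U,T) = \sum_{t=1}^{T}\tilde\varepsilon_t^\top P\tilde\varepsilon_t + \sum_{t=1}^{T}\eta_t^\top R\eta_t + \bigOp{\sqrt T},\qquad \mathcal{J}(U^*,T) = \sum_{t=1}^{T}\varepsilon_t^\top P\varepsilon_t + \bigOp{\sqrt T},
\]
with $\tilde\varepsilon_t = B\eta_t + \varepsilon_t$. Subtracting and using $\tilde\varepsilon_t^\top P\tilde\varepsilon_t - \varepsilon_t^\top P\varepsilon_t = 2\varepsilon_t^\top PB\eta_t + (B\eta_t)^\top P(B\eta_t)$ reduces the theorem to showing each of $\sum_t (B\eta_t)^\top P(B\eta_t)$, $\sum_t \eta_t^\top R\eta_t$, and $2\sum_t \varepsilon_t^\top PB\eta_t$ is $\bigOp{\sqrt T}$. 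For the first two, $\EE{(B\eta_t)^\top P(B\eta_t)}$ and $\EE{\eta_t^\top R\eta_t}$ are both $\asymp \sigma_\eta^2 t^{-1/2}$, so their means sum to $\asymp \sqrt T$ while per-term variances are $\asymp t^{-1}$ (summing to $\asymp \log T$), hence both sums are $\bigOp{\sqrt T}$. For the cross term, $\{\varepsilon_t^\top PB\eta_t\}$ is a martingale difference sequence (given the history through $x_t$, the fresh noises $\varepsilon_t$ and $\eta_t$ are independent and mean zero) with conditional variance $\asymp t^{-1/2}$; the conditional variances sum to $\asymp \sqrt T$, so this term is $\bigOp{T^{1/4}} = o_p(\sqrt T)$. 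Collecting, $\mathcal{R}(U,T) = \bigOp{\sqrt T}$.

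This pushes the substance into \cref{lemma: cost for our algo and optimal algo}, so the bulk of the argument is its proof; only the first expansion needs proof, the second being the $\eta_t\equiv 0$, $\Kh_t\equiv K$ specialization. The device is a change of system: the states $x_t$ produced by \cref{alg:myAlg} are exactly those of a system with noise $\tilde\varepsilon_t$ run under the noiseless controller $\tilde u_t = \Kh_t x_t$, so $\mathcal{J}(U,T) = \sum_t\bigl(x_t^\top Qx_t + \tilde u_t^\top R\tilde u_t\bigr) + \sum_t\bigl(u_t^\top Ru_t - \tilde u_t^\top R\tilde u_t\bigr)$. The second sum equals $2\sum_t(\Kh_t x_t)^\top R\eta_t + \sum_t \eta_t^\top R\eta_t$, whose cross piece is $o(T^{1/4}\log^{3/2}T)$ a.s.\ by Eq.~(83) of \citet{wang2020exact}. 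I would rewrite the first sum, via the Riccati/value-function identity (Lemma 18 of \citet{wang2020exact}), as $\sum_t\tilde\varepsilon_t^\top P\tilde\varepsilon_t + \sum_t x_t^\top(\Kh_t-K)^\top(R+B^\top PB)(\Kh_t-K)x_t + 2\sum_t\tilde\varepsilon_t^\top P(A+B\Kh_t)x_t + \logOO_p(1)$; the third sum is a martingale whose increments satisfy $\tilde\varepsilon_t \independent P(A+B\Kh_t)x_t$ with $\|A+B\Kh_t\|$ bounded (\cref{alg:myAlg} enforces $\|\Kh_t\|\le C_K$) and $\|x_t\|$ of constant order, so a standard martingale square-function bound makes it $\bigOp{\sqrt T}$, and assembling the pieces gives the expansion.

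The main obstacle is the middle term $S_T := \sum_{t=1}^{T} x_t^\top(\Kh_t-K)^\top(R+B^\top PB)(\Kh_t-K)x_t$: \cref{theorem: theta bound without log} gives $\norm{\Kh_t-K} = \bigOp{t^{-1/4}}$ only marginally at each fixed $t$, whereas here the whole trajectory must be well-behaved simultaneously, and the rare events on which $x_t$ blows up or $\Kh_t$ is reset must be dispatched. I would handle this exactly as in the proof of \cref{lemma: lambda max sum of delta deltaT bound}: restrict to the stable event $E_\delta$ of \cref{lemma: favorable event} (probability $\ge 1-\delta$, and on it $\EE{\norm{x_t}^4 1_{E_\delta}} \lesssim 1$ for $t\gtrsim\log^2(1/\delta)$ by \cref{lemma: xt norm bound}), combine with the fourth-moment estimate $\max_{s\ge T_0}\EE{s\,\norm{\Kh_s-K}^4} \lesssim 1$ established inside the proof of \cref{theorem: theta bound without log}, and apply Cauchy--Schwarz to get $\EE{\norm{\Kh_t-K}^2\norm{x_t}^2 1_{E_\delta}} \lesssim t^{-1/2}$, whence $\EE{S_T 1_{E_\delta}} \lesssim \log^3(1/\delta) + \sum_{t=1}^{T} t^{-1/2} \asymp \sqrt T$ (the first $\lesssim\log^2(1/\delta)$ terms handled crudely via $\norm{\Kh_t}\le C_K$ and $\EE{\norm{x_t}^2}\lesssim\log^2 t$ from Eq.~(104) of \citet{wang2020exact}). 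Markov's inequality then yields $S_T 1_{E_\delta} = \bigOp{\sqrt T}$, and since $\PP{E_\delta^c}\le\delta$ is arbitrarily small, $S_T = \bigOp{\sqrt T}$; the same stable-event machinery is what licenses treating $\norm{x_t}$ and $\norm{\tilde\varepsilon_t}$ as constant order in the martingale step above.
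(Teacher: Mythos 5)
Your proposal is correct and follows essentially the same route as the paper: the same reduction of the regret to the three noise sums via Lemma \ref{lemma: cost for our algo and optimal algo}, the same change-of-system device with $\tilde\varepsilon_t = B\eta_t + \varepsilon_t$ and $\tilde u_t = \Kh_t x_t$, the same Riccati-identity decomposition, and the same stable-event/fourth-moment treatment of $\sum_t x_t^\top(\Kh_t-K)^\top(R+B^\top PB)(\Kh_t-K)x_t$ (your Cauchy--Schwarz step is interchangeable with the paper's weighted AM--GM bound $\norm{x_t}^2\norm{\Kh_t-K}^2 \lesssim t^{1/2}\norm{\Kh_t-K}^4 + t^{-1/2}\norm{x_t}^4$). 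No gaps.
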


\begin{proof}
Recall from Lemma \ref{lemma: cost for our algo and optimal algo} that 
\begin{equation*}
        \mathcal{J}(U,T)  = \sum_{t=1}^{T} \tilde\varepsilon_{t}^\top P \tilde\varepsilon_{t} +
    \sum_{t=1}^{T} \eta_{t}^\top R \eta_{t} + \bigOp{T^{1/2}},
\end{equation*}
and 
\begin{equation*}
        \mathcal{J}(U^*,T)
        = \sum_{t=1}^{T} \varepsilon_{t}^\top P \varepsilon_{t}  + \bigOp{T^{1/2}}.
\end{equation*}
Thus
\begin{align}
\label{eq: diff myalg optalg}
\begin{split}
    &\mathcal{R}(U,T) \\
    &\quad = \mathcal{J}(U,T)- \mathcal{J}(U^*,T) \\
    &\quad = \sum_{t=1}^{T} \tilde\varepsilon_{t}^\top P \tilde\varepsilon_{t} +
    \sum_{t=1}^{T} \eta_{t}^\top R \eta_{t} - \sum_{t=1}^{T} \varepsilon_{t}^\top P \varepsilon_{t} + \bigOp{T^{1/2}} \\
    &\quad = 2\sum_{t=1}^{T} \varepsilon_{t}^\top P (B\eta_t) + \sum_{t=1}^{T} (B\eta_t)^\top P (B\eta_t) + 
    \sum_{t=1}^{T} \eta_{t}^\top R \eta_{t} + \bigOp{T^{1/2}}.
\end{split}
\end{align}
Recall that $\eta_t \sim \calN(0, \sigma_\eta^2 t^{-1/2}I_\inputdim))$.
\begin{equation*}
    \begin{split}
        \E \sum_{t=1}^{T} \eta_t^\top R \eta_t  = \sum_{t=1}^{T} \Tr (R \E \eta_t \eta_t^\top ) = \sum_{t=1}^{T} \Tr (R \sigma_\eta^2 t^{-1/2}  ) = \bigO{T^{1/2}}
    .\end{split}
\end{equation*}

\begin{equation*}
    \begin{split}
        \Var{\sum_{t=1}^{T} \eta_t^\top R \eta_t}  = \sum_{t=1}^{T} \Var{\eta_t^\top R \eta_t} = \sum_{t=1}^{T} \bigO{t^{-1}} = \bigO{\log(T)}
    .\end{split}
\end{equation*}
The standard error is of smaller order than the expectation. Thus, $\sum_{t=1}^{T} \eta_t^\top R \eta_t  = \bigOp{T^{1/2}}$. Similarly, $\sum_{t=1}^{T} (B\eta_t)^\top P (B\eta_t)  = \bigOp{T^{1/2}}$.

It remains to consider the order of $\sum_{t=1}^{T} \varepsilon_{t}^\top P (B\eta_t)$. Its expectation is $0$.
\begin{equation*}
    \begin{split}
        \E \sum_{t=1}^{T} \varepsilon_{t}^\top P (B\eta_t) = 0
    .\end{split}
\end{equation*}
The variance is
\begin{equation*}
    \begin{split}
        \Var{\sum_{t=1}^{T} \varepsilon_{t}^\top P (B\eta_t)} 
        = \sum_{t=1}^{T}\Var{ \varepsilon_{t}^\top P (B\eta_t)}  
         = \sum_{t=1}^{T} \bigO{t^{-1/2}}
         = \bigO{T^{1/2}}
    .\end{split}
\end{equation*}
The standard error is of order $T^{1/4}$. Thus, $\sum_{t=1}^{T} \varepsilon_{t}^\top P (B\eta_t) = \bigOp{T^{1/4}} = o_p(T^{1/2})$. Using these results, \cref{eq: diff myalg optalg} becomes:
\begin{align*}
    &\mathcal{R}(U,T) \\
    &\quad = 2\sum_{t=1}^{T} \varepsilon_{t}^\top P (B\eta_t) + \sum_{t=1}^{T} (B\eta_t)^\top P (B\eta_t) + 
    \sum_{t=1}^{T} \eta_{t}^\top R \eta_{t} + \bigOp{T^{1/2}} \\
    &\quad = \bigOp{T^{1/2}}.
\end{align*}
\end{proof}

\subsection{Proof of Lemma \ref{lemma: cost for our algo and optimal algo}}
\label{subsection: proof of lemma: cost for our algo and optimal algo}
\begin{lemma*}
\cref{alg:myAlg} applied to a system described by \cref{eq:system eq} under Assumption \ref{asm:InitialStableCondition} satisfies, 
\begin{equation*}
        \mathcal{J}(U,T)  = \sum_{t=1}^{T} \tilde\varepsilon_{t}^\top P \tilde\varepsilon_{t} +
    \sum_{t=1}^{T} \eta_{t}^\top R \eta_{t} + \bigOp{T^{1/2}}.
\end{equation*}
and 
\begin{equation*}
        \mathcal{J}(U^*,T)
        = \sum_{t=1}^{T} \varepsilon_{t}^\top P \varepsilon_{t}  + \bigOp{T^{1/2}},
\end{equation*}
where $\varepsilon_t$ is the system noise and $\eta_t$ is the exploration noise in \cref{alg:myAlg}, and $\tilde{\varepsilon}_t = B\eta_t + \varepsilon_t$. 
\end{lemma*}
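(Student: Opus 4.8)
The plan is to establish the identity for $\mathcal{J}(U,T)$ and then obtain the one for $\mathcal{J}(U^*,T)$ as the degenerate case $\eta_t\equiv 0$, $\Kh_t\equiv K$. The algebraic backbone is the LQR completion-of-squares identity (Lemma~18 of \citet{wang2020exact}): for every $x$, $u$,
\[
x^\top Q x + u^\top R u \;=\; x^\top P x - (Ax+Bu)^\top P(Ax+Bu) + (u-Kx)^\top (R+B^\top P B)(u-Kx),
\]
which follows directly from the DARE \cref{eq:riccati} and the formula \cref{eq:ControllerK} for $K$. First I would introduce an auxiliary system with control $\tilde u_t = \Kh_t x_t$ and noise $\tilde\varepsilon_t = B\eta_t + \varepsilon_t$. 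Because $A x_t + B\tilde u_t + \tilde\varepsilon_t = A x_t + B(\Kh_t x_t + \eta_t) + \varepsilon_t$ coincides with the true update, the auxiliary system has the \emph{same} trajectory $\{x_t\}$ as \cref{alg:myAlg}; in particular $A x_t + B\tilde u_t = x_{t+1} - \tilde\varepsilon_t$ and $\tilde u_t - K x_t = (\Kh_t - K)x_t$.

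Applying the identity with $u = \tilde u_t$, summing over $t = 1,\dots,T$, telescoping $\sum_t (x_t^\top P x_t - x_{t+1}^\top P x_{t+1})$, and expanding $(x_{t+1}-\tilde\varepsilon_t)^\top P(x_{t+1}-\tilde\varepsilon_t)$ via $x_{t+1} = (A+B\Kh_t)x_t + \tilde\varepsilon_t$, I would write the auxiliary cost $\sum_t (x_t^\top Q x_t + \tilde u_t^\top R \tilde u_t)$ as a sum of four pieces: (i) $\sum_t \tilde\varepsilon_t^\top P\tilde\varepsilon_t$; (ii) $2\sum_t \tilde\varepsilon_t^\top P(A+B\Kh_t)x_t$; (iii) $\sum_t x_t^\top (\Kh_t-K)^\top (R+B^\top P B)(\Kh_t-K)x_t$; and (iv) the boundary term $x_1^\top P x_1 - x_{T+1}^\top P x_{T+1}$. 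The boundary term is $\tilde O_p(1) = o_p(\sqrt T)$ from the crude estimate $\E\|x_t\|^2 \lesssim \log^2 t$. Piece (ii) is a mean-zero martingale-difference sum, since $\tilde\varepsilon_t$ is independent of the history $\mathcal{F}_t$ that contains both $x_t$ and $\Kh_t$; its conditional second moment is $\lesssim \|x_t\|^2$ (using $\|\Kh_t\|\le C_K$), which is $O_p(1)$ on the high-probability stable event $E_\delta$ of \cref{lemma: favorable event} by \cref{lemma: xt norm bound}, so a Freedman/Azuma-type bound after localizing to $E_\delta$ gives $\bigOp{\sqrt T}$.

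I expect piece (iii) to be the main obstacle, and it is precisely where the polylog-free estimation rate of \cref{theorem: theta bound without log} is indispensable. Each summand is $\lesssim \|\Kh_t - K\|^2\|x_t\|^2$; on $E_\delta$, Cauchy--Schwarz combined with the fourth-moment bound $\E[t\|\Kh_t-K\|^4]\lesssim 1$ obtained along the way to \cref{theorem: theta bound without log} and with $\E[\|x_t\|^4\mathbf{1}_{E_\delta}]\lesssim 1$ from \cref{lemma: xt norm bound} makes the expectation of each summand $\lesssim t^{-1/2}$, so the whole sum has expectation $\lesssim \sqrt T$ and is $\bigOp{\sqrt T}$ by Markov after dropping $\mathbf{1}_{E_\delta}$ at cost $\delta$. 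A lingering $\mathrm{polylog}(T)$ factor in the estimation error (as in \cref{lemma: delta theta bound}) would be fatal here, since it would survive the sum; this is the one place the main theorem's improvement over the naive bound is truly used. Combining (i)--(iv) yields $\sum_t (x_t^\top Q x_t + \tilde u_t^\top R\tilde u_t) = \sum_t \tilde\varepsilon_t^\top P\tilde\varepsilon_t + \bigOp{\sqrt T}$.

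Finally, since \cref{alg:myAlg} and the auxiliary system share the trajectory and the state-cost, their total-cost difference is entirely in the control terms: with $u_t = \Kh_t x_t + \eta_t$ and $\tilde u_t = \Kh_t x_t$,
\[
\sum_{t=1}^{T}\big(u_t^\top R u_t - \tilde u_t^\top R\tilde u_t\big) = 2\sum_{t=1}^{T}(\Kh_t x_t)^\top R\eta_t + \sum_{t=1}^{T}\eta_t^\top R\eta_t,
\]
and the cross term is $o(T^{1/4}\log^{3/2}(T))\as$ by Eq.~(83) of \citet{wang2020exact} (a martingale bound using that $\eta_t$ has covariance of order $t^{-1/2}$). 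Adding this to the auxiliary cost gives the first display. The second display is the special case $\eta_t\equiv 0$, $\Kh_t\equiv K$: the construction reduces to the closed-loop system $x_{t+1}^* = (A+BK)x_t^* + \varepsilon_t$, piece (iii) and the control discrepancy vanish, $\tilde\varepsilon_t = \varepsilon_t$, and the martingale piece $2\sum_t \varepsilon_t^\top P(A+BK)x_t^*$ is $\bigOp{\sqrt T}$ because $A+BK$ is stable, so $\E\|x_t^*\|^2 = O(1)$; hence $\mathcal{J}(U^*,T) = \sum_t \varepsilon_t^\top P\varepsilon_t + \bigOp{\sqrt T}$.
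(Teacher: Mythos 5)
Your proposal is correct and follows essentially the same route as the paper: the same auxiliary system $(\tilde u_t,\tilde\varepsilon_t)$ sharing the trajectory, the same completion-of-squares/telescoping decomposition into the $\tilde\varepsilon_t^\top P\tilde\varepsilon_t$ sum, the martingale cross term, the $(\Kh_t-K)$-quadratic term, and the boundary term, with the same control-cost discrepancy handled via Eq.~(83) of \citet{wang2020exact} and the $U^*$ case as the degenerate specialization. The only cosmetic differences are that you bound the $(\Kh_t-K)$ term via Cauchy--Schwarz where the paper uses the AM--GM split $\norm{x_t}^2\norm{\Kh_t-K}^2\lesssim t^{1/2}\norm{\Kh_t-K}^4+t^{-1/2}\norm{x_t}^4$, and you invoke a Freedman/Azuma bound where the paper computes the variance directly and applies Chebyshev; both yield the same $\bigOp{\sqrt{T}}$ conclusions, and your observation that the polylog-free fourth-moment bound on $\Kh_t-K$ is the load-bearing ingredient is exactly right.
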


 We only prove the first equation because the second equation is a simplified version of the first equation (with $\eta_t = 0$ and $\Kh_t = K$). 

Recursively applying system equations $x_{t+1} = Ax_t + Bu_t + \varepsilon_t$ and $u_t = \Kh_t x_t + \eta_t$ we have:
\begin{equation}
    \label{eq: StateExpansion}
    x_{t} = \sum_{p=0}^{t-1}(A+B \Kh_{t-1})\cdots(A+B \Kh_{p+1})(B \eta_p+\varepsilon_p) +(A+B \Kh_{t-1})\cdots(A+B K_0)x_0.
\end{equation}

Notice that the state $x_t$ has the same expression as if the system had noise $\tilde{\varepsilon}_t = B\eta_t + \varepsilon_t$ and controller $\tilde{u}_t = \Kh_t x_t$. We wish to switch to the new system because there are some existing tools with controls in the form of $\tilde{u}_t = \Kh_t x_t$.

We are interested in the cost 

\[\mathcal{J}(U,T) = \sum_{t=1}^{T} x_t^\top Qx_t + u_t^\top Ru_t \quad \text{with $u_t = \Kh_t x_t + \eta_t$}.\]
%
We will first show in \cref{subsection: Cost of new system} the new system cost is
\begin{lemma}
\label{lemma: new sys cost}
\cref{alg:myAlg} applied to a system described by \cref{eq:system eq} under Assumption \ref{asm:InitialStableCondition} satisfies, 
\begin{equation*}
    \sum_{t=1}^{T} x_t^\top Qx_t + \tilde{u}_t^\top R\tilde{u}_t = \sum_{t=1}^{T} \tilde\varepsilon_{t}^\top P \tilde\varepsilon_{t} +
    \bigOp{T^{1/2}}
.\end{equation*}
\end{lemma}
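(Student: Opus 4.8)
The plan is to reduce the cost of the new system to a telescoping sum plus two lower‑order error sums by invoking the discrete algebraic Riccati equation, and then to bound each error sum by $\bigOp{T^{1/2}}$ using the estimation‑error bounds already established. \textbf{Step 1 (per‑step identity).} Since the new system obeys $x_{t+1}=(A+B\Kh_t)x_t+\tilde\varepsilon_t$ with $\tilde u_t=\Kh_t x_t$, I would start from the algebraic identity (this is the content of Lemma~18 of \citet{wang2020exact}, which follows from the DARE \cref{eq:riccati} together with the formula \cref{eq:ControllerK} for $K$): for every matrix $K'$,
\[
Q + K'^\top R K' = P - (A+BK')^\top P (A+BK') + (K'-K)^\top (R + B^\top P B)(K'-K).
\]
Applying this with $K'=\Kh_t$, sandwiching both sides by $x_t$, and then substituting $x_{t+1}^\top P x_{t+1} = x_t^\top(A+B\Kh_t)^\top P (A+B\Kh_t)x_t + 2\tilde\varepsilon_t^\top P(A+B\Kh_t)x_t + \tilde\varepsilon_t^\top P\tilde\varepsilon_t$ yields
\[
x_t^\top Q x_t + \tilde u_t^\top R \tilde u_t = x_t^\top P x_t - x_{t+1}^\top P x_{t+1} + 2\tilde\varepsilon_t^\top P (A+B\Kh_t)x_t + \tilde\varepsilon_t^\top P \tilde\varepsilon_t + x_t^\top (\Kh_t-K)^\top (R+B^\top P B)(\Kh_t-K) x_t.
\]
Summing over $t=1,\dots,T$ makes the first two terms telescope to $x_1^\top P x_1 - x_{T+1}^\top P x_{T+1}$; working on the favorable event $E_\delta$ of \cref{lemma: favorable event} and using \cref{lemma: xt norm bound} (for $t\gtrsim\log^2(1/\delta)$) together with \cref{eq: norm xt 4 bound} (for smaller $t$) shows both endpoints are $\logOO_p(1)$, after which transferring off $E_\delta$ costs only an extra $\delta$ in probability. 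This reproduces the displayed identity in the proof sketch, so it remains to bound the two error sums.

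\textbf{Step 2 (quadratic error sum).} For $\sum_{t=1}^T x_t^\top (\Kh_t-K)^\top(R+B^\top P B)(\Kh_t-K)x_t \lesssim \sum_{t=1}^T \norm{\Kh_t-K}^2\norm{x_t}^2$, I would again work on $E_\delta$ and apply Cauchy--Schwarz summand-wise: $\EE{\norm{\Kh_t-K}^2\norm{x_t}^2 1_{E_\delta}} \le \sqrt{\EE{\norm{\Kh_t-K}^4}}\,\sqrt{\EE{\norm{x_t}^4 1_{E_\delta}}}$. By the self-bounding consequence $\max_{s\ge T_0}\EE{s\,\norm{\Kh_s-K}^4}\lesssim 1$ established inside the proof of \cref{theorem: theta bound without log}, the first factor is $\lesssim t^{-1/2}$, and by \cref{lemma: xt norm bound} the second factor is $\lesssim 1$; hence the total expectation is $\lesssim \sum_{t=1}^T t^{-1/2}\lesssim T^{1/2}$, and Markov's inequality followed by dropping $1_{E_\delta}$ gives that this sum is $\bigOp{T^{1/2}}$. \textbf{Step 3 (cross/martingale sum).} Since $\tilde\varepsilon_t = B\eta_t+\varepsilon_t$ is independent of $\calF_t\supseteq\sigma(x_t,\Kh_t)$, the sequence $\{\tilde\varepsilon_t^\top P(A+B\Kh_t)x_t\}_t$ is a martingale-difference sequence whose conditional second moment equals $x_t^\top(A+B\Kh_t)^\top P\,\Var{\tilde\varepsilon_t}\,P(A+B\Kh_t)x_t \lesssim \norm{x_t}^2$, using the algorithm's cap $\norm{\Kh_t}\le C_K$. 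The sum of these conditional variances is $\bigOp{T}$ on $E_\delta$ (each summand has constant-order expectation by \cref{lemma: xt norm bound}), so the martingale itself is $\bigOp{T^{1/2}}$. Combining Steps 1--3 leaves exactly $\sum_{t=1}^T \tilde\varepsilon_t^\top P\tilde\varepsilon_t + \bigOp{T^{1/2}}$, which is the claim.

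The genuinely delicate part is the bookkeeping between the favorable event $E_\delta$, the expectation-form moment bounds on $\norm{\Kh_t-K}$ and $\norm{x_t}$ (which hold only on $E_\delta$), and the $\bigOp{\cdot}$ conclusions we actually want: one must apply Markov on $E_\delta$ and then uniformly pay an additional $\delta$ to remove the indicator for each of the finitely many sums. Quantitatively, the crucial point is Step~2 --- it is precisely here that the improved estimation rate $\norm{\Kh_t-K}=\bigOp{t^{-1/4}}$ (rather than $\bigOp{t^{-1/4}\,\mathrm{polylog}(t)}$) is indispensable, since any polylog factor there would propagate through $\sum_t t^{-1/2}\,\mathrm{polylog}(t)$ to a $\sqrt{T}\,\mathrm{polylog}(T)$ regret and defeat the purpose of the paper.
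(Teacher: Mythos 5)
Your proposal is correct and follows essentially the same route as the paper: the same telescoping-plus-DARE decomposition into the three sums, the same use of the self-bounding moment bound $\EE{t\,\norm{\Kh_t-K}^4}\lesssim 1$ together with $\EE{\norm{x_t}^4 1_{E_\delta}}\lesssim 1$ for the quadratic term, and the same martingale second-moment argument for the cross term. The only differences are cosmetic (Cauchy--Schwarz in place of the paper's AM--GM splitting $\norm{\Kh_t-K}^2\norm{x_t}^2\lesssim t^{1/2}\norm{\Kh_t-K}^4+t^{-1/2}\norm{x_t}^4$, and the algebraic identity is the paper's cited Lemma~25 of \citet{wang2020exact} rather than Lemma~18, which is used only for the telescoping endpoints).
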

and then prove in \cref{subsection: Cost difference induced by transformation} that the difference between the original cost and new cost is 
\begin{lemma}
\label{lemma: trans cost}
\cref{alg:myAlg} applied to a system described by \cref{eq:system eq} under Assumption \ref{asm:InitialStableCondition} satisfies, 
\begin{equation*}
    \sum_{t=1}^{T} u_t^\top  Ru_t -  \tilde{u}_t^\top R\tilde{u}_t  =
    \sum_{t=1}^T \eta_t^\top R \eta_t + 
    o\left(T^{1/4}\log^{\frac{3}{2}}(T) \right) \as
\end{equation*}
\end{lemma}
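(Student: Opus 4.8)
The states of the two systems being compared coincide (as observed just above the lemma), so only the control-cost terms differ, and the plan is to expand them directly. With $u_t=\Kh_t x_t+\eta_t$ and $\tilde u_t=\Kh_t x_t$,
\begin{equation*}
u_t^\top R u_t-\tilde u_t^\top R\tilde u_t=(\Kh_t x_t+\eta_t)^\top R(\Kh_t x_t+\eta_t)-(\Kh_t x_t)^\top R(\Kh_t x_t)=2(\Kh_t x_t)^\top R\eta_t+\eta_t^\top R\eta_t,
\end{equation*}
so that
\begin{equation*}
\sum_{t=1}^{T}\left(u_t^\top R u_t-\tilde u_t^\top R\tilde u_t\right)=2\sum_{t=1}^{T}(\Kh_t x_t)^\top R\eta_t+\sum_{t=1}^{T}\eta_t^\top R\eta_t .
\end{equation*}
The last sum is exactly the term appearing in the statement, so the lemma reduces to establishing
\begin{equation*}
\sum_{t=1}^{T}(\Kh_t x_t)^\top R\eta_t=o\!\left(T^{1/4}\log^{3/2}(T)\right)\as
\end{equation*}

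For this remaining bound I would cite Eq.~(83) of \citet{wang2020exact} directly: \cref{alg:myAlg} is verbatim the algorithm analyzed there, the process $S_T:=\sum_{t=1}^{T}(\Kh_t x_t)^\top R\eta_t$ is literally the same, and that reference proves exactly this rate. Combined with the two displays above, this finishes the proof.

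If a self-contained argument is preferred, the route is standard. Let $\calF_t$ collect the history through $x_t$ and $\Kh_t$; then $\eta_t\independent\calF_t$ with $\E[\eta_t\mid\calF_t]=0$, so $(S_T)_{T\ge 1}$ is a martingale whose increment has conditional variance $\sigma_\eta^2 t^{-1/2}(\Kh_t x_t)^\top R^2(\Kh_t x_t)\lesssim t^{-1/2}\norm{x_t}^2$, using $\norm{\Kh_t}\le C_K$ which \cref{alg:myAlg} enforces. Its predictable quadratic variation therefore obeys $\langle S\rangle_T\lesssim\sum_{t=1}^{T}t^{-1/2}\norm{x_t}^2$, which, since $\E\norm{x_t}^2\lesssim\log^2(t)$ by Eq.~(104) of \citet{wang2020exact} (and $\norm{x_t}$ is of constant order on the stable event of \cref{lemma: favorable event} via \cref{lemma: xt norm bound}), is of order at most $T^{1/2}\log^2(T)$. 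A martingale law-of-the-iterated-logarithm estimate --- applicable because the $\eta_t$ are Gaussian --- then yields $S_T=O\!\left(\langle S\rangle_T^{1/2}(\log\log T)^{1/2}\right)=O\!\left(T^{1/4}\log(T)(\log\log T)^{1/2}\right)=o\!\left(T^{1/4}\log^{3/2}(T)\right)\as$

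The only genuinely nontrivial step is this last one: upgrading the in-expectation control of $\norm{x_t}$ to a pathwise a.s.\ rate for $S_T$. The subtlety is that the $\norm{x_t}$-triggered reset in \cref{alg:myAlg} merely swaps in the stabilizing controller rather than bounding $\norm{x_t}$ along every trajectory, so one must run a Borel--Cantelli step over the rare excursions $\{\norm{x_t}\gtrsim C_x\log t\}$ (or invoke the bookkeeping already carried out in \citet{wang2020exact}). Everything else is elementary algebra and a textbook martingale estimate.
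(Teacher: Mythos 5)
Your proof is correct and matches the paper's own argument exactly: the same algebraic expansion of $u_t^\top R u_t - \tilde u_t^\top R \tilde u_t$ into the cross term plus $\eta_t^\top R\eta_t$, followed by the same citation of Eq.~(83) of \citet{wang2020exact} to dispose of $\sum_{t=1}^{T}(\Kh_t x_t)^\top R\eta_t$. The additional self-contained martingale sketch goes beyond what the paper provides but is not needed for equivalence.
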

%
Combining the above two equations, we conclude that
\begin{align*}
    \mathcal{J}(U,T)
    &= 
    \left[\sum_{t=1}^{T} x_t^\top Qx_t + u_t^\top Ru_t\right] \\
    &= \sum_{t=1}^{T} \tilde\varepsilon_{t}^\top P \tilde\varepsilon_{t} +
    \sum_{t=1}^{T} \eta_{t}^\top R \eta_{t} + \bigOp{T^{1/2}}
.\end{align*}
The optimal controller $U^*$ is a simplified version of $U$ from \cref{alg:myAlg} with $\eta_t = 0$ and $\Kh_t - K = 0$.  With the same proof we can show that
\begin{align*}
    \mathcal{J}(U^*,T)
    &= \sum_{t=1}^{T} \varepsilon_{t}^\top P \varepsilon_{t}  + \bigOp{T^{1/2}}.
    \end{align*}

\subsubsection{Cost of new system}
\label{subsection: Cost of new system}
\begin{lemma*}
\cref{alg:myAlg} applied to a system described by \cref{eq:system eq} under Assumption \ref{asm:InitialStableCondition} satisfies, 
\begin{equation*}
    \sum_{t=1}^{T} x_t^\top Qx_t + \tilde{u}_t^\top R\tilde{u}_t = \sum_{t=1}^{T} \tilde\varepsilon_{t}^\top P \tilde\varepsilon_{t} +
    \bigOp{T^{1/2}}
.\end{equation*}
\end{lemma*}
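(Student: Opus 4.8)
The plan is to isolate the target term $\sum_{t=1}^{T}\tilde\varepsilon_t^\top P\tilde\varepsilon_t$ from the new-system cost by an exact per-step identity, which leaves two error sums that I will argue are each $\bigOp{T^{1/2}}$: the first via the estimation-error rate of \cref{theorem: theta bound without log}, the second by a martingale argument carried out on the ``stable'' event $E_\delta$ of \cref{lemma: favorable event}.

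First I would record the exact decomposition. The DARE together with $K=-(R+B^\top PB)^{-1}B^\top PA$ gives $Q=P-A^\top PA+K^\top(R+B^\top PB)K$; substituting this into $x_t^\top Qx_t+\tilde u_t^\top R\tilde u_t$ with $\tilde u_t=\Kh_tx_t$, using $x_{t+1}=(A+B\Kh_t)x_t+\tilde\varepsilon_t$ with $\tilde\varepsilon_t=B\eta_t+\varepsilon_t$, and invoking the algebraic identity of Lemma 18 of \citet{wang2020exact} to collapse the cross terms in $\Kh_t$ into $(\Kh_t-K)^\top(R+B^\top PB)(\Kh_t-K)$ yields
\begin{align*}
\sum_{t=1}^{T}\left(x_t^\top Qx_t+\tilde u_t^\top R\tilde u_t\right)
&= \sum_{t=1}^{T}\Big[x_t^\top(\Kh_t-K)^\top(R+B^\top PB)(\Kh_t-K)x_t+2\tilde\varepsilon_t^\top P(A+B\Kh_t)x_t+\tilde\varepsilon_t^\top P\tilde\varepsilon_t\Big]\\
&\qquad+x_1^\top Px_1-x_{T+1}^\top Px_{T+1}.
\end{align*}
Since $\E\norm{x_t}^2\lesssim\log^2 t$ by Eq.~(104) of \citet{wang2020exact}, Markov's inequality makes the telescoping boundary $x_1^\top Px_1-x_{T+1}^\top Px_{T+1}$ equal to $\logO{1}$, so it remains to bound the two error sums.

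For the controller-error sum I would bound the summand by $C\norm{\Kh_t-K}^2\norm{x_t}^2$ and, fixing $\delta\in(0,1/2)$, work on $E_\delta$. By \cref{eq:E K estimation err ^4 bound} (proved inside \cref{theorem: theta bound without log}) we have $\E\norm{\Kh_t-K}^4\lesssim t^{-1}$, and $\E[\norm{x_t}^41_{E_\delta}]\lesssim 1$ for $t\gtrsim\log^2(1/\delta)$ by \cref{lemma: xt norm bound} (with $\E\norm{x_t}^4\lesssim\log^4 t$ from \cref{eq: norm xt 4 bound} handling the finitely many small $t$). Cauchy--Schwarz then gives $\E[\norm{\Kh_t-K}^2\norm{x_t}^21_{E_\delta}]\lesssim t^{-1/2}$, so the sum restricted to $E_\delta$ has expectation $\lesssim\sum_{t\le T}t^{-1/2}\lesssim T^{1/2}$; a Markov bound on $E_\delta$ combined with $\PP{E_\delta^c}\le\delta$ upgrades this to $\bigOp{T^{1/2}}$.

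The main obstacle is the cross sum $M_T:=\sum_{t=1}^{T}\tilde\varepsilon_t^\top P(A+B\Kh_t)x_t$. With $\calF_t:=\sigma(\varepsilon_0,\dots,\varepsilon_{t-1},\eta_0,\dots,\eta_{t-1})$, the vector $(A+B\Kh_t)x_t$ is $\calF_t$-measurable while $\tilde\varepsilon_t$ is mean-zero, independent of $\calF_t$, with covariance $\sigma_\varepsilon^2I+\sigma_\eta^2t^{-1/2}BB^\top\preceq CI$, so $M_T$ is a martingale with predictable quadratic variation $\lesssim\sum_{t\le T}\norm{x_t}^2$ (using $\norm{\Kh_t}\le C_K$). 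The unconditional bound $\E\norm{x_t}^2\lesssim\log^2 t$ would only give $M_T=\bigOp{T^{1/2}\log T}$, so the stable event is essential here: I would stop $M_T$ at $\sigma_\delta$, the first $t\gtrsim\log^2(1/\delta)$ with $\norm{\hat\Theta_t-\Theta}>\epsilon_0$ or $\norm{\Kh_t-K}>\epsilon_0$ (a stopping time with $\{\sigma_\delta=\infty\}\supseteq E_\delta$), since the recursion underlying \cref{lemma: xt norm bound} applies verbatim with $1_{E_\delta}$ replaced by $1_{\{t\le\sigma_\delta\}}$ and yields $\E[\norm{x_t}^21_{\{t\le\sigma_\delta\}}]\lesssim 1$ for large $t$. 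Then $\E[M_{T\wedge\sigma_\delta}^2]=\sum_{t\le T}\E[(\tilde\varepsilon_t^\top P(A+B\Kh_t)x_t)^2 1_{\{t\le\sigma_\delta\}}]\lesssim T$, so Doob's maximal inequality gives $M_{T\wedge\sigma_\delta}=\bigOp{T^{1/2}}$; since $M_T=M_{T\wedge\sigma_\delta}$ on $E_\delta$ and $\PP{E_\delta^c}\le\delta$, we conclude $M_T=\bigOp{T^{1/2}}$. Plugging the three estimates back into the decomposition gives $\sum_{t=1}^{T}\left(x_t^\top Qx_t+\tilde u_t^\top R\tilde u_t\right)=\sum_{t=1}^{T}\tilde\varepsilon_t^\top P\tilde\varepsilon_t+\bigOp{T^{1/2}}$, as claimed.
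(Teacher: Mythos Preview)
Your proposal is correct and follows essentially the same route as the paper: the identical telescoping decomposition via Lemma~18/Lemma~25 of \citet{wang2020exact}, the same $O_p(T^{1/2})$ bound on the controller-error sum using \cref{eq:E K estimation err ^4 bound} and \cref{lemma: xt norm bound}, and the same second-moment martingale argument for the cross term on the stable event. The one difference is that you localize via the stopping time $\sigma_\delta$ rather than multiplying the sum directly by $1_{E_\delta}$ as the paper does; your version is slightly more careful about measurability (since $E_\delta$ is not adapted), but the substance is the same.
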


\begin{proof}
Next we proceed as if our system was  $x_t$ with system noise $\tilde\varepsilon_t = B \eta_t+\varepsilon_t$ and controller $\tilde{u}_t = \Kh_t x_t$. 
The key idea of the following proof is from Appendix C of \citet{fazel2018global}.


We are interested in the cost

\[ \sum_{t=1}^{T} x_t^\top Qx_t + \tilde{u}_t^\top R\tilde{u}_t  \quad \text{with $\tilde{u}_t = \Kh_t x_t $},\]

which can be written as 
\begin{align}
\label{eq: regret decomposition}
    \sum_{t=1}^{T} x_t^\top Qx_t + \tilde{u}_t^\top R\tilde{u}_t 
    =& \sum_{t=1}^{T} x_t^\top Qx_t + (\Kh_t x_t)^\top R\Kh_t x_t \nonumber\\
    =& \sum_{t=1}^{T} x_t^\top (Q + \Kh_t^\top R \Kh_t)x_t  \nonumber\\
    =& \sum_{t=1}^{T} \left[x_t^\top (Q + \Kh_t^\top R \Kh_t)x_t + x_{t+1}^\top P x_{t+1} - x_{t}^\top P x_{t}
    \right] 
    + x_1^\top Px_1 - x_{T+1}^\top P x_{T+1} \nonumber\\
    =& \sum_{t=1}^{T} \left[x_t^\top (Q + \Kh_t^\top R \Kh_t)x_t + ((A+B\Kh_t)x_t +\tilde\varepsilon_{t})^\top P ((A+B\Kh_t)x_t +\tilde\varepsilon_{t}) - x_{t}^\top P x_{t}\right] \nonumber\\
    & \quad + x_1^\top Px_1 - x_{T+1}^\top P x_{T+1} \nonumber\\
    &\quad \text{(by Lemma 18 in \citet{wang2020exact}) } \nonumber\\
    =& \sum_{t=1}^{T} \Big[ x_t^\top (Q + \Kh_t^\top R \Kh_t)x_t + x_t^\top (A+B\Kh_t)^\top P (A+B\Kh_t)x_t  - x_{t}^\top P x_{t} \nonumber\\
    & \quad+ 2  \tilde\varepsilon_{t}^\top P (A+B\Kh_t)x_t + \tilde\varepsilon_{t}^\top P \tilde\varepsilon_{t}
    \Big]
    + \logOO_p(1). 
\end{align}
We constructed the specific form of the first term on purpose. The following lemma translates the first term into a quadratic term with respect to $\Kh_t - K$. We use the Lemma 25 from \citet{wang2020exact}:
\begin{lemma}[Lemma 25 from \citet{wang2020exact}]
\label{lem: useful lemma from fazel}
For any $\Kh$ with suitable dimension,
\begin{equation*}
\begin{split}
    &x^\top (Q + \Kh^\top R \Kh)x + x^\top (A+B\Kh)^\top P (A+B\Kh)x - x^\top P x \\
    & \qquad = x^\top (\Kh-K)^\top( R + B^\top P B) (\Kh-K)x
.\end{split}
\end{equation*}
\end{lemma}
As a result
\begin{equation*}
\begin{split}
    \sum_{t=1}^{T} x_t^\top Qx_t + \tilde{u}_t^\top R\tilde{u}_t 
    =& \sum_{t=1}^{T} \bigg[x_t^\top (\Kh_t-K)^\top( R + B^\top P B) (\Kh_t-K)x_t \\
    &\quad + 2  \tilde\varepsilon_{t}^\top P (A+B\Kh_t)x_t + \tilde\varepsilon_{t}^\top P \tilde\varepsilon_{t}\bigg]
    + \logOO_p(1)
.\end{split}
\end{equation*}

Now we have three terms, and we will estimate the order of each of these three terms.

\begin{enumerate}
    \item The first term we consider is $\sum_{t=1}^{T} x_t^\top (\Kh_t-K)^\top( R + B^\top P B) (\Kh_t-K)x_t$.
    For any $0 < \delta < 1/2$, 
    \begin{align*}
        &\EE{\sum_{t=1}^{T} x_t^\top (\Kh_t-K)^\top( R + B^\top P B) (\Kh_t-K)x_t 1_{E_{\delta}}} \\
        & \quad \le  
        \EE{\sum_{t=1}^{T} \norm{x_t}^2 \norm{\Kh_t-K}^2 \norm{ R + B^\top P B} 1_{E_{\delta}}} \\
        & \quad \lesssim
        \EE{\sum_{t=1}^{T} \norm{x_t}^2 \norm{\Kh_t-K}^21_{E_{\delta}} } \\
        & \quad \lesssim  
        \EE{\sum_{t=1}^{T} t^{1/2} \norm{\Kh_t-K}^4 + t^{-1/2} \norm{x_t}^4 1_{E_{\delta}}}\\
        & \quad \text{(By \cref{eq:E K estimation err ^4 bound} and the same inequalities as in \cref{eq: deltat eigenvalue bound})}\\
        & \quad \lesssim 
        \sum_{t=T_0}^{T} t^{-1/2} + \sum_{t=1}^{T_0-1} t^{1/2} (C_K + \norm{K})^4 + 
        \log^2(1/\delta)+ T^{1/2} 
        \\
        & \quad \lesssim 
        T^{1/2} + \log^2(1/\delta).
    \end{align*}
    Then for any $0 < \delta < 1/2$,  we have
    \begin{equation*}
        \PP{\sum_{t=1}^{T} x_t^\top (\Kh_t-K)^\top( R + B^\top P B) (\Kh_t-K)x_t 1_{E_{\delta}} \gtrsim \frac{1}{\delta} (T^{1/2} + \log^2(1/\delta))} \le \delta.
    \end{equation*}
    \begin{equation*}
        \PP{\sum_{t=1}^{T} x_t^\top (\Kh_t-K)^\top( R + B^\top P B) (\Kh_t-K)x_t  \gtrsim \frac{1}{\delta}(\log^2(1/\delta) + 1)T^{1/2}} \le 2\delta.
    \end{equation*}
    By big O in probability notation, this implies
    \begin{equation*}
        \sum_{t=1}^{T} x_t^\top (\Kh_t-K)^\top( R + B^\top P B) (\Kh_t-K)x_t  = \bigOp{T^{1/2}}.
    \end{equation*}

    \item The second term we consider is $\sum_{t=1}^{T} \tilde\varepsilon_{t}^\top P (A+B\Kh_t)x_t$. Notice that $\tilde\varepsilon_{t} = \varepsilon_{t} + B\eta_t \independent (A+B\Kh_t)x_t$. Then 
    \begin{equation*}
        \E \sum_{t=1}^{T} \tilde\varepsilon_{t}^\top P (A+B\Kh_t)x_t = 0
    .\end{equation*}
    \begin{align*}
        &\E (\sum_{t=1}^{T} \tilde\varepsilon_{t}^\top P (A+B\Kh_t)x_t 1_{E_\delta})^2  \\
        &\quad =\sum_{t=1}^{T} \E (\tilde\varepsilon_{t}^\top P (A+B\Kh_t)x_t)^21_{E_\delta} \\
        &\quad \le  \sum_{t=1}^{T} \EE{ \norm{\tilde\varepsilon_{t}}^2 \norm{P}^2 \norm{(A+B\Kh_t)}^2 \norm{x_t}^2 1_{E_\delta}}\\
        &\quad \text{ ($\norm{\Kh_t} \le C_K$ based on \cref{alg:myAlg} design)}  \\
        &\quad \le \sum_{t=1}^{T}  \norm{P}^2 (\norm{A} + \norm{B} C_K)^2 \E \norm{\tilde\varepsilon_{t}}^2 \EE{ \norm{x_t}^21_{E_\delta}} \\
        &\quad \lesssim \sum_{t=1}^{T}\EE{ \norm{x_t}^21_{E_\delta}} \\
        & \quad \text{(By the inequalities in \cref{eq: deltat eigenvalue bound})} \\
        &\quad \lesssim T + \log^2(1/\delta) \log\log(1/\delta) \\
        &\quad \lesssim T + \log^3(1/\delta)
    .
    \end{align*}
    Then for any $0 < \delta < 1/2$,  we have
    \begin{equation*}
        \PP{\sum_{t=1}^{T} \tilde\varepsilon_{t}^\top P (A+B\Kh_t)x_t 1_{E_\delta} \gtrsim \sqrt{\frac{1}{\delta} (T + \log^3(1/\delta)) }} \le \delta
    \end{equation*}
    and
    \begin{equation*}
        \PP{\sum_{t=1}^{T} \tilde\varepsilon_{t}^\top P (A+B\Kh_t)x_t  \gtrsim \sqrt{\frac{1}{\delta} (1 + \log^3(1/\delta)) T }} \le 2\delta.
    \end{equation*}
    By big O in probability notation, this implies
    \begin{equation}
    \label{eq: regret 2nd item}
        \sum_{t=1}^{T} \tilde\varepsilon_{t}^\top P (A+B\Kh_t)x_t = \bigOp{T^{1/2}}
    .\end{equation}
    
    \item The third term is $\sum_{t=1}^{T} \tilde\varepsilon_{t}^\top P \tilde\varepsilon_{t}$ and we leave that in the equation.
    
\end{enumerate}
Summing up the three parts we have:
\begin{equation*}
    \sum_{t=1}^{T} x_t^\top Qx_t + \tilde{u}_t^\top R\tilde{u}_t = \sum_{t=1}^{T} \tilde\varepsilon_{t}^\top P \tilde\varepsilon_{t} +
    \bigOp{T^{1/2}}
.\end{equation*}
\end{proof}

\subsubsection{Cost difference induced by transformation}
\label{subsection: Cost difference induced by transformation}
\begin{lemma*}
\cref{alg:myAlg} applied to a system described by \cref{eq:system eq} under Assumption \ref{asm:InitialStableCondition} satisfies, 
\begin{equation*}
    \sum_{t=1}^{T} u_t^\top  Ru_t -  \tilde{u}_t^\top R\tilde{u}_t  =
    \sum_{t=1}^T \eta_t^\top R \eta_t + 
    o\left(T^{1/4}\log^{\frac{3}{2}}(T) \right) \as
\end{equation*}
\end{lemma*}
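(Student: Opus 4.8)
The plan is to reduce the claim to a single martingale cross term and then invoke an existing almost-sure bound. Since \cref{alg:myAlg} uses $u_t = \Kh_t x_t + \eta_t$ while the transformed system uses $\tilde u_t = \Kh_t x_t$, I would first expand termwise, using the symmetry of $R$,
\[
u_t^\top R u_t - \tilde u_t^\top R \tilde u_t
= (\Kh_t x_t + \eta_t)^\top R (\Kh_t x_t + \eta_t) - (\Kh_t x_t)^\top R (\Kh_t x_t)
= 2 (\Kh_t x_t)^\top R \eta_t + \eta_t^\top R \eta_t,
\]
and then sum over $t = 1,\dots,T$ to obtain
\[
\sum_{t=1}^{T} \big(u_t^\top R u_t - \tilde u_t^\top R \tilde u_t\big) = 2\sum_{t=1}^{T} (\Kh_t x_t)^\top R \eta_t + \sum_{t=1}^{T} \eta_t^\top R \eta_t.
\]
This already isolates the stated leading term $\sum_{t=1}^{T}\eta_t^\top R\eta_t$, so everything reduces to showing that the cross term satisfies $\sum_{t=1}^{T} (\Kh_t x_t)^\top R \eta_t = o\big(T^{1/4}\log^{3/2}(T)\big)$ almost surely.

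For that cross term, the key structural fact is that $\Kh_t$ and $x_t$ are measurable with respect to the history $\calF_t$ up to time $t$, whereas $\eta_t \sim \calN(0,\sigma_\eta^2 t^{-1/2} I_n)$ is independent of $\calF_t$. Hence $\{(\Kh_t x_t)^\top R\eta_t\}_{t\ge1}$ is a martingale difference sequence, and its conditional variance given $\calF_t$ is of order $t^{-1/2}\norm{\Kh_t x_t}^2 \lesssim t^{-1/2}\norm{x_t}^2$, using that $\norm{\Kh_t}\le C_K$ is enforced by \cref{alg:myAlg}. Combining this with the moment control on $\norm{x_t}$ available in the excerpt (the $\E\norm{x_t}^2\lesssim\log^2(t)$ bound from Eq.~(104) of \citet{wang2020exact}, together with the $\bigOp{1}$ bounds on the favorable event $E_\delta$ from \cref{lemma: xt norm bound}), a strong-law-for-martingales / Kronecker-lemma argument yields the claimed $o\big(T^{1/4}\log^{3/2}(T)\big)$ rate for the partial sums. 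This is precisely the content of Eq.~(83) of \citet{wang2020exact}, which applies here verbatim because we run the identical algorithm, so I would simply cite it. Plugging that bound back into the displayed identity completes the proof.

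The only delicate point — and the main (though mild) obstacle — is that $\Kh_t x_t$ is not uniformly bounded but only controlled in an $L^2$ / high-probability sense, so the almost-sure rate cannot be read off from a naive variance computation and instead requires the Chow-type martingale convergence argument underpinning Eq.~(83) of \citet{wang2020exact}. Since that argument is already established for exactly this setting, the work here is essentially the one-line algebraic reduction above followed by the citation, and no new estimates are needed.
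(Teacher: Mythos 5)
Your proposal is correct and follows essentially the same route as the paper: the identical termwise expansion $u_t^\top R u_t - \tilde u_t^\top R \tilde u_t = 2(\Kh_t x_t)^\top R\eta_t + \eta_t^\top R\eta_t$, followed by citing Eq.~(83) of \citet{wang2020exact} to bound the cross term by $o\left(T^{1/4}\log^{3/2}(T)\right)$ almost surely. The additional discussion of the martingale structure behind that citation is accurate but not needed beyond what the paper itself does.
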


\begin{proof}
The difference is expressed as
\begin{equation*}
\begin{split}
        \sum_{t=1}^{T} u_t^\top  Ru_t -  \tilde{u}_t^\top R\tilde{u}_t     
    =&  \sum_{t=1}^{T}(\Kh_t x_t + \eta_t)^\top  R (\Kh_t x_t + \eta_t)   - \sum_{t=1}^{T} (\Kh_t x_t)^\top  R (\Kh_t x_t) \\
    =& 2\sum_{t=1}^{T} (\Kh_t x_t)^\top  R \eta_t +  \sum_{t=1}^{T} \eta_t^\top R \eta_t 
.\end{split}
\end{equation*}
Eq.~(83) of \citet{wang2020exact} shows that
\begin{equation*}
    \sum_{t=1}^{T} (\Kh_t x_t)^\top  R \eta_t = o\left(T^{1/4}\log^{\frac{3}{2}}(T) \right) \as
\end{equation*}

As a conclusion, 
\begin{equation*}
    \sum_{t=1}^{T} u_t^\top  Ru_t -  \tilde{u}_t^\top R\tilde{u}_t  =
    \sum_{t=1}^T \eta_t^\top R \eta_t + 
    o\left(T^{1/4}\log^{\frac{3}{2}}(T) \right) \as
\end{equation*}
\end{proof}

\end{document}